\documentclass[final,nopreprintline]{elsarticle}

\usepackage{hyperref} 
\usepackage{amssymb}
\usepackage{amsmath}
\usepackage{amsthm, ulem}
\usepackage{tabularx}
\usepackage{mathrsfs}

\newtheorem{theorem}{Theorem}[section]
\newtheorem{lemma}[theorem]{Lemma}

\newtheorem{definition}{Definition}[section]
\newtheorem{remark}{Remark}[section]
\newtheorem{example}{Example}[section]
\usepackage{geometry}
\usepackage{color}
\journal{Arxiv}
\allowdisplaybreaks

\begin{document}

\begin{frontmatter}

\title{From Score Learning to Discretized Sampling: An End-to-End Generalization  Analysis of Diffusion Models} 
% Deep Limit Convergence from Deep Neural Networks with Dense Layer Connectivities to Integral Equations
\tnotetext[mytitlenote]{huangjsh@mail.nankai.edu.cn; ymjiangnk@nankai.edu.cn; wucl@nankai.edu.cn.}

\author{Jinshu Huang$^{1}$, Yiming Jiang$^{1}$ and Chunlin Wu$^{1,*}$}

\address{$^{1}$ School of Mathematical Sciences, Nankai University, Tianjin, China}

%\ead{\mailto{2120190037@mail.nankai.edu.cn}, \mailto{xtai@norceresearch.no}, \mailto{wucl@nankai.edu.cn}}

%% Group authors per affiliation:
%\author{Jinshu Huang\fnref{myfootnote}}
%\address{Nankai university, Tianjin}
%\fntext[myfootnote]{Since 1880.}

%% or include affiliations in footnotes:
%\author[mymainaddress,mysecondaryaddress]{Elsevier Inc}
%\ead[url]{www.elsevier.com}
%
%\author[mysecondaryaddress]{Global Customer Service\corref{mycorrespondingauthor}}
%\cortext[mycorrespondingauthor]{Corresponding author}
%\ead{support@elsevier.com}
%
%\address[mymainaddress]{1600 John F Kennedy Boulevard, Philadelphia}
%\address[mysecondaryaddress]{360 Park Avenue South, New York}

\begin{abstract}
Despite the empirical success of score-based diffusion models, a complete theoretical understanding of how finite-sample learning, network parameterization, and numerical discretization jointly dictate generative quality remains underdeveloped. Existing sampling analyses often evaluate the generative performance conditional on an oracle score or a pre-specified error threshold. In this work, we establish a unified convergence and generalization framework for score-based diffusion models parameterized by practical ResNet-type architectures. 
We analyze the generalization and convergence properties from the practical finite-sample, discrete-time learning problem of the score function to the ideal continuous-time, population-level objective. 
Based on the generalization result of the learning problem of score function, we analyze the sampling process induced by the learned score function and provide an end-to-end total variation distance estimate for the generated terminal distribution. This estimate explicitly decomposes the overall generative error into four interpretable components: the truncation error of the forward process, the reverse-time discretization error, the generalization error incorporating both finite data and forward-time discretization, and the training optimization gap. Our results quantitatively characterize how the training sample size, temporal discretization grids, and optimization accuracy jointly control the final fidelity of samples generated by diffusion models.

\end{abstract}

\begin{keyword}
	Deep learning \sep score-based diffusion model \sep stochastic differential equation \sep generalization \sep convergence
	
\MSC[2020] 65R20 \sep 68T07 \sep 37N99 \sep 45G10  
\end{keyword}

\end{frontmatter}

%\linenumbers

\section{Introduction}
\label{sec: 1}
Diffusion models (DMs) are a class of generative models that synthesize high-quality data samples (e.g. images) from random noise. Owing to their strong empirical performance, DMs have been widely adopted in both research and practical applications \cite{yang2023comprehensivesurvey, chen2024overview, huang2025survey}, and have already been incorporated into large-scale systems such as DALL$\cdot$E \cite{ramesh2022hierarchical}, Imagen \cite{saharia2022photorealistic}, Stable Diffusion\cite{rombach2022high} and Sora \cite{openai2024sora}.

Informally, a diffusion model consists of two time-reversed processes:
\begin{itemize}
	\item  {Forward process}. A clean data sample is gradually perturbed by noise until it approaches an isotropic Gaussian distribution. This process is typically used to construct training pairs.
	
	\item {Backward process}. The generation task is to learn and simulate the reverse-time dynamics that transform noise back into realistic data samples. The central challenge is to estimate the unknown reverse drift from finite data.
\end{itemize}
Early works \cite{song2019generative, ho2020denoising, song2021denoising} described diffusion models in a discrete-time Markovian framework. More recently, the stochastic differential equation (SDE) viewpoint has provided a natural and unified continuous-time formulation for both analysis and algorithm design \cite{song2021maximum, song2021scorebased, tang2025score}. In this paper, we work within the continuous-time score-based SDE framework.

Despite the remarkable empirical success of diffusion models, their theoretical foundations remain incomplete. In particular, it is still not fully understood how finite-sample learning, model parametrization, and numerical discretization jointly affect the final quality of generated samples. Existing sampling analyses \cite{lee2022convergence, de2022convergence, chen2023sampling, chen2023improved, lee2023convergence, tang2024contractive} typically assume access to an accurate or even exact score function, and therefore focus mainly on the discretization and stability properties of the reverse process. These results provide important convergence guarantees, but they do not fully capture the effect of learning the score from finite data.

A number of recent works have begun to address the connection between learning and sampling in diffusion models. For example, the work \cite{block2020generative} derives sample complexity bounds for score estimation and relates score approximation to sampling accuracy, while \cite{chen2023score} studies distribution estimation when the data lie on an unknown low-dimensional linear subspace. Li et al. \cite{li2023generalization} further connect training dynamics and generalization to the Kullback-Leibler divergence between generated and true distributions, but their analysis is limited to linear networks and convex optimization settings. More broadly, existing results do not yet provide a single framework that simultaneously captures finite-sample generalization, time discretization, optimization accuracy, and the temporal regularity of the SDE coefficients.
In this work, we develop a unified convergence and generalization theory for score-based diffusion models from both the learning and sampling perspectives. Our analysis is built on a learnable score function parameterized by practical ResNet-type architectures, and it explicitly propagates the learning error into the sampling error. The main contributions are summarized as follows.
\begin{itemize}
	\item We investigate the learning problem of the score function under both the practical discrete-time empirical risk and the idealized continuous-time population risk. We derive a uniform generalization bound for the learned score functions and establish the consistency of the optimal values and the subsequential convergence of the corresponding optimal solutions.
	
	\item We analyze the reverse-time sampling process induced by the learned empirical score function and its Euler--Maruyama discretization, deriving an total variation bound for the generated terminal distribution. This result yields a precise error decomposition that partitions the total generative discrepancy into four fundamental sources: the truncation error of the forward process, the numerical discretization error of the reverse-time sampler, the statistical generalization error stemming from both finite training data and the discrete-time approximation of the forward process, and the optimization error.
\end{itemize}
These results propagate finite-sample learning guarantees through the sampling analysis and thus remove the common oracle assumption on the score. They provide a data-dependent and quantitative characterization of how the training sample size, the generalization behavior of the learned score network, the time discretization, and the temporal smoothness of the SDE coefficients jointly determine the fidelity of samples generated by diffusion models.

The remainder of this paper is organized as follows. Section~\ref{sec: related work} reviews the relevant literature. Section~\ref{Sec: 2} introduces score-based diffusion models and formalizes their corresponding learning problems. Section~\ref{Sec: 3} delivers our main theoretical contributions, including the consistency and generalization properties of the learning problems of score function, as well as establishing the end-to-end generative error estimation. Section~\ref{Sec: 4} provides the detailed mathematical proofs for our main results. Finally, Section~\ref{sec: conclusion} concludes the paper.
% Section~\ref{Sec: 5} reports numerical experiments illustrating the theoretical findings. 

\textbf{Notations}: Below we collect a few notations that will be used throughout. Let $\mathcal{N}$ and $\mathcal{R}$ denote the set of natural and real numbers, respectively. 
Scalars are denoted with italic letters (e.g. $N \in \mathcal{N}$), vectors and matrices are denoted with lowercase and capital in straight letters (e.g., $\mathrm{c} \in \mathbb{R}^N$ and $\mathrm{W} \in \mathbb{R}^{N \times N}$). Vector- and matrix-valued functions of time variable $t$ are written in boldface, like $\bold{c}(\cdot)$ and $\bold{W}(\cdot)$.
For a continuous function $f \in \mathcal{C}([0,T], \mathcal{R})$, we denote by $\omega_f(\cdot)$ its modulus of continuity on $[0,T]$.
For a Lipschitz continuous function	$g$, its Lipschitz constant is denoted by $\mathrm{Lip}_g$.
{For a Lipschitz continuous function $f$, its Lipschitz constant is denoted by $\mathrm{Lip}_f$.}
The $l^2$ norms for vectors or matrices in Euclidean space are denoted as $\|\cdot\|_2$. 
Let $\mathcal{X}$ be a vector space. The notation $\mathcal{X}^N$ repents the Cartesian product space of $\underbrace{\mathcal{X} \times \ldots \times \mathcal{X}}_{N}$. The space of functions that are continuous on $\Omega \subset \mathbb{R}^d$ is denoted by $\mathcal{C}(\Omega)$. 

\section{Related Works}
\label{sec: related work}

While the theoretical literature on diffusion models is vast and rapidly evolving, our work primarily intersects with two active areas of research: the numerical analysis of the generative sampling process and the statistical theory of score estimation. 

\subsection{Sampling Theory for Diffusion Models}
A major focus in the existing literature is understanding the convergence properties of the reverse-time SDE as a generative mechanism. To establish these theoretical guarantees, early foundational works (e.g., \cite{lee2022convergence}) typically relied on the premise that the score function is well-approximated, deriving convergence rates in Wasserstein or total variation distances under strict data smoothness or log-concavity assumptions. 
Subsequent studies have significantly refined these bounds by relaxing data regularity requirements and improving path-measure techniques. For instance, De Bortoli~\cite{de2022convergence} extended convergence guarantees to distributions supported on low-dimensional manifolds, while Chen et al.~\cite{chen2023improved} established sampling bounds under minimal smoothness assumptions via refined KL divergence analysis. Chen et al.~\cite{chen2023sampling} quantified the precise dependency of the total variation (TV) distance on the $L^2$ score estimation error, showing that generative convergence is tractable provided the score error satisfies a specific threshold. Other works have further investigated improved rates and contractivity under various structural conditions \cite{lee2023convergence, tang2024contractive}. While these works rigorously justify the numerical SDE solvers, their sampling guarantees remain inherently conditional on achieving a pre-specified score estimation accuracy.

\subsection{Score Learning and Generalization} 
Another  line of research investigates how accurately the score function can be learned from empirical data. To connect this statistical estimation with the generative process, early work by Block et al.~\cite{block2020generative} provided sample complexity bounds that bridge score approximation with sampling performance.  Further studies have explored score matching under specific geometric priors or support constraints; for instance, Chen et al.~\cite{chen2023score} investigated score learning on unknown low-dimensional linear subspaces, while Pidstrigach~\cite{pidstrigach2022score} established conditions under which diffusion models robustly sample from an underlying data manifold and studied its implications for data memorization.  
To investigate parameterized score functions, Li et al.~\cite{li2023generalization} studied the evolution of the generalization gap along training dynamics, establishing bounds that explicitly depend on the training sample size and the network width under linear models. While their analysis elegantly demonstrates how early-stopped training can evade the curse of dimensionality, it remains restricted to linear architectures and focuses heavily on the network's capacity without factoring in the crucial numerical time-discretization errors inherent in practical SDE deployment. %In contrast, our work accommodates practical nonlinear deep networks (specifically ResNet-type architectures) within a realistic diffusion training framework. %Crucially, instead of isolating the generalization gap to network width, we jointly characterize how the final generative performance is co-constrained by both the training sample size and the number of time-discretization steps, providing a comprehensive, end-to-end convergence theory that closely aligns with practical diffusion implementations.

\subsection{Consistency and Discretization of Learning Problems}
Training a score-based DM requires solving a family of continuous-time, finite-sample stochastic optimization problems. The well-posedness and asymptotic consistency of empirical risk minimizers are core topics in classical statistical learning, typically analyzed via M-estimation theory \cite{van2000asymptotic} or variational limits such as $\Gamma$-convergence \cite{bo2022large, thorpe2018deep, huang2024on}.
In Section~\ref{Sec: 3.2}, we establish the existence and subsequential convergence of minimizers across the population, continuous-empirical, and fully discretized regimes. Crucially, while classical M-estimation typically abstracts away numerical grids, our framework explicitly characterizes how the temporal moduli of continuity of the SDE coefficients interact with statistical estimation errors when transitioning from continuous-time loss functions to their discrete-time training counterparts.

%\subsection{Position of This Work}
%The primary contribution of our work is integrating these traditionally isolated facets into a single, cohesive framework. Rather than assuming an oracle score, we explicitly decompose the end-to-end sampling discrepancy into four components: prior mismatch, discretization error, statistical generalization, and the optimization gap. This provides a clear, quantitative characterization of how training data volume, temporal discretization grids, and network optimization jointly dictate generative fidelity.

\section{Score-based diffusion models}
\label{Sec: 2}
Score-based diffusion models (SDMs) are a class of deep generative models that learn to approximate the score functions and use these estimates to transform noise into realistic samples. In this section, we review their formulation through the lens of SDEs.

\subsection{Forward SDE and its time reversal}
Let $p_{\rm data}$ denote the target data distribution on $\mathbb{R}^d$ and fix a time horizon $T>0$.  Consider the forward (noising) SDE
where \begin{equation}
	\mathrm{d} \bold{x}_t = \bold{f}(t, \bold{x}_t) \mathrm{d} t+ \boldsymbol{g}(t) \mathrm{d} \bold{B}_t, \quad \bold{x}_0 = \mathrm{x} \sim p_{\rm data},
	\label{equation: forward-time SDE}
\end{equation}
where $\bold{f}: \mathbb{R}_{+} \times \mathbb{R}^d \mapsto \mathbb{R}^d$ is a time-dependent drift function, $\boldsymbol{g}: \mathbb{R}_{+} \mapsto \mathbb{R}$ is a scalar-valued diffusion coefficient, and $\{\textbf{B}_t\}_{t \ge 0}$ is a $d$-dimension Wiener process. Under standard Lipschitz/growth conditions, the SDE \eqref{equation: forward-time SDE} is well-posed (see, e.g., \cite{le2016brownian,oksendal2013sde}). 
We write $\boldsymbol{p}(t,\cdot)$ for the law of $\textbf{x}_t$  (so that $\boldsymbol{p}(0,\cdot) = p_{\rm data}$), $\boldsymbol{p}(t,\bold{x}_t \mid \bold{x}_s)$ for the forward transition density from time $s$ to $t$, and $\boldsymbol{p}(t,\bold{x}_t \mid \mathrm{x})$ for the conditional density given the initialization $\bold{x}_0=\mathrm{x}$.
%(Here, $\boldsymbol{p}(t, \bold{x}_t) = \int p(s,\bold{x}_s) \boldsymbol{p}(|s, \textbf{x}_t|\textbf{x}_s) \mathrm{d} \bold{x}_s$, $\boldsymbol{p}(t, \textbf{x}_t|\textbf{x}_s)$ describes the probability density of the state $\textbf{x}_t$ at time $t$ given that at some earlier time $s$ the state was	$\textbf{x}_s$.})

To generate data in $\boldsymbol{p}(0,\cdot) = p_{\rm data}$, we can start with $\boldsymbol{p}(T,\cdot)$ and run the process $\bold{x}$ backward for time $T$. More precisely, consider the time reversal $\bar{\bold{x}}_t = \bold{x}_{T-t}$ for $t \in [0,T]$. The reverse-time SDE corresponding to \eqref{equation: forward-time SDE} reads 
\begin{equation}
\mathrm{d} \bar{\bold{x}}_t \! = \! \left[- \bold{f}(T-t, \bar{\bold{x}}_t) \! + \!  \boldsymbol{g}^2(T-t) \nabla \ln \boldsymbol{p}(T-t, \bar{\bold{x}}_t)\right] \mathrm{d} t \! +\! \boldsymbol{g}(T-t) \mathrm{d} {\bold{B}}_t , \ \bar{\bold{x}}_0 = \bar{\mathrm{x}} \sim \boldsymbol{p}(T, \cdot),
\label{equation: reverse-time SDE}
\end{equation} 
where $\nabla \ln \boldsymbol{p}$ denotes the gradient of $\ln \boldsymbol{p}$ with respect to $\bold{x}$.  
%%%%%%%%%%%%%%%%%%%%%%%%%%%%%%%%%%%%%%%%%%%%%%%%%%%%%%%%%%%%%%%%%%%%%%
It has been shown in \cite{anderson1982reverse, tang2025score} that the reverse-time SDE \eqref{equation: reverse-time SDE} has the same marginal distributions as the forward SDE \eqref{equation: forward-time SDE} under suitable conditions on $\bold{f}, \boldsymbol{g}$ and $\boldsymbol{p}(t,\cdot)$. 
%%%%%%%%%%%%%%%%%%%%%%%%%%%%%%%%%%%%%%%%%%%%%%%%%%%%%%%%%%%%%%%%%%%%%%%%%%%%%%%%%%%%%%%%%%%%%%%%%%%%%%%

\begin{example}[Ornstein–Uhlenbeck–type (OU-type) forward process]
%OU-type 过程是目前许多常用的生成模型的一类SDE建模
A commonly used forward SDE is the OU-type process
\begin{equation}
	\mathrm{d}\bold{x}_t = \boldsymbol{f}(t)\,\bold{x}_t\,\mathrm{d}t + \boldsymbol{g}(t)\,\mathrm{d}\bold{B}_t,\quad \bold{x}_0 = \mathrm{x} \sim p_{\rm data}.
	\label{equation: OU-process}
\end{equation}
A number of widely used forward noising processes in generative modeling fall within this OU-type formulation.
For instance, the variance-preserving (VP) SDE underlying DDPM \cite{ho2020denoising}, the variance-exploding (VE) SDE used in score matching with Langevin dynamics (SMLD) \cite{song2019generative}, and the contractive diffusion probabilistic models (CDPM) \cite{tang2024contractive} are all special cases obtained by suitable choices of the coefficients $\boldsymbol{f}$ and $\boldsymbol{g}$.
\end{example}

\subsection{Score-based diffusion models}
In generative AI applications, the data distribution \(p_{\rm data}\) and the associated densities \(\boldsymbol{p}(t,\cdot)\) are analytically intractable due to the high dimensionality and structural complexity of real-world data. Consequently, the score function $\nabla \ln \boldsymbol{p}(t,\cdot)$ appearing in the reverse SDE \eqref{equation: reverse-time SDE} cannot be evaluated in closed form. 
In practice this score is approximated by a parametrized neural network $\bold{s}_{\Theta}$, and the learned score is substituted into the reverse dynamics to obtain the \emph{generative} SDE
\begin{equation}  
\mathrm{d} \hat{\bold{x}}_t  =\left[- \bold{f}(T-t, \hat{\bold{x}}_t)+ \boldsymbol{g}^2(T-t) \bold{s}_{\Theta}(T-t, \hat{\bold{x}}_t)\right] \mathrm{d} t + \boldsymbol{g}(T-t) \mathrm{d} {\bold{B}}_t.  
\label{equation: reverse-time generative SDE}  
\end{equation}  

A canonical objective to fit $\boldsymbol{s}_\Theta$ is the \emph{explicit score matching} (ESM) risk
\begin{equation}
\begin{aligned}
	\ell^{\rm ESM}(\Theta) %& = \mathbb{E}_{t \sim \mathcal{U}(0,T)} \left\lbrace \pmb{\lambda}(t) \mathbb{E}_{\bold{x}_t \sim \boldsymbol{p}(t, \cdot)} \left[  \|\bold{s}_{\Theta}(t, \bold{x}_t) - \nabla \ln \boldsymbol{p}(t, \bold{x}_t)\|^2_2 \right] \right\rbrace     \\
	& = \frac{1}{T}\int_{0}^{T} \pmb{\lambda}(t) \mathbb{E}_{\bold{x}_t \sim \boldsymbol{p}(t, \cdot)}  \big[\left\| \bold{s}_{\Theta}(t, \bold{x}_t) - \nabla \ln \boldsymbol{p}(t, \bold{x}_t) \right\|_2^2\big] \mathrm{d}t, 
	%	& \sout{= \frac{1}{T}\int_{0}^{T} \int_{\bold{x} \in \mathbb{R}^d} \pmb{\lambda}(t) \left\| \bold{s}_{\Theta}(t, \bold{x}) - \nabla \ln \boldsymbol{p}(t, \bold{x}) \right\|_2^2 \boldsymbol{p}(t, \bold{x}) \mathrm{d} \bold{x} \mathrm{d}t},
\end{aligned}
\label{loss: ESM}
\end{equation}
where $\pmb{\lambda}(t) \ge 0$ is a weighting function. %(\ref{loss: ESM}) is also known as the explicit score matching (ESM) objective.
Because the score $\nabla \ln \boldsymbol{p}(t,\bold{x}_t)$ cannot be evaluated in closed form, the explicit score-matching objective \eqref{loss: ESM} is replaced by the \emph{denoising score matching} (DSM) objective
\begin{equation}
\ell^{\rm DSM}(\Theta)= %\mathbb{E}_{t \sim \mathcal{U}(0,T)} \left\lbrace  \pmb{\lambda}(t) \mathbb{E}_{\mathrm{x} \sim p_{\rm data}} \! \left[\mathbb{E}_{\bold{x}_t \sim \boldsymbol{p}(t, \cdot|\mathrm{x} )}\!\left[\left\|\bold{s}_{\Theta}(t, {\bold{x}_t}) -\nabla \ln \boldsymbol{p}(t, \bold{x}_t|\mathrm{x})\right\|_2^2\right]\right] \right\rbrace , \\
\frac{1}{T}\int_{0}^{T}   \pmb{\lambda}(t) \mathbb{E}_{\mathrm{x} \sim p_{\rm data}} \! \left[\mathbb{E}_{\bold{x}_t \sim \boldsymbol{p}(t, \cdot|\mathrm{x} )}\!\left[\left\|\bold{s}_{\Theta}(t, {\bold{x}_t}) -\nabla \ln \boldsymbol{p}(t, \bold{x}_t|\mathrm{x})\right\|_2^2\right]\right]  \mathrm{d}t,
\label{loss: continuous-infinite-score-loss-function}
\end{equation}
where the gradient $\nabla \ln \boldsymbol{p}(t, \bold{x}_t|\mathrm{x})$ is with respect to $\bold{x}_t$.
This replacement follows from the denoising score-matching identity established in \cite{vincent2011connection} and forms the basis of diffusion-model training \cite{song2019generative, song2021scorebased}.
It is proved that (\ref{loss: continuous-infinite-score-loss-function}) is equivalent to (\ref{loss: ESM}) up to a constant independent of $\Theta$ \cite{vincent2011connection}. 
%%%%%%%%%%%%%%%%%%%%%%%%%%%%%%%%%%%%%%%%%%%%%
%%%%%%%%%%%%%%%%%%%%%%%%%%%%%%%%%%%%%%%%%%%%%
%Considering $\Omega_{{\Theta}}$ as a subset of some Euclidean space (given later), the learning task for diffusion models under the denoising score-matching loss in (\ref{loss: continuous-infinite-score-loss-function}) can be written as the following optimal control problem

For the OU-type process \eqref{equation: OU-process}, the condition probability $\boldsymbol{p}(t, \bold{x}_t|\mathrm{x})$ has a closed-form expression
$$
\boldsymbol{p}(t, \bold{x}_t|\mathrm{x})=\mathcal{N}\left(\bold{x}_t ; \boldsymbol{r}(t) \mathrm{x}, \boldsymbol{r}^2(t) \boldsymbol{v}^2(t) \mathrm{I}_d\right),
$$
where $\boldsymbol{r}(t)=\exp\!\big(\int_0^t \boldsymbol{f}(s)\,\mathrm{d}s\big)$ and
$\boldsymbol{v}(t)=\big(\int_0^t \boldsymbol{g}^2(s)/\boldsymbol{r}^2(s)\,\mathrm{d}s\big)^{1/2}$.
Hence the conditional score $\nabla \ln \boldsymbol{p}(t, \bold{x}_t|\mathrm{x})$ 
is available in closed form and can be used directly in network training (see Appendix Lemma~\ref{lemma: OU process}).
At the population level, the training problem then takes the form of the following continuous-time optimization problem
\begin{equation}
\begin{aligned}
	(\mathcal{P}): \	\left\{ \begin{array}{l}
		\inf \limits_{{\Theta} \in \Omega_{{\Theta}} } 	\ell^{\rm DSM}(\Theta)  \\
		\text{ subject to:} \
		\bold{s}_{\Theta} \text{ is realized by some DNN architecture with parameter } \Theta,
	\end{array} \right.\\
\end{aligned}
\label{controlP: true score learning problem}
\end{equation}
where $\Omega_{{\Theta}}$ is the feasible parameter set.

%%%%%%%%%%%%%%%%%%%%%%%%%%%%%%%%%%%%%%%%%%%%%%%%%%%
%%%%%%%%%%%%%%%%%%%%%%%%%%%%%%%%%%%%%%%%%%%%%%%%%%%
In practice, both the time variable and the data are discretized. Let $0=t_N^0< t_N^1<\dots<t_N^{N}=T$ be a time grid with step size $h_N$, and let $\mathcal{S} = \{\mathrm{x}^{(s)}\}_{s=1}^S$ be $S$ i.i.d.\ samples from $p_{\rm data}$.  For each training sample $\mathrm{x}^{(s)}$, one can simulate the forward SDE (\ref{equation: forward-time SDE}) to obtain noisy observations at grid times; this leads to the empirical, discrete-time DSM loss
\begin{equation}
\ell^{\rm DSM}_{N,S}(\Theta)\!  = \!  \frac{h_N}{TS} \sum_{k=0}^{N \!-\!1}\! \pmb{\lambda}(t_{N}^k) \!\sum_{s=1}^{S} \!\mathbb{E}_{\bold{x}_{t_{N}^k} \! \sim   \boldsymbol{p}\big(t_{N}^i, \cdot|\mathrm{x}^{(s)}\!\big)} \! \left[\big\|\bold{s}_{\Theta}(t_{N}^k, {\bold{x}_{t_{N}^k}}) \!- \! \nabla \ln \boldsymbol{p}(t_{N}^k \!, \bold{x}_{t_{N}^k}|\mathrm{x}^{(s)})\big\|_2^2\right].
\label{loss: discrete-time-finite-sample loss}
\end{equation}
Note that \(\bold{x}_{t_{N}^k}\) is the exact forward-SDE solution starting from \(\bold{x}_0 = \mathrm{x}^{(s)}\). In practical generative-model training, the coefficients \(\bold{f}\) and \(\boldsymbol{g}\) are selected so that the forward process admits an analytic Gaussian transition; consequently, \(\bold{x}_{t_{N}^k}\) can be sampled directly from its closed-form distribution rather than computed via numerical discretization.
The empirical optimization problem solved in implementations is therefore
\begin{equation}
\begin{aligned}
	(\mathcal{P}_{N,S}): \	\left\{ \begin{array}{l}
		\inf \limits_{{\Theta} \in \Omega_{\Theta}} 
		\ell^{\rm DSM}_{N,S}(\Theta) 
		\\
		\text{ subject to:} \
		\bold{s}_{\Theta}(\cdot, \cdot) \text{ is realized by some DNN architecture with parameter } \Theta.
	\end{array} \right.\\
\end{aligned}
\label{controlP: discrete-time-finite-sample score learning problem} 
\end{equation}

%%%%%%%%%%%%%%%%%%%%%%%%%%%%%%%%%%%%%%%%%%%%%%%
%%%%%%%%%%%%%%%%%%%%%%%%%%%%%%%%%%%%%%%%%%%%%%%

The forward coefficients $\bold{f},\boldsymbol{g}$ and the horizon $T$ are pre-specified so that the law $\boldsymbol{p}(T,\cdot)$ is approximately matched to a known prior $\pi$.  This design allows the reverse dynamics to be initialized from $\hat{\bold{x}}_0 = \mathrm{\bold{x}}\sim\pi$ in practice. The learned reverse SDE is implemented by a time discretization with the grid $0=t_N^0<\dots<t_N^N=T$ and step size $h_N$, a standard Euler–Maruyama sampler reads
\begin{equation}
\begin{aligned}
	\check{\bold{x}}_{t_N^{k+1}}  \! = \! \check{\bold{x}}_{t_N^{k}} \! + \! h_N \! \left[-\bold{f}(T \! - \! t_N^k, \check{\bold{x}}_{t_N^k}) \! + \! \boldsymbol{g}^2(T \! - \! t_N^k) \bold{s}_{\Theta}(T \!- \! t_N^k, \check{\bold{x}}_{t_N^k})\right] \! + \! \int_{t_N^{k}}^{t_N^{k+1}} \! \boldsymbol{g}(T-t) \mathrm{d} {\bold{B}}_t,
\end{aligned}
\label{equation: Euler-Maruyama scheme}
\end{equation}
where $k=0,1,\ldots,N-1$. In practical sampling, one draws $\check{\bold{x}}_0 = \check{\mathrm{x}}\sim\pi$ and iterates the discrete update \eqref{equation: Euler-Maruyama scheme} to obtain $\check{\bold{x}}_T$; the law of $\check{\bold{x}}_T$ is then used as an approximation to $p_{\rm data}$.

%%%%%%%%%%%%%%%%%%%%%%%%%%%%%%%%%%%%%%%%%%%%%%%
%%%%%%%%%%%%%%%%%%%%%%%%%%%%%%%%%%%%%%%%%%%%%%%%%

In this work we study the convergence and generalization properties of $(\mathcal{P}_{N,S})$ and $(\mathcal{P})$,
and their implications for the generative reverse process. The preceding formulations show that the sampling error decomposes naturally into three contributions: (i) the truncation error arising from approximating the reverse-time initial distribution; %在学习方面，我们研究通过在离散时间DSM目标上进行经验风险最小化，得分函数在训练中的泛化误差。
(ii) the score-learning error, which itself includes the time-discretization error, the finite-sample generalization error, and the optimization error associated with solving $(\mathcal{P}_{N,S})$; and (iii) the discretization error introduced when simulating the learned reverse process. In the next section, we establish a unified framework that quantifies these contributions and show how the learning and sampling stages jointly determine the overall generative error.

%%%%%%%%%%%%%%%%%%%%%%%%%Section%%%%%%%%%%%%%%%%%%%%%%%%%%%%%%%%%%%%%%%%%%%%%%%%%%%%%%%%%%%%%%%%%%%%%%%%%%%%

\section{Main results: Generalization and convergence  properties of SDMs}
\label{Sec: 3}
This section presents our main theoretical results on the learning and sampling errors of SDMs. 
To develop a unified framework quantifying how the learning stage influences the performance of the sampling stage, we organize the analysis along two complementary components.  
(i) On the \emph{learning side}, we study how well the score function can be generalized through empirical risk minimization on the discrete-time DSM objective.  
(ii) On the \emph{sampling side}, we quantify the error induced when simulating the learned reverse dynamics numerically.  
The combination of these two components yields the end-to-end generative error bounds presented later.

\subsection{Assumptions and preliminary results}
We begin by specifying the parameterization of the learnable score function and by introducing the assumptions required for our theoretical analysis.

\textbf{Neural network parameterization.}
The learnable score function $\bold{s}_{\Theta}$ is modeled as a time–dependent residual neural network.  
Given an input pair $(t,\mathrm{x}) \in [0,T] \times \mathbb{R}^{n_\mathrm{d}}$, the network computes a sequence of hidden states $\{\bold{z}^l\}_{l=0}^L$ via
\begin{equation} 
\begin{aligned}
	\bold{z}^{l} &= \bold{z}^{l-1} + 
	\mathrm{W}^{l}\psi \circ (\mathrm{V}^{l}\bold{z}^{l-1} + \mathrm{U}^{l} \boldsymbol{e}(t)), \ 1\leq l \leq L . \\
	\bold{z}^{0} &=  \mathrm{x}, %\in \mathbb{R}^{n}
\end{aligned}
\label{equation: ResNets}
\end{equation} 
where $\boldsymbol{e}:[0,T] \rightarrow \mathbb{R}^{n_{\boldsymbol{e}}}$ is a time-embedding map, and $\psi: \mathcal{R} \to \mathcal{R}$ is an elementwise activation function.  
For each layer $l$, the learnable parameters are collected as
$$(\mathrm{U}^{l}, \mathrm{V}^{l}, \mathrm{W}^{l}) =: {\Theta}^{l}  \in \mathcal{E}:= \mathbb{R}^{n \times n_{\boldsymbol{e}}} \times \mathbb{R}^{n \times n_{\rm d}} \times \mathbb{R}^{n_{\rm d} \times n}.$$ 
To emphasize the dependence of the network with input $(t,\mathrm{x})$, we rewrite the state $\bold{z}^{l}$ as $\bold{z}^{l}(t,\mathrm{x})$. Therefore, the score network $\bold{s}_{\Theta}$ is defined as the final output of the residual recursion:
$$
\bold{s}_{\Theta}(t, \mathrm{x}) : = \bold{z}^{L}(t,\mathrm{x}).
$$

Let $L\in \mathcal{N}$ be the total number of layers and 
${\Theta} = \left({\Theta}^{1}, {\Theta}^{2}, \ldots, {\Theta}^{L} \right) \in \mathcal{E}^{L}$
collect the parameters of the entire network.  We introduce the norms
\begin{equation}
\|{\Theta}^{l}\|_{\infty} = \max\{ \|\mathrm{U}^{l}\|_{2}, \|\mathrm{V}^{l}\|_{2}, \|\mathrm{W}^{l}\|_{2} \}, \quad \|{\Theta}\|_{\mathcal{E}^{L}} := \max_{1\leq l\leq L}  \|{\Theta}^{l}\|_{\infty}.
\end{equation}

\textbf{Assumptions}. The following assumptions provide the foundation for establishing uniform generalization bounds and rigorous convergence guarantees.
\begin{itemize}
\item[(A1)] $(\Omega, \mathcal{F}, P) $ is a probability space and the data distribution $p_{\rm data}$ is supported on $B_R(0):=\{ \mathrm{x} \in \mathbb{R}^{n_\mathrm{d}}: \ \|\mathrm{x}\|_2 \leq R \}$.

\item[(A2)]  The drift function $\bold{f} \in \mathcal{C}^1([0,T]\times\mathbb{R}^{n_\mathrm{d}})$. Moreover, there exists a constant $C_{\bold{f}}>0$ and a continuous function $\pmb{\alpha} \in \mathcal{C}([0,T])$ such that $\forall \mathrm{x}, \tilde{\mathrm{x}} \in \mathbb{R}^{n_\mathrm{d}}$ and  $t, \tilde{t} \in[0, T]$, 
$$
\begin{aligned}
	&\|\bold{f}(t, \mathrm{x})\|_2 \leq  C_{\bold{f}}\left(1+\|\mathrm{x}\|_2\right), \\ 
	& \|\bold{f}(t, \mathrm{x}) - \bold{f}(t, \tilde{\mathrm{x}})\|_2 \leq  C_{\bold{f}} \| \mathrm{x} - \tilde{\mathrm{x}} \|_2, \\
	&\|\bold{f}(t, \mathrm{x}) - \bold{f}(\tilde{t}, {\mathrm{x}})\|_2 \leq  C_{\bold{f}} \| \mathrm{x}\|_2 \cdot \omega_{\pmb{\alpha}}(|t-\tilde{t}|).
\end{aligned}
$$

\item[(A3)] The function $\pmb{\lambda}, \boldsymbol{g} \in \mathcal{C}([0,T])$ and satisfy $|\pmb{\lambda}(t)|>0$ and $|\boldsymbol{g}(t)|>0$ for all $t \in[0, T]$.

\item[(A4)] For every $\mathrm{x}, \tilde{\mathrm{x}}, \mathrm{y} \in \mathbb{R}^{n_\mathrm{d}}$ the function $\nabla \ln \boldsymbol{p}(\cdot, \mathrm{x}|\mathrm{y}) \in \mathcal{C}([0,T])$ and there exists a constant $C_{\boldsymbol{p}}>0$ and a continuous function $\pmb{\gamma} \in \mathcal{C}([0,T])$ such that for all $ t, \tilde{t} \in[0, T]$,
\begin{equation}
	\begin{aligned}
		\left\|\nabla \ln \boldsymbol{p}(t, \mathrm{x})\right\|_2  \leq & C_{\boldsymbol{p}} \left(1+\|\mathrm{x}\|_2\right); \\
		\left\|\nabla \ln \boldsymbol{p}(t, \mathrm{x})-\nabla \ln \boldsymbol{p}(t, \mathrm{y})\right\|_2 \leq & C_{\boldsymbol{p}} \|\mathrm{x} - {\mathrm{y}} \|_2,
	\end{aligned}
	\label{assumption: assumption of p}
\end{equation}
and
\begin{equation}
	\begin{aligned}
		\| \nabla \ln \boldsymbol{p}\big(t, \mathrm{x}|\mathrm{y}\big) \|_2 \leq & C_{\boldsymbol{p}} (1+ \|\mathrm{x}\|_2 + \|\mathrm{y}\|_2);\\
		\| \nabla \ln \boldsymbol{p}\big(t, \mathrm{x}|\mathrm{y}\big) -  \nabla \ln \boldsymbol{p}\big(t, \tilde{\mathrm{x}}|\tilde{\mathrm{y}} \big) \|_2 \leq & C_{\boldsymbol{p}} (\| \mathrm{x} - \tilde{\mathrm{x}} \|_2 + \| \mathrm{y} - \tilde{\mathrm{y}} \|_2); \\
		\| \nabla \ln \boldsymbol{p}\big(t, \mathrm{x}|\mathrm{y}\big) -  \nabla \ln \boldsymbol{p}\big(\tilde{t}, {\mathrm{x}}|\mathrm{y}\big) \|_2 \leq & C_{\boldsymbol{p}} (1+ \|\mathrm{x}\|_2 + \|\mathrm{y}\|_2) \cdot \omega_{\pmb{\gamma}}(|t-\tilde{t}|),
	\end{aligned}
	\label{assumption: assumption of condition p}
\end{equation}
where $\omega_{\pmb{\gamma}}$ is the modulus of continuity of $\pmb{\gamma}$.

\item[(A5)]  The parameter set is restricted to $\Omega_{{\Theta}} = \{\Theta \in \mathcal{E}: \ \|{\Theta}\|_{\mathcal{E}^{L}} \leq B_{\Theta}\}$, where $B_{\Theta}>0$ is a given constant.

\item[(A6)] The time embedding function $\boldsymbol{e}$ is $\mathrm{Lip}_{\boldsymbol{e}}$-Lipschitz continuous. The activation function $\psi$ is non-decreasing, $\mathrm{Lip}_{\psi}$-Lipschitz continuous, and satisfies $\psi(0)=0$.  
\end{itemize}
\begin{remark}
\label{remark: proeprty of assumptions}
The bounded-support condition on $p_{\rm data}$ in (A1) is standard in theoretical analyses of diffusion models and generalization theory; see, for example, \cite{de2022convergence, li2023generalization, lee2023convergence}.  
The Lipschitz and growth conditions in (A2) and (\ref{assumption: assumption of p}) of (A4) ensure the existence and uniqueness of strong solutions to both the forward- and reverse-time SDEs.  
When the forward SDE (\ref{equation: forward-time SDE}) is an OU-type process, the assumption (A2) are automatically satisfied. 
The conditions~\eqref{assumption: assumption of condition p} can be verified for
a wide range of initialization $p_{\rm data}$ (see
Lemma~\ref{lemma: OU process} in the Appendix for a detailed discussion).  
In contrast, the conditions~\eqref{assumption: assumption of p} depend explicitly
on both the regularity of the initial distribution $p_{\rm data}$ and the
coefficients of the forward SDE.  
A sufficient case is when $p_{\rm data}$ is Gaussian.  Under some OU-type forward dynamics \cite{paton2023investigating}, such process remain Gaussian and the conditions in \eqref{assumption: assumption of p} are satisfied.
Under (A3) and (A6), we introduce the shorthand notations
$$
b_{\boldsymbol{g}} := \min_{ t \in [0, T]} |\boldsymbol{g}(t)|, \quad  
B_{\boldsymbol{g}} := \max_{ t \in [0, T]} |\boldsymbol{g}(t)|, \quad  
B_{\pmb{\lambda}} := \max_{ t \in [0, T]} |\pmb{\lambda}(t)|, \quad  
B_{\boldsymbol{e}} := \max_{ t \in [0, T]} |\boldsymbol{e}(t)|.
$$  
For convenience, we additionally assume $B_{\boldsymbol{g}} \le C_{\bold{f}}$.  
\end{remark}

We next collect some preliminary results. 
The following lemma summarizes key properties of the network $\bold{s}_{\Theta}$.
\begin{lemma}[Properties of learnable score function $\bold{s}_{\Theta}$]
\label{lemma: property of DNN}

Assume that assumptions $(A5)(A6)$ hold.  
For any learnable parameters $\Theta, \tilde{\Theta} \in \Omega_{\Theta}$, inputs of neural network $(t, \mathrm{x}), (\tilde{t}, \tilde{\mathrm{x}}) \in  [0,T] \times \mathbb{R}^{n_{\mathrm{d}}}$,
the learnable score functions $\bold{s}_{\Theta}, \bold{s}_{\tilde{\Theta}}$ defined by (\ref{equation: ResNets}) satisfy
\begin{equation}
	\begin{aligned}
		&\|\bold{s}_{\Theta}(t, \mathrm{x})\|_2  \leq B_{\bold{s}} (\mathrm{x}), \\
		& 	\|\bold{s}_{\Theta}(t, \mathrm{x}) - \bold{s}_{\tilde{\Theta}}(t, \mathrm{x}) \|_2  \leq   C_{\bold{s}}(\mathrm{x}) \| \Theta - \tilde{\Theta} \|_{\mathcal{E}^{L}}, \\
		%%%%%%%%%%%%%%%%%%%%%%%%%%%%%%%%%%%%%%%%%%%%%%%%%%%%%%%%
		&\|\bold{s}_{\Theta}(t, \mathrm{x}) - \bold{s}_{{\Theta}}(\tilde{t}, \tilde{\mathrm{x}}) \|_2  \leq \Big( \|\mathrm{x} \!-\! \tilde{\mathrm{x}}\|_2  + L \mathrm{Lip}_{\psi} \mathrm{Lip}_{\boldsymbol{e}}  B_{\Theta}^2 |t - \tilde{t}|  \Big)  \exp(L \mathrm{Lip}_{\psi}  B_{\Theta}^2),
	\end{aligned}
\end{equation}
where $B_{\bold{s}} (\mathrm{x}) =  \big( \|\mathrm{x}\|_2 + L \mathrm{Lip}_{\psi} B_{\boldsymbol{e}}  B_{\Theta}^2  \big)  \exp(L \mathrm{Lip}_{\psi}  B_{\Theta}^2)$,  $
C_{\bold{s}}(\mathrm{x}) =  2L\mathrm{Lip}_{\psi}  B_{\Theta} (B_{\bold{s}} (\mathrm{x}) + B_{\boldsymbol{e}}) \cdot $  $ \exp(L \mathrm{Lip}_{\psi}  B_{\Theta}^2) $.
%	$$
%	\begin{aligned}
	%		B_{\bold{s}} (\mathrm{x}) = & \big( \|\mathrm{x}\|_2 + L \mathrm{Lip}_{\psi} B_{\boldsymbol{e}}  B_{\Theta}^2  \big)  \exp(L \mathrm{Lip}_{\psi}  B_{\Theta}^2), \\
	%		C_{\bold{s}}(\mathrm{x}) = & 2L\mathrm{Lip}_{\psi}  B_{\Theta} (B_{\bold{s}} (\mathrm{x}) + B_{\boldsymbol{e}})  \exp(L \mathrm{Lip}_{\psi}  B_{\Theta}^2) .
	%	\end{aligned}
%	$$
\end{lemma}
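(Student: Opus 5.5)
All three estimates will follow from a discrete Gr\"onwall argument on the residual recursion \eqref{equation: ResNets}. The basic ingredients are the bounds $\|\mathrm{U}^l\|_2,\|\mathrm{V}^l\|_2,\|\mathrm{W}^l\|_2\le B_\Theta$ from (A5), and the facts that $\psi$ is $\mathrm{Lip}_\psi$-Lipschitz with $\psi(0)=0$, so $\|\psi\circ \mathrm{y}\|_2\le \mathrm{Lip}_\psi\|\mathrm{y}\|_2$ elementwise, from (A6). The repeated tool is the elementary inequality: if $a_l\le (1+c)a_{l-1}+d$ for $1\le l\le L$, then
$$a_L\le (1+c)^L a_0 + d\sum_{j=0}^{L-1}(1+c)^j \le (a_0+Ld)\,e^{Lc}.$$

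\textbf{Bound on $\|\bold{s}_\Theta(t,\mathrm{x})\|_2$.} From \eqref{equation: ResNets} we have
$$\|\bold{z}^l\|_2\le \|\bold{z}^{l-1}\|_2+B_\Theta\mathrm{Lip}_\psi\bigl(B_\Theta\|\bold{z}^{l-1}\|_2+B_\Theta B_{\boldsymbol{e}}\bigr).$$
Apply the Gr\"onwall inequality with $c=\mathrm{Lip}_\psi B_\Theta^2$, $d=\mathrm{Lip}_\psi B_\Theta^2 B_{\boldsymbol{e}}$ and $a_0=\|\mathrm{x}\|_2$. This gives $\|\bold{z}^l\|_2\le B_{\bold{s}}(\mathrm{x})$ for every $l$, and in particular the first estimate.

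\textbf{Lipschitz bound in $(t,\mathrm{x})$.} Let $\delta_l=\|\bold{z}^l(t,\mathrm{x})-\bold{z}^l(\tilde t,\tilde{\mathrm{x}})\|_2$. Using the Lipschitz property of $\psi$ and of $\boldsymbol{e}$,
$$\delta_l\le \delta_{l-1}+B_\Theta\mathrm{Lip}_\psi\bigl(B_\Theta\delta_{l-1}+B_\Theta\mathrm{Lip}_{\boldsymbol{e}}|t-\tilde t|\bigr).$$
Gr\"onwall with $a_0=\|\mathrm{x}-\tilde{\mathrm{x}}\|_2$ and $d=\mathrm{Lip}_\psi B_\Theta^2\mathrm{Lip}_{\boldsymbol{e}}|t-\tilde t|$ gives the third estimate exactly.

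\textbf{Lipschitz bound in $\Theta$.} This is the step needing the most bookkeeping. Set $\Delta_l=\|\bold{z}^l_\Theta-\bold{z}^l_{\tilde\Theta}\|_2$ at fixed $(t,\mathrm{x})$, so $\Delta_0=0$. Split the layer difference telescopically:
$$\mathrm{W}\psi(\mathrm{V}z+\mathrm{U}e)-\tilde{\mathrm{W}}\psi(\tilde{\mathrm{V}}\tilde z+\tilde{\mathrm{U}}e)=(\mathrm{W}-\tilde{\mathrm{W}})\psi(\cdot)+\tilde{\mathrm{W}}\bigl[\psi(\mathrm{V}z+\mathrm{U}e)-\psi(\tilde{\mathrm{V}}\tilde z+\tilde{\mathrm{U}}e)\bigr].$$
Bound the first term by $\|\Theta-\tilde\Theta\|\,\mathrm{Lip}_\psi B_\Theta(B_{\bold{s}}(\mathrm{x})+B_{\boldsymbol{e}})$, using the uniform hidden-state bound from the first step. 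Bound the second term by
$$B_\Theta\mathrm{Lip}_\psi\bigl(B_\Theta\Delta_{l-1}+\|\mathrm{V}-\tilde{\mathrm{V}}\|\,\|\tilde z\|+\|\mathrm{U}-\tilde{\mathrm{U}}\|B_{\boldsymbol{e}}\bigr)\le \mathrm{Lip}_\psi B_\Theta^2\Delta_{l-1}+\mathrm{Lip}_\psi B_\Theta(B_{\bold{s}}+B_{\boldsymbol{e}})\|\Theta-\tilde\Theta\|.$$
Hence
$$\Delta_l\le(1+\mathrm{Lip}_\psi B_\Theta^2)\Delta_{l-1}+2\mathrm{Lip}_\psi B_\Theta(B_{\bold{s}}(\mathrm{x})+B_{\boldsymbol{e}})\|\Theta-\tilde\Theta\|_{\mathcal{E}^L}.$$
Gr\"onwall with $a_0=0$ then yields $\Delta_L\le C_{\bold{s}}(\mathrm{x})\|\Theta-\tilde\Theta\|_{\mathcal{E}^L}$.

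The only subtle point is dimensional consistency: the recursion must satisfy $\mathrm{V}^l\in\mathbb{R}^{n\times n_{\rm d}}$ and $\bold{z}^l\in\mathbb{R}^{n_{\rm d}}$, and we need the intermediate hidden-state bound uniformly over all layers, not just at the output. Both hold under the stated parameter shapes.
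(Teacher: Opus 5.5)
Your proposal is correct and matches the paper's proof: a discrete Gr\"onwall bound on the hidden states, valid uniformly over all layers and all parameters in $\Omega_{\Theta}$, followed by the same telescoping split of the layer difference and a second Gr\"onwall step. The only cosmetic difference is that the paper perturbs $\Theta$, $t$ and $\mathrm{x}$ simultaneously in one recursion and then specializes, whereas you run the parameter and input perturbations as two separate recursions; this yields the same constants.
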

%%%%%%%%%%%%%%%%%%%%%%%%%%%%%%%%%%%%%%%%%%%%%%%%%%%%%%%%%%%%%%%%%
%%%%%%%%%%%%%%%%%%%%%%%%%%%Next Lemma%%%%%%%%%%%%%%%%%%%%%%%%%%%%%%

\begin{lemma}[Bound estimates of SDEs]
\label{lemma: property of SDE-1}
Assume that assumptions (A1)-(A3) and (A5)(A6) hold. Let ${\bold{x}}_{t}$ be the state variables of SDE (\ref{equation: forward-time SDE}) with initialization ${\bold{x}}_{0} = \mathrm{x} \sim p_{\rm data}$ and let $\hat{\bold{x}}_{t}$ be the state variables of SDE (\ref{equation: reverse-time generative SDE})  with initialization $\hat{\bold{x}}_{0} = \hat{\mathrm{x}}  \sim \boldsymbol{p}(T, \cdot)$ , $ t, \tilde{t} \in [0,T]$.
Then there exist constants 
\begin{equation}
	\begin{aligned}
		B_{\bold{x}, 2} = (1+R^2) e^{12C_{\bold{f}}^2T}; & \ \mathrm{Lip}_{\bold{x}, 2}= 8 C_{\bold{f}}^2 (T + 1) \big[ 1+ \big(1+R^2 \big) e^{12C_{\bold{f}}^2T} \big]   e^{12C_{\bold{f}}^2 T}, \\
		B_{\hat{\bold{x}}, 2} = (1+B_{\bold{x}, 2}) e^{12C_{\hat{\bold{f}}}^2T}; & \ \mathrm{Lip}_{\hat{\bold{x}}, 2}= 8 C_{\hat{\bold{f}}}^2 (T + 1) \big[ 1+ \big(1+ B_{\bold{x}, 2}\big) e^{12C_{\hat{\bold{f}}}^2T} \big]   e^{12C_{\hat{\bold{f}}}^2 T},
	\end{aligned}
	\label{coefficients: B_x2&Lip_x2}
\end{equation}
with $C_{\hat{\bold{f}}} = C_{\bold{f}} +B_{\boldsymbol{g}} ^2 \exp(L \mathrm{Lip}_{\psi}  B_{\Theta}^2) \cdot \max \{1, L \mathrm{Lip}_{\psi} B_{\boldsymbol{e}}  B_{\Theta}^2 \}$ such that, for any $t,\tilde{t} \in [0,T]$,
\begin{equation}
	\begin{aligned}
		\mathbb{E} \left[\big\|{\bold{x}}_{t}\big\|_2^2\right] \leq  & B_{\bold{x}, 2}, \
		%%%%%%%%%%%%%%%%
		\mathbb{E} \left[\big\|{\bold{x}}_{t} -{\bold{x}}_{\tilde{t}} \big\|_2^2\right] \leq  % 4 C_{\bold{f}}^2 \big[(t-\bar{t}) + 1\big] \big( 1+ \mathbb{E}(\|{\bold{x}}_{\bar{t}}\|^2)\big) (t-\bar{t})  e^{6C_{\bold{f}}^2 (t-\bar{t})} \\
		%	\leq & 4 C_{\bold{f}}^2 (T + 1) \big( 1+ \big(1+\mathbb{E}[\|\mathrm{x}_0\|^2_2] \big) e^{6C_{\bold{f}}^2T} )\big)   e^{6C_{\bold{f}}^2 T} (t-\bar{t})^{n/2} \\
		\mathrm{Lip}_{\bold{x}, 2}|t-\tilde{t}|, \\
		%%%%%%%%%%%%%%%%%%%%%%%%%%%%%%%%%%%%%%%%%%
		%%%%%%%%%%%%%%%%%%%%%%%%%%%%%%%%%%%%%%%%%%%
		\mathbb{E} \left[\big\|\hat{\bold{x}}_{t}\big\|_2^2\right] \leq  & B_{\hat{\bold{x}}, 2}, \ 
		%%%%%%%%%%%%
		\mathbb{E} \left[\big\|\hat{\bold{x}}_{t} -\hat{\bold{x}}_{\tilde{t}} \big\|_2^2\right] \leq \mathrm{Lip}_{\hat{\bold{x}}, 2}|t-\tilde{t}|.
	\end{aligned}
	\label{proof inequality: properties of SDE}
\end{equation}
\end{lemma}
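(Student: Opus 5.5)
The plan is to prove a single generic estimate and then apply it twice. Consider an SDE $\mathrm{d}\bold{y}_t = \bold{F}(t,\bold{y}_t)\,\mathrm{d}t + \bold{G}(t)\,\mathrm{d}\bold{B}_t$ with the following properties: $\|\bold{F}(t,\mathrm{y})\|_2 \le C(1+\|\mathrm{y}\|_2)$, $|\bold{G}(t)|\le C$, and $\mathbb{E}\|\bold{y}_0\|_2^2 \le M$. For such an SDE I will show
\[
\mathbb{E}\|\bold{y}_t\|_2^2 \le (1+M)e^{12C^2T},
\]
and
\[
\mathbb{E}\|\bold{y}_t-\bold{y}_{\tilde t}\|_2^2 \le 8C^2(T+1)\big[1+(1+M)e^{12C^2T}\big]e^{12C^2T}|t-\tilde t|.
\]

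\textbf{Step 1: the moment bound.} Write $\bold{y}_t = \bold{y}_0 + \int_0^t \bold{F}\,\mathrm{d}s + \int_0^t \bold{G}\,\mathrm{d}\bold{B}_s$ and use $(a+b+c)^2\le 3(a^2+b^2+c^2)$. The drift integral is handled by Cauchy--Schwarz, giving $t\int_0^t 2C^2(1+\|\bold{y}_s\|_2^2)\,\mathrm{d}s$. The stochastic integral is handled by the It\^o isometry, giving $C^2 t$ up to a dimension factor, which I absorb into the constant as the paper evidently does. Collecting terms yields an inequality of the form $u(t)\le 3M + c_1 + 6C^2T\int_0^t u(s)\,\mathrm{d}s$, and Gronwall then gives the exponential bound. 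The constants $12C^2T$ in the exponent come from these factors $3\cdot 2\cdot 2$ (where $T$ is absorbed via $T\le T+1$ or an equivalent normalization). To make the Gronwall step rigorous I will first apply a stopping-time localization, which ensures $u$ is finite.

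\textbf{Step 2: the increment bound.} For $\tilde t<t$ I write
\[
\bold{y}_t-\bold{y}_{\tilde t}=\int_{\tilde t}^t \bold{F}\,\mathrm{d}s+\int_{\tilde t}^t \bold{G}\,\mathrm{d}\bold{B}_s,
\]
and bound its second moment by
\[
2(t-\tilde t)\int_{\tilde t}^t 2C^2\big(1+\mathbb{E}\|\bold{y}_s\|_2^2\big)\,\mathrm{d}s + 2C^2(t-\tilde t).
\]
Inserting the Step 1 bound and using $t-\tilde t\le T$ produces $\le 8C^2(T+1)[1+\sup_s\mathbb{E}\|\bold{y}_s\|_2^2]\,|t-\tilde t|$. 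The extra factor $e^{12C^2T}$ in the stated constant is harmless slack, as is $1\le e^{12C^2T}$.

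\textbf{Step 3: the forward SDE.} Here $\bold{F}=\bold{f}$ and $\bold{G}=\boldsymbol{g}$. By (A2) and the convention $B_{\boldsymbol g}\le C_{\bold f}$ we may take $C=C_{\bold f}$. By (A1), $M\le R^2$, and the generic estimate gives $B_{\bold{x},2}$ and $\mathrm{Lip}_{\bold{x},2}$ directly.

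\textbf{Step 4: the generative SDE.} Now the drift is $-\bold{f}(T-t,\cdot)+\boldsymbol{g}^2(T-t)\bold{s}_\Theta(T-t,\cdot)$. Using Lemma~\ref{lemma: property of DNN}, $\|\bold{s}_\Theta(t,\mathrm{x})\|_2\le e^{L\mathrm{Lip}_\psi B_\Theta^2}\big(\|\mathrm{x}\|_2+L\mathrm{Lip}_\psi B_{\boldsymbol e}B_\Theta^2\big)$, which is at most $e^{L\mathrm{Lip}_\psi B_\Theta^2}\max\{1,L\mathrm{Lip}_\psi B_{\boldsymbol e}B_\Theta^2\}(1+\|\mathrm{x}\|_2)$. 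The drift therefore has linear growth with constant $C_{\hat{\bold f}}$, and $B_{\boldsymbol g}\le C_{\bold f}\le C_{\hat{\bold f}}$ controls the diffusion coefficient. The same lemma's Lipschitz bound in $\mathrm{x}$ ensures strong well-posedness. The initial law is $\boldsymbol p(T,\cdot)$, so $M=\mathbb{E}\|\bold{x}_T\|_2^2\le B_{\bold{x},2}$ by Step 3, and the generic estimate yields $B_{\hat{\bold x},2}$ and $\mathrm{Lip}_{\hat{\bold x},2}$.

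The main obstacle I expect is bookkeeping rather than ideas. Matching the exact constants, in particular the factor 12 and the handling of the dimension in the It\^o term, may require choosing the Young/Cauchy--Schwarz splits carefully. I would first try the $3$-term split with $T\le T+1$, and if necessary accept a slightly looser but structurally identical constant.
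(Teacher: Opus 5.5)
Your architecture matches the paper's: one generic linear-growth moment estimate, applied to the forward SDE ($C=C_{\mathbf{f}}$, $M\le R^2$) and to the generative SDE ($C=C_{\hat{\mathbf{f}}}$ obtained from Lemma~\ref{lemma: property of DNN} exactly as you do, $M\le B_{\mathbf{x},2}$). The paper obtains the generic estimate by citing Kloeden--Platen, Theorem~4.5.4, and, modulo the dimension issue below, your Steps~2--4 are fine. The gap is in Step~1. Your own inequality $u(t)\le 3M+c_1+6C^2T\int_0^t u(s)\,\mathrm{d}s$, with $c_1=6C^2T^2+3dC^2T$, gives after Gronwall
\[
u(t)\le \big(3M+6C^2T^2+3dC^2T\big)e^{6C^2T\,t}\le \big(3M+6C^2T^2+3dC^2T\big)e^{6C^2T^2}.
\]
Cauchy--Schwarz puts a factor $t\le T$ in front of $\int_0^t\|\mathbf{F}\|_2^2\,\mathrm{d}s$, and that factor ends up in the Gronwall kernel, so the exponent is quadratic in $T$. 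No normalization such as $T\le T+1$ turns $e^{6C^2T^2}$ into $e^{12C^2T}$ once $T>2$. Likewise, the prefactor $3M$ exceeds $1+M$ as soon as $M>1/2$. So ``$12=3\cdot2\cdot2$'' does not produce the stated exponent, and the result is not structurally identical. It is a valid bound, and enough if you only need finite constants for fixed $T$. But it is not the lemma as stated, and it would change $B_{\hat{\mathbf{x}},2}$, $B_{\mathbf{s},2}$, $B_{\rm score\_err}$ and $C_1,\dots,C_7$ downstream.

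The missing idea is It\^o's formula for $\|\mathbf{y}_t\|_2^2$, which is what the cited theorem rests on. After localization the martingale term has zero mean, and
\[
\mathbb{E}\|\mathbf{y}_t\|_2^2\le M+\int_0^t\mathbb{E}\big[2\langle\mathbf{y}_s,\mathbf{F}(s,\mathbf{y}_s)\rangle+d\,\mathbf{G}(s)^2\big]\,\mathrm{d}s\le M+K\int_0^t\big(1+\mathbb{E}\|\mathbf{y}_s\|_2^2\big)\,\mathrm{d}s,
\]
for instance with $K=1+(2+d)C^2$ via $2\|\mathrm{y}\|_2\|\mathbf{F}\|_2\le\|\mathrm{y}\|_2^2+\|\mathbf{F}\|_2^2$. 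Gronwall applied to $1+u$ then gives $(1+M)e^{Kt}$, with prefactor exactly $1+M$ and a rate independent of $T$.

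Be aware that no argument reproduces the stated constants verbatim, so ``absorbing'' $d$ is not legitimate. Two examples:
\begin{itemize}
\item With $\mathbf{f}\equiv0$ and $\boldsymbol{g}\equiv C_{\mathbf{f}}$, one has $\mathbb{E}\|\mathbf{x}_t-\mathbf{x}_{\tilde t}\|_2^2=d\,C_{\mathbf{f}}^2|t-\tilde t|$, which exceeds $\mathrm{Lip}_{\mathbf{x},2}|t-\tilde t|$ for large $d$.
\item With $\mathbf{f}(t,\mathrm{x})=C_{\mathbf{f}}\mathrm{x}$, constant $\boldsymbol{g}\le C_{\mathbf{f}}$ and $\|\mathbf{x}_0\|_2=R>0$, one has $\mathbb{E}\|\mathbf{x}_T\|_2^2\ge R^2e^{2C_{\mathbf{f}}T}$. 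This exceeds $(1+R^2)e^{12C_{\mathbf{f}}^2T}$ for large $T$ whenever $C_{\mathbf{f}}<1/6$.
\end{itemize}
The paper's bare citation glosses over both points. The honest version of your plan states the rate as $K=K(C,d)$ and carries it through Steps~2--4, which otherwise go through unchanged.
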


%%%%%%%%%%%%%%%%%%%%%%%%%%%%%%%%%%%%%%%%%%%%%%%%%%%%%%%%%%%%%%%%%
%%%%%%%%%%%%%%%%%%%%%%%%%%%Next Lemma%%%%%%%%%%%%%%%%%%%%%%%%%%%%%%

Building on Lemma \ref{lemma: property of DNN}, the next lemma establishes moment bounds for the random quantities $B_{\bold{s}}^2(\bold{x}_t)$ and $C_{\bold{s}}^2(\bold{x}_t)$ associated with the forward SDE states $\bold{x}_t$. These bounds serve as key ingredients in the subsequent error analysis.
\begin{lemma}
\label{lemma: proeprty of dnn state}
Assume that assumptions $(A1)-(A6)$ hold and  
$\bold{x}_t$ is the solution of (\ref{equation: forward-time SDE}) initialized with ${\bold{x}}_{0}  = \mathrm{x}\sim p_{\rm data}$, then the following bounds hold for all $t \in [0,T]$,
\begin{equation}
	\begin{aligned}
		\mathbb{E}[(B_{\bold{s}}(\bold{x}_t))^2] 
		\leq & 2 \big[ \big( 1 \! + \!  R^2  \big) \exp(12C_{\bold{f}}^2 T)\!  + \! L^2 \mathrm{Lip}_{\psi}^2 B_{\boldsymbol{e}}^2  B_{\Theta}^{4}  \big] \!  \exp(2L \mathrm{Lip}_{\psi}  B_{\Theta}^2)\!  =:\!  B_{\bold{s},2}; \\
		%%%%%%%%%%%%%%%%%%%%%%%%%%%%%%%%%%%%%%%%%%%%%%%%%%%
		\mathbb{E}[(C_{\bold{s}}(\bold{x}_t))^2] 
		\leq & 8L^2\mathrm{Lip}_{\psi}^2 \big(B_{\bold{s},2} + B_{\boldsymbol{e}}^2 \big)  B_{\Theta}^2 \exp(2L \mathrm{Lip}_{\psi}  B_{\Theta}^2)
		=: C_{\bold{s},2},
	\end{aligned}
	\label{proof inequality: estimation of expection of states}
\end{equation}
and for all $\Theta \in \Omega_{\Theta}$,
\begin{equation}
	\begin{aligned}
		\mathbb{E} \left[\big\|\bold{s}_{\Theta}(t, \bold{x}_{t}) \!- \! \nabla \ln \boldsymbol{p}(t, \bold{x}_{t}|\mathrm{x})\big\|_2^2\right] 
		\! \leq \! 2 B_{\bold{s},2}\! + \!6C_{\boldsymbol{p}} \Big(1\!+\!(1\!+ \! R^2) e^{12C_{\bold{f}}^2T} \! +\! R^2 \Big)\! =: \! B_{\rm score\_err}.	
	\end{aligned}
	\label{proof inequality: estimation of difference of scores}
\end{equation}
\end{lemma}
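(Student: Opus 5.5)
The plan is to combine the pointwise bounds of Lemma~\ref{lemma: property of DNN} with the second-moment bound $\mathbb{E}[\|\bold{x}_t\|_2^2]\le B_{\bold{x},2}$ from Lemma~\ref{lemma: property of SDE-1}, using only elementary inequalities of the form $(a+b)^2\le 2a^2+2b^2$ and $(a+b+c)^2\le 3(a^2+b^2+c^2)$.

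\textbf{Bound on $B_{\bold{s}}$.} From the explicit formula in Lemma~\ref{lemma: property of DNN},
\[
(B_{\bold{s}}(\bold{x}_t))^2 \le 2\big(\|\bold{x}_t\|_2^2 + L^2\mathrm{Lip}_\psi^2 B_{\boldsymbol{e}}^2 B_\Theta^4\big)\exp(2L\mathrm{Lip}_\psi B_\Theta^2).
\]
Taking expectations and inserting $\mathbb{E}\|\bold{x}_t\|_2^2\le (1+R^2)e^{12C_{\bold{f}}^2T}$ gives $B_{\bold{s},2}$ exactly. The moment bound applies because (A1) gives $\|\mathrm{x}\|_2\le R$ almost surely, which is the ingredient behind $B_{\bold{x},2}$.

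\textbf{Bound on $C_{\bold{s}}$.} We have
\[
C_{\bold{s}}^2 = 4L^2\mathrm{Lip}_\psi^2 B_\Theta^2 \big(B_{\bold{s}}+B_{\boldsymbol{e}}\big)^2 e^{2L\mathrm{Lip}_\psi B_\Theta^2} \le 8L^2\mathrm{Lip}_\psi^2 B_\Theta^2\big(B_{\bold{s}}^2+B_{\boldsymbol{e}}^2\big)e^{2L\mathrm{Lip}_\psi B_\Theta^2}.
\]
Applying the first bound inside the expectation yields $C_{\bold{s},2}$.

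\textbf{Bound on the score error.} Write
\[
\|\bold{s}_\Theta-\nabla\ln\boldsymbol{p}(t,\bold{x}_t|\mathrm{x})\|_2^2 \le 2\|\bold{s}_\Theta\|_2^2 + 2\|\nabla\ln \boldsymbol{p}(t,\bold{x}_t|\mathrm{x})\|_2^2.
\]
The first term is at most $2B_{\bold{s}}(\bold{x}_t)^2$, whose expectation is at most $2B_{\bold{s},2}$. For the second term, (A4) gives
\[
\|\nabla\ln\boldsymbol{p}(t,\bold{x}_t|\mathrm{x})\|_2^2 \le C_{\boldsymbol{p}}^2(1+\|\bold{x}_t\|_2+\|\mathrm{x}\|_2)^2 \le 3C_{\boldsymbol{p}}^2\big(1+\|\bold{x}_t\|_2^2+\|\mathrm{x}\|_2^2\big).
\]
Taking expectations with $\mathbb{E}\|\bold{x}_t\|_2^2\le B_{\bold{x},2}$ and $\|\mathrm{x}\|_2\le R$ then gives $6C_{\boldsymbol{p}}^2\big(1+(1+R^2)e^{12C_{\bold{f}}^2T}+R^2\big)$.

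\textbf{Main obstacle.} The only point needing care is the constant: the stated bound carries $6C_{\boldsymbol{p}}$ rather than $6C_{\boldsymbol{p}}^2$. I would reconcile this either by assuming WLOG $C_{\boldsymbol{p}}\le 1$, or by reading the statement with $C_{\boldsymbol{p}}^2$ and adjusting the constant. Otherwise the argument is routine, uniform in $t$ and in $\Theta\in\Omega_\Theta$, because the bounds of Lemma~\ref{lemma: property of DNN} do not depend on $\Theta$.
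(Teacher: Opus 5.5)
Your proposal is correct and is essentially the paper's proof: the same expansions $(a+b)^2\le 2a^2+2b^2$ and $(a+b+c)^2\le 3(a^2+b^2+c^2)$, applied to the pointwise bounds of Lemma~\ref{lemma: property of DNN}, to (A4), and to the second-moment bound of Lemma~\ref{lemma: property of SDE-1}. You are also right that squaring (A4) gives $6C_{\boldsymbol{p}}^2$ (the paper's own proof makes the same slip and writes $6C_{\boldsymbol{p}}$), but the fix is to read the constant as $C_{\boldsymbol{p}}^2$, not to assume $C_{\boldsymbol{p}}\le 1$, because a constant in an upper bound such as (A4) can be enlarged without loss of generality but never shrunk.
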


\begin{remark}
The bounds in Lemma~\ref{lemma: property of DNN}-Lemma~\ref{lemma: proeprty of dnn state} are proved in Appendix~\ref{appendix: proof of lemmas}.
The constant $B_{\rm score\_err}$ in \eqref{proof inequality: estimation of difference of scores} should be regarded as a coarse a priori bound on the score error for an arbitrary parameter $\Theta \in \Omega_{{\Theta}}$.
It is not intended to reflect the accuracy achieved after training and is typically much looser than the empirical score error.
This distinction is important, as several works (e.g., \cite{lee2022convergence, lee2023convergence, chen2023sampling}) directly assume an explicit upper bound on the score error when establishing sampling-error guarantees.
The priori bound also ensures the training loss is uniformly bounded over~$\Omega_{\Theta}$, providing a stability needed for subsequent generalization analysis.
\end{remark}

%%%%%%%%%%%%%%%%%%%%%%%%%%%%%%%%%%%%%%%%%%%%%%%%%%%%%%%%%%%%%%%
%%%%%%%%%%%%%%%%%%%%%%%%%%%%%%%%%%%%%%%%%%%%%%%%%%%%%%%%%%%%%%%

\subsection{Convergence and generalization of SDMs: Learning perspective}
\label{Sec: 3.2}
We now turn to the theoretical properties of the learning problems associated with SDMs. For the subsequent analysis, we introduce an auxiliary continuous-time, finite-sample score-learning problem defined as:
\begin{equation}
\begin{aligned}
	(\mathcal{P}_{S}): \	\left\{ \begin{array}{l}
		\inf \limits_{{\Theta} \in \Omega_{{\Theta}}} \ell_{S}^{\rm DSM}(\Theta)  \\
		\text{ subject to:} \			\bold{s}_{\Theta}(\cdot, \cdot) \text{ is given in } (\ref{equation: ResNets}).
	\end{array} \right.\\
\end{aligned}
\label{controlP: continuous-time-finite-sample score learning problem}
\end{equation}
Here the loss functional $\ell_{S}^{\rm DSM}$ is defined by
\begin{equation}
\ell_{S}^{\rm DSM}(\Theta) :=  \frac{1}{TS} \int_{0}^{T} \pmb{\lambda}(t)  \sum_{s=1}^{S} \mathbb{E}_{\bold{x}_t \sim \boldsymbol{p}(t, \cdot|\mathrm{x}^{(s)})} \left[\big\|\bold{s}_{\Theta}(t, {\bold{x}}_t) -\nabla \ln \boldsymbol{p}(t, \bold{x}_t|\mathrm{x}^{(s)})\big\|_2^2\right] \mathrm{d} t.
\label{loss: continuous-time-finite-sample loss}
\end{equation}
The empirical loss $\ell_{N,S}^{\rm DSM}$ introduced in (\ref{loss: discrete-time-finite-sample loss}) is precisely the time-discretized counterpart of $\ell_{S}^{\rm DSM}$ obtained on the grid $0=t_N^1<\dots<t_N^N=T$ with step size $h_N$. The well-posedness of the learning problems is established in the following lemma.
\begin{lemma}[Existence of solutions for learning problems]
\label{lemma: Existence of solutions for learning problems}
Suppose assumptions (A1)-(A6) hold. For any given $N,S \in \mathcal{N}$, the minimizers of $(\mathcal{P}_{N,S})$, $(\mathcal{P}_{S})$, $(\mathcal{P})$ exist in $\Omega_{\Theta}$.
\end{lemma}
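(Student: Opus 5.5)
The plan is the Weierstrass extreme value theorem. By (A5), $\Omega_{\Theta}$ is a closed and bounded subset of the finite-dimensional space $\mathcal{E}^L$, so it is compact. It is also nonempty, since $\Theta=0$ belongs to it. Hence it suffices to show that each of the three objectives $\ell^{\rm DSM}_{N,S}$, $\ell^{\rm DSM}_{S}$ and $\ell^{\rm DSM}$ is finite and continuous on $\Omega_{\Theta}$; in fact I will show they are Lipschitz in $\Theta$. Finiteness is immediate. By Lemma~\ref{lemma: proeprty of dnn state}, each integrand is bounded by $B_{\rm score\_err}$ uniformly in $t$ and $\Theta$. The same moment argument applies when the expectation is conditional on a fixed initial point $\mathrm{x}^{(s)}$ with $\|\mathrm{x}^{(s)}\|_2\le R$, by (A1). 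Together with $B_{\pmb{\lambda}}<\infty$, this bounds all three losses by $B_{\pmb{\lambda}}B_{\rm score\_err}$.

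For continuity, fix $t$ and an initial point $\mathrm{y}$, and write
\[
a=\bold{s}_{\Theta}(t,\bold{x}_t)-\nabla\ln\boldsymbol{p}(t,\bold{x}_t|\mathrm{y}),\qquad b=\bold{s}_{\tilde\Theta}(t,\bold{x}_t)-\nabla\ln\boldsymbol{p}(t,\bold{x}_t|\mathrm{y}).
\]
Using $\big|\|a\|_2^2-\|b\|_2^2\big|\le\|a-b\|_2(\|a\|_2+\|b\|_2)$, the second bound of Lemma~\ref{lemma: property of DNN} gives $\|a-b\|_2\le C_{\bold{s}}(\bold{x}_t)\|\Theta-\tilde\Theta\|_{\mathcal{E}^L}$. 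Cauchy--Schwarz then yields
\[
\big|\mathbb{E}\|a\|_2^2-\mathbb{E}\|b\|_2^2\big|\le 2\,(\mathbb{E}C_{\bold{s}}^2(\bold{x}_t))^{1/2}\,B_{\rm score\_err}^{1/2}\,\|\Theta-\tilde\Theta\|_{\mathcal{E}^L}\le 2\sqrt{C_{\bold{s},2}B_{\rm score\_err}}\,\|\Theta-\tilde\Theta\|_{\mathcal{E}^L},
\]
where the last step uses Lemma~\ref{lemma: proeprty of dnn state}. This estimate is uniform in $t\in[0,T]$ and in the initial point.

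Multiplying by $\pmb{\lambda}(t)\le B_{\pmb{\lambda}}$ and then either summing over the grid and samples (for $\ell_{N,S}^{\rm DSM}$), integrating in $t$ (for $\ell_S^{\rm DSM}$), or also averaging over $\mathrm{x}\sim p_{\rm data}$ (for $\ell^{\rm DSM}$) shows that each objective is Lipschitz on $\Omega_{\Theta}$. The Lipschitz constant is at most $2B_{\pmb{\lambda}}\sqrt{C_{\bold{s},2}B_{\rm score\_err}}$, times a factor coming from the normalization, which is $h_N N/T=1$ in the discrete case. Compactness then gives a minimizer for each problem. For $\ell^{\rm DSM}$ and $\ell_S^{\rm DSM}$ I must also check measurability of the integrand in $t$, so that the time integrals are well defined. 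This follows from the continuity in $t$ of $\bold{s}_\Theta$ (Lemma~\ref{lemma: property of DNN}) and of the conditional score (A4), together with the continuity of the Gaussian/forward law in $t$ and dominated convergence, using the uniform bound above.

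The main obstacle is a technical point about the moment bounds. Lemma~\ref{lemma: proeprty of dnn state} is stated for $\bold{x}_t$ started from $\mathrm{x}\sim p_{\rm data}$, whereas the finite-sample losses require conditioning on a fixed sample $\mathrm{x}^{(s)}$. I would handle this by noting that the proof of Lemma~\ref{lemma: property of SDE-1} only uses $\mathbb{E}\|\bold{x}_0\|_2^2\le R^2$. That bound holds for any deterministic initial point in $B_R(0)$, so the constants $B_{\bold{s},2}$, $C_{\bold{s},2}$ and $B_{\rm score\_err}$ apply verbatim to each sample path distribution $\boldsymbol{p}(t,\cdot|\mathrm{x}^{(s)})$.
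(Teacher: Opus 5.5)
Your proposal is correct and takes the same route as the paper. The paper's proof just invokes compactness of $\Omega_{\Theta}$ and continuity of the three losses in $\Theta$ via Lemma~\ref{lemma: property of DNN}, omitting the details. Your Cauchy--Schwarz Lipschitz estimate is essentially the one the paper writes out later as \eqref{equation: lip of l_score_N_S} in the proof of Theorem~\ref{theorem: Convergence of learning problems}(iii).

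One small fix to your measurability remark: $B_{\rm score\_err}$ bounds only expectations, so it cannot serve as a dominating function. Continuity in $t$ instead follows from the same Cauchy--Schwarz argument combined with the $L^2$ time-continuity in Lemma~\ref{lemma: property of SDE-1}, exactly as in the proof of Lemma~\ref{lemma: Convergence from {P}_{N,S} to {P}_{S}}.
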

By Lemma~\ref{lemma: Existence of solutions for learning problems}, we denote $\ell^*_{N,S}$ and $\ell^*$ as the optimal values of problems $(\mathcal{P}_{N,S})$ and $(\mathcal{P})$, respectively, and $\mathcal{O}^*_{N,S}$ and $\mathcal{O}^*$ as their corresponding sets of optimal solutions.
For any $\Theta^*_{N,S} \in \mathcal{O}^*_{N,S}$, we also let 
$$\mathrm{dist}(\Theta^*_{N,S}, \mathcal{O}^*):= \inf_{\Theta \in \mathcal{O}^*} \| \Theta^*_{N,S} - \Theta \|_{\mathcal{E}^L}.$$ 
Noting that $\mathcal{O}^*$ is a compact set, the infimum is reachable.
%%%%%%%%%%%%%%%%%%%%%%%%%%%%%%%%%%%%%%%%%%%%%%%%%%%%%%%%%%%%%%%%
%%%%%%%%%%%%%%%%%%%%%%%%%%%%%%%%%%%%%%%%%%%%%%%%%%%%%%%%%%%%%%%%

The relationship between problems 
$(\mathcal{P}_{N,S})$ and $(\mathcal{P}_{S})$ is characterized by the following lemma.
\begin{lemma}[Discretization error]
Suppose assumptions (A1)-(A6) hold.
Then, for all $\Theta \in \Omega_{\Theta}$,
\begin{equation}
	\begin{aligned}
		\big|\ell_{N,S}^{\rm DSM}(\Theta) - 	\ell_{S}^{\rm DSM}(\Theta) \big| 
		\leq  B_{\rm score\_err}\omega_{\pmb{\lambda}}(h_N) +   B_{\pmb{\lambda}} \sqrt{6B_{\rm score\_err}} \Big[ C_1  \omega_{\pmb{\gamma}}(h_N)  +  C_2 h_N^{1/2} + C_3 h_N \Big],
	\end{aligned}
\end{equation}
where $B_{\rm score\_err}$ is given in (\ref{proof inequality: estimation of difference of scores}), and the constants $C_1, C_2, C_3$ are defined by
\begin{equation}
	\begin{aligned}
		C_1 =& C_{\boldsymbol{p}}   \sqrt{3\big( 1  + B_{\bold{x}, 2} + R^2 \big)}, \ C_2 = \sqrt{2 \mathrm{Lip}_{\bold{x},2} \exp(2L \mathrm{Lip}_{\psi}  B_{\Theta}^2)   + C_{\boldsymbol{p}}^2 \mathrm{Lip}_{\bold{x},2}  } ,  \\
		C_3 = & 2L\mathrm{Lip}_{\psi} \mathrm{Lip}_{\boldsymbol{e}}  B_{\Theta}^2\exp(L \mathrm{Lip}_{\psi}  B_{\Theta}^2). 
	\end{aligned}
	\label{coefficients: C_1_2_3}
\end{equation}
%	with 
%	\begin{equation}
	%		B_{\bold{x}, 2} = (1+R^2) e^{12C_{\bold{f}}^2T}, \ \mathrm{Lip}_{\bold{x}, 2}= 8 C_{\bold{f}}^2 (T + 1) \big[ 1+ \big(1+R^2 \big) e^{12C_{\bold{f}}^2T} \big]   e^{6C_{\bold{f}}^2 T}.
	%		\label{coefficients: B_x2&Lip_x2}
	%	\end{equation}
\label{lemma: Convergence from {P}_{N,S} to {P}_{S}}
\end{lemma}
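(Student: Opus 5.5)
We write both losses as integrals over $[0,T]$ and compare them interval by interval. Set
\[
\phi_s(t) := \mathbb{E}_{\bold{x}_t\sim \boldsymbol{p}(t,\cdot|\mathrm{x}^{(s)})}\big[\|\bold{s}_{\Theta}(t,\bold{x}_t)-\nabla\ln\boldsymbol{p}(t,\bold{x}_t|\mathrm{x}^{(s)})\|_2^2\big].
\]
Then
\[
\ell_{N,S}^{\rm DSM}(\Theta)-\ell_{S}^{\rm DSM}(\Theta)=\frac{1}{TS}\sum_{s=1}^S\sum_{k=0}^{N-1}\int_{t_N^k}^{t_N^{k+1}}\big[\pmb{\lambda}(t_N^k)\phi_s(t_N^k)-\pmb{\lambda}(t)\phi_s(t)\big]\,\mathrm{d}t .
\]
We split the integrand as
\[
[\pmb{\lambda}(t_N^k)-\pmb{\lambda}(t)]\,\phi_s(t_N^k)+\pmb{\lambda}(t)\,[\phi_s(t_N^k)-\phi_s(t)].
\]
For the first term we use $|\pmb{\lambda}(t_N^k)-\pmb{\lambda}(t)|\le\omega_{\pmb{\lambda}}(h_N)$ and $\phi_s\le B_{\rm score\_err}$. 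The latter is Lemma~\ref{lemma: proeprty of dnn state}, applied with $\mathrm{x}=\mathrm{x}^{(s)}$ fixed; its proof only needs $\|\mathrm{x}^{(s)}\|_2\le R$, which (A1) gives. After averaging, $\frac1T\sum_k\int\mathrm{d}t$, this yields exactly $B_{\rm score\_err}\,\omega_{\pmb{\lambda}}(h_N)$.

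For the second term we couple the two times along one forward path started at $\mathrm{x}^{(s)}$. Write $a=\bold{s}_\Theta(t_N^k,\bold{x}_{t_N^k})-\nabla\ln\boldsymbol{p}(t_N^k,\bold{x}_{t_N^k}|\mathrm{x}^{(s)})$ and let $b$ be the same expression at time $t$. From $\|a\|^2-\|b\|^2=\langle a-b,a+b\rangle$ and Cauchy--Schwarz,
\[
|\phi_s(t_N^k)-\phi_s(t)|\le \big(\mathbb{E}\|a-b\|^2\big)^{1/2}\big(\mathbb{E}\|a+b\|^2\big)^{1/2},
\qquad \mathbb{E}\|a+b\|^2\le 2\,\mathbb{E}\|a\|^2+2\,\mathbb{E}\|b\|^2\le 4B_{\rm score\_err}.
\]
We then decompose $a-b$ into three pieces, bound $\mathbb{E}\|a-b\|^2\le 3(\cdots)$, and this produces the factor $\sqrt{6B_{\rm score\_err}}$ up to how the constants are grouped. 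The three pieces are:
\begin{itemize}
\item[(i)] The time increment of the network at a fixed state, $\bold{s}_\Theta(t_N^k,\bold{x}_{t})-\bold{s}_\Theta(t,\bold{x}_t)$. By the third estimate of Lemma~\ref{lemma: property of DNN} it is at most $L\mathrm{Lip}_\psi\mathrm{Lip}_{\boldsymbol{e}}B_\Theta^2 h_N\exp(L\mathrm{Lip}_\psi B_\Theta^2)$, which gives the $C_3h_N$ term.
\item[(ii)] The state increments: the network part $\bold{s}_\Theta(t_N^k,\bold{x}_{t_N^k})-\bold{s}_\Theta(t_N^k,\bold{x}_t)$ and the score part $\nabla\ln\boldsymbol{p}(t_N^k,\bold{x}_{t_N^k}|\mathrm{x})-\nabla\ln\boldsymbol{p}(t_N^k,\bold{x}_t|\mathrm{x})$. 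These are Lipschitz in the state, with constant $\exp(L\mathrm{Lip}_\psi B_\Theta^2)$ by Lemma~\ref{lemma: property of DNN} and $C_{\boldsymbol{p}}$ by (A4). Their mean square is controlled by $\mathbb{E}\|\bold{x}_{t_N^k}-\bold{x}_t\|_2^2\le\mathrm{Lip}_{\bold{x},2}h_N$ from Lemma~\ref{lemma: property of SDE-1}, which gives the $C_2h_N^{1/2}$ term.
\item[(iii)] The time increment of the conditional score at a fixed state, $\nabla\ln\boldsymbol{p}(t_N^k,\bold{x}_t|\mathrm{x})-\nabla\ln\boldsymbol{p}(t,\bold{x}_t|\mathrm{x})$. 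By (A4) it is bounded by $C_{\boldsymbol{p}}(1+\|\bold{x}_t\|_2+R)\,\omega_{\pmb{\gamma}}(h_N)$, so its mean square is at most $3C_{\boldsymbol{p}}^2(1+B_{\bold{x},2}+R^2)\,\omega_{\pmb{\gamma}}^2(h_N)$. This gives the $C_1\omega_{\pmb{\gamma}}(h_N)$ term.
\end{itemize}
We then use $\sqrt{x+y+z}\le\sqrt x+\sqrt y+\sqrt z$ and $\pmb{\lambda}\le B_{\pmb{\lambda}}$, and average over $k$ and $s$.

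The step most likely to cause trouble is the moment control in (ii) and (iii). Lemma~\ref{lemma: property of SDE-1} is stated with $\mathrm{x}\sim p_{\rm data}$, whereas here the path starts at a deterministic $\mathrm{x}^{(s)}$. We will argue that the proof of that lemma uses only $\|\mathrm{x}\|_2\le R$, so $B_{\bold{x},2}$ and $\mathrm{Lip}_{\bold{x},2}$ hold conditionally on the starting point as well. The remaining work is bookkeeping: grouping the squared Lipschitz factors into $C_2$ (the $2\mathrm{Lip}_{\bold{x},2}\exp(2L\mathrm{Lip}_\psi B_\Theta^2)$ and $C_{\boldsymbol{p}}^2\mathrm{Lip}_{\bold{x},2}$ pieces) and tracking the numerical factors that turn $4B_{\rm score\_err}$ and the three-term split into the stated $\sqrt{6B_{\rm score\_err}}$.
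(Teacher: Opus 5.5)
Your route is the paper's route: the same split of the integrand into a $\boldsymbol{\lambda}$-oscillation term and a $\phi_s$-increment term, and the same Cauchy--Schwarz bound on $\|a\|^2-\|b\|^2$ along a single forward path from $\mathrm{x}^{(s)}$. The three error sources are also the same. Treating the network increment in two steps, instead of through the joint estimate of Lemma~\ref{lemma: property of DNN}, is cosmetic. Your concern about the deterministic starting point is justified; the paper uses the conditional moment bounds tacitly.

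The genuine gap is the constant bookkeeping. It matters because the statement fixes $C_1,C_2,C_3$ and the prefactor $\sqrt{6B_{\rm score\_err}}$. From $(\mathbb{E}\|a+b\|^2)^{1/2}\le 2\sqrt{B_{\rm score\_err}}$ and $\mathbb{E}\|a-b\|^2\le 3(\cdots)$ you get the prefactor $2\sqrt{3B_{\rm score\_err}}=\sqrt{12B_{\rm score\_err}}$, and two terms then overshoot:
\begin{itemize}
\item the $C_1\omega_{\boldsymbol{\gamma}}(h_N)$ term is too large by a factor $\sqrt2$;
\item with $E:=\exp(L\mathrm{Lip}_{\psi}B_{\Theta}^2)$, your grouped piece (ii) puts the coefficient $2\sqrt3\,(E+C_{\boldsymbol{p}})\sqrt{\mathrm{Lip}_{\mathbf{x},2}}$ in front of $\sqrt{B_{\rm score\_err}}\,h_N^{1/2}$, which exceeds $\sqrt6\,C_2$.
\end{itemize}
No regrouping repairs this, because the loss sits in the sum-of-squares step itself. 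The paper reaches $\sqrt{6B_{\rm score\_err}}$ only by asserting $(\mathbb{E}(\|a\|+\|b\|)^2)^{1/2}\le\sqrt{2B_{\rm score\_err}}$. That bound is too small by exactly this $\sqrt2$, so its grouping cannot be copied either.

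The fix is to replace the $3(\cdots)$ bound by Minkowski's inequality in $L^2$. Using your own bounds for (i)--(iii),
\[
(\mathbb{E}\|a-b\|^2)^{1/2}\le\|(\mathrm{i})\|_{L^2}+\|(\mathrm{ii})\|_{L^2}+\|(\mathrm{iii})\|_{L^2}\le\tfrac12 C_3h_N+(E+C_{\boldsymbol{p}})\sqrt{\mathrm{Lip}_{\mathbf{x},2}}\,h_N^{1/2}+C_1\omega_{\boldsymbol{\gamma}}(h_N).
\]
Multiply this by $2\sqrt{B_{\rm score\_err}}$ and compare term by term with $\sqrt{6B_{\rm score\_err}}\,[C_1\omega_{\boldsymbol{\gamma}}(h_N)+C_2h_N^{1/2}+C_3h_N]$:
\begin{itemize}
\item the $C_1$ and $C_3$ terms are dominated because $2\le\sqrt6$;
\item the $h_N^{1/2}$ term is dominated because $4(E+C_{\boldsymbol{p}})^2\le 6(2E^2+C_{\boldsymbol{p}}^2)$, the difference being $2(2E-C_{\boldsymbol{p}})^2\ge 0$; this is $2(E+C_{\boldsymbol{p}})\sqrt{\mathrm{Lip}_{\mathbf{x},2}}\le\sqrt6\,C_2$.
\end{itemize}
Multiplying by $B_{\boldsymbol{\lambda}}$, averaging over $k$ and $s$, and adding the $B_{\rm score\_err}\,\omega_{\boldsymbol{\lambda}}(h_N)$ term then gives exactly the stated bound.
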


%%%%%%%%%%%%%%%%%%%%%%%%%%%%%%%%%%%%%%%%%%%%%%%%%%%%%%%%%%%%%%%%%
%%%%%%%%%%%%%%%%%%%%%%%%%%%%%%%%%%%%%%%%%%%%%%%%%%%%%%%%%%%%%%%%%
A central objective in studying SDMs is to train a neural network $\boldsymbol{s}_{\Theta}$ to approximate the unknown score function $\nabla \ln \boldsymbol{p}$. To formalize the learning tasks, we introduce the  population and empirical risks at time $t \in [0,T]$ by
\begin{equation}
\begin{aligned}
	\mathfrak{R}_{p_{\rm data}} (\bold{s}_{\Theta}(t, {\bold{x}}_t)) & = \mathbb{E}_{\mathrm{x} \sim p_{\rm data}} \! \left[\mathbb{E}_{\bold{x}_t \sim \boldsymbol{p}(t, \cdot|\mathrm{x} )}\!\left[\left\|\bold{s}_{\Theta}(t, \bold{x}_t) -\nabla \ln \boldsymbol{p}(t, \bold{x}_t|\mathrm{x})\right\|_2^2\right]\right]
	,\\
	%%%%%%%%%%%%%%%%%%%%%%%%%%%%%%%%%%%%%%%%%%%%%%%%%%%%%%%%%%%
	%%%%%%%%%%%%%%%%%%%%%%%%%%%%%%%%%%%%%%%%%%%%%%%%%%%%%%%%%%%
	\hat{\mathfrak{R}}_{\mathcal{S}} (\bold{s}_{\Theta}(t, {\bold{x}}_t)) & = \frac{1}{S} \sum_{s=1}^{S} \mathbb{E}_{\bold{x}_t \sim \boldsymbol{p}(t, \cdot|\mathrm{x}^{(s)})} \left[\big\|\bold{s}_{\Theta}(t, {\bold{x}}_t) -\nabla \ln \boldsymbol{p}(t, \bold{x}_t|\mathrm{x}^{(s)})\big\|_2^2\right].
\end{aligned}
\label{equation: definition of training loss}
\end{equation}
%%%%%%%%%%%%%%%%%%%%%%%%%%%%%%%%%%%%%%%%%%%%%%%%%%%%%%%%
%%%%%%%%%%%%%%%%%%%%%%%%%%%%%%%%%%%%%%%%%%%%%%%%%%%%%%%%

Using the above notations, we have the following compact form of the loss functions 
$$
\ell^{\rm DSM} (\Theta) =\frac{1}{T} \int_{0}^{T} \pmb{\lambda}(t)  \mathfrak{R}_{p_{\rm data}} (\bold{s}_{\Theta}(t, {\bold{x}}_t))  \mathrm{d} t,
\quad
\ell^{\rm DSM}_{S}(\Theta) = \frac{1}{T} \int_{0}^{T}  \pmb{\lambda}(t) 	\hat{\mathfrak{R}}_{\mathcal{S}} (\bold{s}_{\Theta}(t, {\bold{x}}_t))  \mathrm{d} t.
$$
The following lemma provides bound for the generalization gap $\big| \mathfrak{R}_{p_{\rm data}} (\bold{s}_{\Theta}(t, {\bold{x}}_t)) - \hat{\mathfrak{R}}_{\mathcal{S}} (\bold{s}_{\Theta}(t, {\bold{x}}_t)) \big|$.
\begin{lemma}[Generalization error bound for score functions]
\label{lemma: generalization error of score functions}
Suppose assumptions (A1)-(A6) hold. For all $t\in[0,T]$ and any $\delta\in(0,1)$, with probability at least $1-\delta$, every 
$\bold{s}_{\Theta}$ defined by (\ref{equation: ResNets}) with $\Theta \in \Omega_{\Theta}$ satisfies
\begin{equation}
	\begin{aligned}
		\big| \mathfrak{R}_{p_{\rm data}} (\bold{s}_{\Theta}(t, {\bold{x}}_t)) - \hat{\mathfrak{R}}_{\mathcal{S}} (\bold{s}_{\Theta}(t, {\bold{x}}_t)) \big| \leq  \frac{C_4}{\sqrt{S}} 	 + 3 B_{\rm score\_err} \sqrt{\frac{2 \ln (4/\delta)}{S}}.
	\end{aligned}
	\label{equation: uniform generalization error bound for score functions}
\end{equation}
Here $B_{\rm score\_err}$ is defined in (\ref{proof inequality: estimation of difference of scores}), and
\begin{equation}
	\begin{aligned}
		C_4  = B_{\rm score\_err} {4\sqrt{2Ln(2n_{\rm d}+n_{\boldsymbol{e}})}}  \Psi(8B_{\Theta}\sqrt{ C_{\bold{s},2} /B_{\rm score\_err}}), 
	\end{aligned}
	\label{coefficients: C_4}
\end{equation}
with $	\Psi(u)= \sqrt{\ln(1 + u) + u \big(\ln(1+ u) - \ln(u)\big)}$.
\end{lemma}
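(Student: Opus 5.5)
We prove the bound with the standard uniform-deviation machinery: symmetrization, a covering-number (Dudley-type) bound on the Rademacher complexity, and McDiarmid's inequality. Fix $t\in[0,T]$ and define, for each data point $\mathrm{x}$, the per-sample loss
$$\phi_\Theta(\mathrm{x}) := \mathbb{E}_{\bold{x}_t\sim\boldsymbol{p}(t,\cdot|\mathrm{x})}\big[\|\bold{s}_\Theta(t,\bold{x}_t)-\nabla\ln\boldsymbol{p}(t,\bold{x}_t|\mathrm{x})\|_2^2\big].$$
Then $\mathfrak{R}_{p_{\rm data}}=\mathbb{E}_{\mathrm{x}}\phi_\Theta(\mathrm{x})$ and $\hat{\mathfrak{R}}_{\mathcal{S}}=\frac1S\sum_s\phi_\Theta(\mathrm{x}^{(s)})$. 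The randomness that matters is only over the i.i.d.\ samples $\mathrm{x}^{(s)}\in B_R(0)$, since the inner expectation is analytic.

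\textbf{Step 1: boundedness.} We first show that $0\le\phi_\Theta(\mathrm{x})\le B_{\rm score\_err}$ uniformly in $\Theta\in\Omega_\Theta$ and $\mathrm{x}\in B_R(0)$. This follows by repeating the computation behind Lemma~\ref{lemma: proeprty of dnn state} pointwise in $\mathrm{x}$ (the constant there uses only $\|\mathrm{x}\|_2\le R$). With boundedness in hand, McDiarmid's inequality applied to $\sup_\Theta|\mathfrak{R}-\hat{\mathfrak{R}}|$ has bounded differences $B_{\rm score\_err}/S$. Combined with standard symmetrization, this gives, with probability at least $1-\delta$,
$$\sup_\Theta|\mathfrak{R}-\hat{\mathfrak{R}}|\le 2\,\mathrm{Rad}_S(\Phi)+3B_{\rm score\_err}\sqrt{2\ln(4/\delta)/S},$$
where $\Phi=\{\phi_\Theta\}$. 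The factor $3$ and the $\ln(4/\delta)$ come from using the empirical Rademacher complexity and a union bound over the two one-sided deviations and the concentration of the empirical complexity.

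\textbf{Step 2: Lipschitz dependence on parameters.} Expand the difference of squares and apply Cauchy--Schwarz together with Lemma~\ref{lemma: property of DNN}:
$$|\phi_\Theta-\phi_{\tilde\Theta}|\le \big(\mathbb{E}\|\bold{s}_\Theta-\bold{s}_{\tilde\Theta}\|^2\big)^{1/2}\big(\mathbb{E}\|\bold{s}_\Theta+\bold{s}_{\tilde\Theta}-2\nabla\ln\boldsymbol{p}\|^2\big)^{1/2}\le 2\sqrt{C_{\bold{s},2}B_{\rm score\_err}}\,\|\Theta-\tilde\Theta\|_{\mathcal{E}^L}.$$
This uses $\mathbb{E}[C_{\bold{s}}^2(\bold{x}_t)]\le C_{\bold{s},2}$ from Lemma~\ref{lemma: proeprty of dnn state}. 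Consequently, a $\varepsilon$-cover of $\Omega_\Theta$ in $\|\cdot\|_{\mathcal{E}^L}$ induces a $2\sqrt{C_{\bold{s},2}B_{\rm score\_err}}\,\varepsilon$-cover of $\Phi$ in sup norm.

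\textbf{Step 3: covering number and Dudley's integral.} The parameter ball $\Omega_\Theta$ lies in a Euclidean space of dimension $D=Ln(2n_{\rm d}+n_{\boldsymbol{e}})$, with each block of spectral (hence controlled) norm at most $B_\Theta$. A volumetric bound gives $\ln\mathcal{N}(\varepsilon)\le D\ln(1+2B_\Theta/\varepsilon)$, after passing between spectral and Frobenius norms, or covering blockwise. Dudley's entropy integral then bounds the complexity by
$$\mathrm{Rad}_S(\Phi)\lesssim \frac{1}{\sqrt S}\int_0^{B_{\rm score\_err}}\sqrt{D\ln\!\big(1+c\,B_\Theta\sqrt{C_{\bold{s},2}B_{\rm score\_err}}/\varepsilon\big)}\,\mathrm{d}\varepsilon.$$
Substituting $\varepsilon=B_{\rm score\_err}v$ and using the closed-form bound
$$\int_0^1\sqrt{\ln(1+u/v)}\,\mathrm{d}v\le\sqrt{\ln(1+u)+u(\ln(1+u)-\ln u)},$$
which follows from Jensen's inequality and the exact antiderivative of $\ln(1+u/v)$, yields the factor $\Psi(u)$ with $u=8B_\Theta\sqrt{C_{\bold{s},2}/B_{\rm score\_err}}$. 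Collecting numerical constants gives $C_4$.

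The main obstacle is Step 3. The constants inside $\Psi$ have to be tracked carefully, and the covering must be handled consistently with the max-of-spectral-norms parameter norm. I would cover each matrix block in Frobenius norm, which dominates the spectral norm, and accept the dimension count $D$. A secondary subtlety is that $\Theta$-Lipschitzness is only obtained in expectation over $\bold{x}_t$. That suffices, because $\phi_\Theta$ already integrates out $\bold{x}_t$, so the sup-norm cover over $\mathrm{x}\in B_R(0)$ holds with the $R$-dependent constants absorbed in $C_{\bold{s},2}$ and $B_{\rm score\_err}$.
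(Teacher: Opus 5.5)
Your proposal follows essentially the same route as the paper: McDiarmid applied to the two one-sided gaps and to the empirical Rademacher complexity (hence the factor $3$), then symmetrization, then Dudley's entropy integral. The covering there is induced by the parameter-Lipschitz bound $2\sqrt{C_{\bold{s},2}B_{\rm score\_err}}\,\|\Theta-\tilde{\Theta}\|_{\mathcal{E}^L}$, which is identical to the paper's; the remaining ingredients are a volumetric covering bound in dimension $Ln(2n_{\rm d}+n_{\boldsymbol{e}})$ and the estimate $\int_0^1\sqrt{\ln(1+u/v)}\,\mathrm{d}v\le\Psi(u)$, which the paper imports from Schnoor et al.\ and you correctly derive via Jensen.

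The only adjustment is constant bookkeeping, so that you land exactly on $C_4$. First, apply the volumetric bound $(1+2B_\Theta/\varepsilon)^{Ln(2n_{\rm d}+n_{\boldsymbol{e}})}$ directly in the norm $\|\cdot\|_{\mathcal{E}^L}$; it holds for any norm, whereas covering each block in Frobenius norm, as you propose, puts an extra square-root-of-rank factor into the argument of $\Psi$. Second, use Dudley in the form $\frac{4\sqrt{2}}{S}\int_0^{\Delta/2}$ with $\Delta\le\sqrt{S}\,B_{\rm score\_err}$; the upper limit $\Delta/2$, not your $B_{\rm score\_err}$ in normalized units, is what produces the factor $8$ inside $\Psi$.
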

%\begin{proof}
%	The proof is given in Section~\ref{}
%\end{proof}
%
%\begin{remark}
%	Lemma~\ref{lemma: generalization error of score functions} provides a uniform generalization bound of order $O(S^{-1/2})$. 
%	 This theorem guarantees that, with high probability, the discrepancy between empirical and population risks is uniformly controlled across the entire hypothesis class of score networks.
%\end{remark} 

%%%%%%%%%%%%%%%%%%%%%%%%%%%%%%%%%%%%%%%%%%%%%%%%%%%%%%%%%%%%%%%%%%
%%%%%%%%%%%%%%%%%%%%%%%%%%%%%%%%%%%%%%%%%%%%%%%%%%%%%%%%%%%%%%%%%%
%%%%%%%%%%%%%%%%%%%%%%%%%Next Theorem%%%%%%%%%%%%%%%%%%%%%%%%%%%%%
%In probability theory, an event is said to happen almost surely (sometimes abbreviated as a.s.) if it happens with probability 1 (with respect to the probability measure)
We are now ready to state the result characterizing the generalization and convergence properties of SDMs from the learning-problem perspective.
\begin{theorem}[generalization and convergence properties of learning problems]
\label{theorem: Convergence of learning problems}
Assume that assumptions (A1)-(A6) hold, and let $\Theta_{N,S}^*$ be an optimal solution of problem ($P_{N,S}$). Then: 
\begin{itemize}
	\item[(i)] For every $\delta \in (0,1)$, with probability at least $1-\delta$, the following bound holds uniformly for all $\Theta \in \Omega_{\Theta}$,
	\begin{equation}
		\begin{aligned}
			\big|\ell^{\rm DSM}_{N,S}(\Theta) \! - \! \ell^{\rm DSM}(\Theta) \big| \!\leq &    B_{\pmb{\lambda}} \! \sqrt{6B_{\rm score\_err}} \Big[ C_1  \omega_{\pmb{\gamma}}(h_N)  \!+\!  C_2 h_N^{1/2} \!+\! C_3 h_N \Big] \\
			& \ + \!    B_{\pmb{\lambda}}B_{\rm score\_err}  \, \omega_{\pmb{\lambda}}(h_N) + \frac{B_{\pmb{\lambda}}C_4}{\sqrt{S}} + 3B_{\pmb{\lambda}}B_{\rm score\_err}\sqrt{\frac{2 \ln (4/\delta)}{S}},
		\end{aligned}
		\label{equation: difference of losses}
	\end{equation}
	where $C_1, C_2, C_3$ and $C_4$ are defined in (\ref{coefficients: C_1_2_3}) and (\ref{coefficients: C_4}), respectively.
	
	\item[(ii)] The optimal values $\ell^*_{N,S}$ converge in probability to $\ell^*$,  as $(N,S)\to(\infty,\infty)$. Moreover, for any $\Theta^*_{N,S} \in \mathcal{O}^*_{N,S}$, $\mathrm{dist}(\Theta^*_{N,S}, \mathcal{O}^*)$ converges in probability to $0$,  as $(N,S)\to(\infty,\infty)$. 
	
	\item[(iii)] There exist subsequences $\{\ell^*_{N_k, S_k}\}_{k}$ and the associated optimal solutions $\{\Theta^*_{N_k, S_k}\}_{k}$ such that $\ell^*_{N_k, S_k} \! \to \! \ell^*$ almost surely, and every cluster point $\Theta^*$ of $\{\Theta^*_{N_k, S_k}\}_{k}$ is almost surely an optimal solution of problem $(\mathcal{P})$ .
\end{itemize}
\label{theorem: convergence of dis-time-finite problem}
\end{theorem}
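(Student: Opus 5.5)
For part (i) we use the triangle inequality with the auxiliary continuous-time, finite-sample loss $\ell^{\rm DSM}_S$ as the intermediate:
\[
\big|\ell^{\rm DSM}_{N,S}(\Theta)-\ell^{\rm DSM}(\Theta)\big|\le \big|\ell^{\rm DSM}_{N,S}(\Theta)-\ell^{\rm DSM}_{S}(\Theta)\big|+\big|\ell^{\rm DSM}_{S}(\Theta)-\ell^{\rm DSM}(\Theta)\big|.
\]
The first term is bounded uniformly in $\Theta\in\Omega_\Theta$ and deterministically by Lemma~\ref{lemma: Convergence from {P}_{N,S} to {P}_{S}}. This gives the $\omega_{\pmb\lambda}$, $\omega_{\pmb\gamma}$, $h_N^{1/2}$ and $h_N$ contributions.

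For the second term we use the compact forms of the two losses:
\[
\big|\ell^{\rm DSM}_{S}(\Theta)-\ell^{\rm DSM}(\Theta)\big|\le \frac1T\int_0^T|\pmb\lambda(t)|\,\big|\mathfrak R_{p_{\rm data}}-\hat{\mathfrak R}_{\mathcal S}\big|\,dt .
\]
We then insert the bound of Lemma~\ref{lemma: generalization error of score functions}. Two points need care here. The first is that the lemma is stated for each fixed $t$, whereas the integral needs the bound on one event for all $t$ simultaneously. Since the right-hand side of \eqref{equation: uniform generalization error bound for score functions} does not depend on $t$, we try to make the event $t$-uniform. We would do this by viewing $(t,\Theta)\mapsto$ (the loss) as a single function class over the augmented compact parameter set $[0,T]\times\Omega_\Theta$, and rerunning the Rademacher/covering argument behind that lemma. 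Alternatively, we would use Lipschitz continuity in $t$ from Lemma~\ref{lemma: property of DNN} and (A4) together with a finite $t$-net, absorbing the net error. This uniformity in $t$ is the main technical obstacle. Once it is in place, integrating gives the factor $B_{\pmb\lambda}$ and yields \eqref{equation: difference of losses}. The extra factor $B_{\pmb\lambda}$ on the $\omega_{\pmb\lambda}$ term is harmless, since it only weakens the bound when $B_{\pmb\lambda}\ge1$; otherwise we keep the constant from the lemma.

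For part (ii), let $\epsilon_{N,S,\delta}$ denote the right-hand side of \eqref{equation: difference of losses}. This quantity tends to $0$ as $(N,S)\to\infty$ for fixed $\delta$, because moduli of continuity vanish at $0$. On the good event we have $\sup_\Theta|\ell_{N,S}-\ell|\le\epsilon$, and hence $|\ell^*_{N,S}-\ell^*|\le\epsilon$ by the standard inf-comparison. Given $\eta>0$, we choose $\delta$ small and then $N,S$ large so that $\epsilon<\eta$. This shows $P(|\ell^*_{N,S}-\ell^*|>\eta)\le\delta$, which is convergence in probability.

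For the minimizers we first note that $\ell^{\rm DSM}$ is continuous on the compact set $\Omega_\Theta$. This follows from the Lipschitz-in-$\Theta$ bound of Lemma~\ref{lemma: property of DNN}, the moment bound $C_{\bold s,2}$, and Cauchy--Schwarz. Fix $\eta>0$. The set $\{\Theta:\mathrm{dist}(\Theta,\mathcal O^*)\ge\eta\}$ is compact, so $\ell$ attains its minimum there, and that minimum is $\ell^*+\kappa(\eta)$ with $\kappa(\eta)>0$. On the good event,
\[
\ell(\Theta^*_{N,S})\le \ell_{N,S}(\Theta^*_{N,S})+\epsilon=\ell^*_{N,S}+\epsilon\le \ell^*+2\epsilon .
\]
Whenever $2\epsilon<\kappa(\eta)$, this forces $\mathrm{dist}(\Theta^*_{N,S},\mathcal O^*)<\eta$ with probability at least $1-\delta$.

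For part (iii), convergence in probability implies almost sure convergence along a subsequence. Concretely, we pick $(N_k,S_k)$ with $P(|\ell^*_{N_k,S_k}-\ell^*|>2^{-k})\le 2^{-k}$ and apply Borel--Cantelli. We choose the subsequence so that, in addition, $\sup_\Theta|\ell_{N_k,S_k}-\ell|\to0$ almost surely; this is done by taking $\delta_k=2^{-k}$ in (i) and again applying Borel--Cantelli. Then for a cluster point $\Theta^*$, continuity of $\ell$ together with uniform convergence gives
\[
\ell(\Theta^*)=\lim\ell(\Theta^*_{N_{k_j},S_{k_j}})=\lim\ell^*_{N_{k_j},S_{k_j}}=\ell^* \quad\text{a.s.},
\]
so $\Theta^*\in\mathcal O^*$ almost surely. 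Compactness of $\Omega_\Theta$ guarantees that cluster points exist.
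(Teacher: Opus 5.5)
Your decomposition of (i) through $\ell^{\rm DSM}_S$, your proof of (ii), and your Borel--Cantelli subsequence in (iii) are the paper's arguments. In (iii) the paper uses the $(N,S)$-uniform Lipschitz bound \eqref{equation: lip of l_score_N_S} where you use continuity of $\ell^{\rm DSM}$; either works.

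You are right that the paper's proof of (i) has a hole. It integrates the conclusion of Lemma~\ref{lemma: generalization error of score functions} over $t\in[0,T]$, but that lemma is proved for one fixed $t$ and its exceptional event depends on $t$. Your caveat about the $\omega_{\pmb{\lambda}}$ term is also a real mismatch inherited from the paper: it simply ``combines'' the discretization lemma's $B_{\rm score\_err}\,\omega_{\pmb{\lambda}}(h_N)$ into the displayed $B_{\pmb{\lambda}}B_{\rm score\_err}\,\omega_{\pmb{\lambda}}(h_N)$.

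\textbf{The gap is in your repair of the $t$-uniformity.} Neither of your routes yields (i) with the stated right-hand side.
\begin{itemize}
\item The map $t\mapsto f_{\Theta,t}(\mathrm{x})$ of \eqref{equation: definition of hypothesis-set} is not Lipschitz. (A4) only gives the modulus $\omega_{\pmb{\gamma}}$, and Lemma~\ref{lemma: property of SDE-1} makes the forward states only $\tfrac12$-H\"older in $L^2$. The available modulus is therefore of the form $\sqrt{6B_{\rm score\_err}}\,[C_1\omega_{\pmb{\gamma}}(\eta)+C_2\eta^{1/2}+C_3\eta]$.
\item A $t$-net of mesh $\eta$ thus adds twice this quantity. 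The union bound over its $M\approx T/\eta$ points also turns $\ln(4/\delta)$ into $\ln(4M/\delta)$.
\item The augmented class over $[0,T]\times\Omega_\Theta$ enlarges the covering numbers and so changes $C_4$. With a merely general modulus $\omega_{\pmb{\gamma}}$, the entropy integral in the $t$-direction is not even guaranteed to be finite.
\end{itemize}
These weaker bounds still vanish as $(N,S)\to(\infty,\infty)$, so your (ii) and (iii) survive. But (i) as displayed does not follow.

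\textbf{The missing idea} is that uniformity in $t$ is unnecessary once the time integral is placed inside the hypothesis class. Put $F_\Theta(\mathrm{x}):=\frac{1}{T}\int_0^T\pmb{\lambda}(t)\,f_{\Theta,t}(\mathrm{x})\,\mathrm{d}t$. Then $\ell^{\rm DSM}_S(\Theta)=\frac{1}{S}\sum_{s=1}^S F_\Theta(\mathrm{x}^{(s)})$ and $\ell^{\rm DSM}(\Theta)=\mathbb{E}_{\mathrm{x}\sim p_{\rm data}}F_\Theta(\mathrm{x})$. Both steps of the proof of Lemma~\ref{lemma: generalization error of score functions} (McDiarmid plus symmetrization, then Dudley with the parameter covering) can be rerun verbatim for $\{F_\Theta\}_{\Theta\in\Omega_\Theta}$. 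We have $0\le F_\Theta\le B_{\pmb{\lambda}}B_{\rm score\_err}$ and $|F_\Theta-F_{\tilde\Theta}|\le 2B_{\pmb{\lambda}}\sqrt{B_{\rm score\_err}C_{\mathbf{s},2}}\,\|\Theta-\tilde\Theta\|_{\mathcal{E}^L}$. Hence the bounded-difference constants, the diameter and the Lipschitz constant each gain exactly one factor $B_{\pmb{\lambda}}$, while the argument $8B_\Theta\sqrt{C_{\mathbf{s},2}/B_{\rm score\_err}}$ of $\Psi$ is unchanged. On a single event of probability at least $1-\delta$ this gives $\sup_{\Theta\in\Omega_\Theta}|\ell^{\rm DSM}_S(\Theta)-\ell^{\rm DSM}(\Theta)|\le B_{\pmb{\lambda}}C_4/\sqrt{S}+3B_{\pmb{\lambda}}B_{\rm score\_err}\sqrt{2\ln(4/\delta)/S}$. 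That is exactly the stochastic part of (i).
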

%%%%%%%%%%%%%%%%%%%%%%%%%%%%%%
%\begin{proof}
%	The proof is given in Section~\ref{}
%\end{proof}

\begin{remark}
Assertion (i) of Theorem~\ref{theorem: Convergence of learning problems} establishes a uniform generalization bound of $\mathcal{O}(S^{-1/2})$ for the learned score functions. This result rigorously guarantees that, with high probability, the discrepancy between the empirical and population risks is uniformly controlled across the entire hypothesis class of score networks.
\end{remark}

\begin{remark}
Assertions (ii) and (iii) of Theorem~\ref{theorem: Convergence of learning problems} establishes the asymptotic consistency of the empirical learning problems, supplying a rigorous theoretical justification for training score-based diffusion models using finite data and discrete time steps. This demonstrates that the learned score function converges to the population-level optimum. In contrast to classical results on the consistency of M-estimators (e.g., van der Vaart \cite[Theorem~5.7]{van2000asymptotic}), which necessitate the uniqueness of the population minimizer, our analysis dispenses with the uniqueness assumption and is specifically tailored to diffusion models. Furthermore, unlike Li et al.~\cite{li2023generalization}, who analyze generalization bounded by sample size and network width strictly under a linear score-network regime, Theorem~\ref{theorem: convergence of dis-time-finite problem} accommodates general ResNet-type architectures. It simultaneously quantifies the compounding effects of finite sample size and time discretization while establishing the convergence of the learning problems themselves.
\end{remark}

%\textbf{统计学中的一致性 (consistency).} 
%当样本量 $n \to \infty$ 时，估计量 $\hat{\theta}_n$ 收敛到真值 $\theta_0$，称为一致性。常见的两种形式为：
%\begin{itemize}
%	\item \textbf{点一致性 (pointwise consistency):} $\hat{\theta}_n \xrightarrow{P} \theta_0$。
%	\item \textbf{强一致性 (strong consistency):} $\hat{\theta}_n \xrightarrow{a.s.} \theta_0$。
%\end{itemize}
%对于 M-estimators（通过最小化经验风险定义的估计量），van der Vaart 使用一致性来表述“估计量收敛到总体最优解”。

\subsection{Convergence and generalization of SDMs: Sampling perspective}
\label{Sec: 3.3}
In this subsection, we analyze the errors arising from the sampling procedure of SDMs. As discussed in Section~\ref{sec: 1}, once the score function $\bold{s}_{\Theta}$ has been learned from a dataset $\mathcal{S}$, the reverse-time SDE (\ref{equation: reverse-time SDE}) can be approximated by replacing $\nabla \ln p$ with $\bold{s}_{\Theta}$, yielding an approximate process $\{\hat{\bold{x}}_t\}_{t \in [0, T]}$ defined by (\ref{equation: reverse-time generative SDE}).

Let $\mu$ denote the path measure of the true reverse process \eqref{equation: reverse-time SDE}.
For any $\Theta \in \Omega_{\Theta}$, we write
$\nu$ for the path measure of the learned process \eqref{equation: reverse-time generative SDE} starting from $\bar{\bold{x}}_0 \sim {\boldsymbol{p}}(T,\cdot)$;
$\nu_{N,S}$ for the path measure of the discretized process \eqref{equation: Euler-Maruyama scheme} starting from $\check{\bold{x}}_0 \sim {\boldsymbol{p}}(T,\cdot)$;
$\nu_{N,S}^{\pi}$ for the discretized process starting from $\check{\bold{x}}_0 \sim \pi$;
and $\check{\boldsymbol{p}}(t,\cdot)$ for the density of $\check{\bold{x}}_t$.
It is natural to quantify the discrepancy between the exact reverse process and its discretized approximation via their induced path measures.

\begin{lemma}[Discretization error of sampling process] 
\label{lemma: Discretization error of sampling process}
Assume (A1)--(A6), and let $\Theta\in\Omega_\Theta$ be fixed. %Let $\nu$ (resp.\ $\nu_{N,S}$) denote the path measure on $C([0,T];\mathbb{R}^d)$ induced by the continuous-time SDE \eqref{equation: reverse-time generative SDE} (resp.\ its Euler–Maruyama discretization \eqref{equation: Euler-Maruyama scheme}), both started from the same marginal law $\bar{\mathrm{x}}_0\sim p_T$. 
Then, 	
$$
\begin{aligned}
	\operatorname{KL}(\nu \| \nu_{N,S}) 
	\leq  C_5  h_N + C_6 \omega_{\boldsymbol{\alpha}}^2 (h_N)  + C_7 \omega_{\boldsymbol{g}}^2 (h_N)	 ,
\end{aligned}
$$
where the constants $C_5, C_6, C_7$ are defined by
\begin{equation}
	\begin{aligned}
		C_5 = & \frac{2T}{b_{\boldsymbol{g}}^2} \big(2 C_{\bold{f}}^2 \mathrm{Lip}_{\hat{\bold{x}}, 2} + 2 B_{\boldsymbol{g}}^2 \big(1+L^2\mathrm{Lip}^2_{\psi} \mathrm{Lip}^2_{\boldsymbol{e}}  B_{\Theta}^4 \big) \exp(2L \mathrm{Lip}_{\psi} B_{\Theta}^2 ) \big) \big( \mathrm{Lip}_{\hat{\bold{x}}, 2} + h_N  \big), \\
		C_6 = & \frac{2T}{b_{\boldsymbol{g}}^2} 2  C_{\bold{f}}^2 B_{\hat{\bold{x}},2} , \
		C_7 =  \frac{2T}{b_{\boldsymbol{g}}^2} 16B_{\boldsymbol{g}}^2 	\big( B_{\hat{\bold{x}},2} + L^2 \mathrm{Lip}_{\psi}^2 B_{\boldsymbol{e}}^2  B_{\Theta}^{4} \big) \exp(2L \mathrm{Lip}_{\psi}  B_{\Theta}^2).
	\end{aligned}
	\label{coefficients: C_5_6}
\end{equation}
with coefficients $B_{\hat{\bold{x}}, 2}, \mathrm{Lip}_{\hat{\bold{x}}, 2}, C_{\hat{\bold{f}}}$ given in Lemma~\ref{lemma: property of SDE-1},
%	\begin{equation}
	%		\begin{aligned}
		%		B_{\hat{\bold{x}}, 2} &= (1+B_{\bold{x}, 2}) e^{12C_{\hat{\bold{f}}}^2T}, \mathrm{Lip}_{\hat{\bold{x}}, 2}= 8 C_{\hat{\bold{f}}}^2 (T + 1) \big[ 1+ \big(1+ B_{\bold{x}, 2}\big) e^{12C_{\hat{\bold{f}}}^2T} \big]   e^{12C_{\hat{\bold{f}}}^2 T}, \\
		%		C_{\hat{\bold{f}}} &= C_{\bold{f}} +B_{\boldsymbol{g}} ^2 \exp(L \mathrm{Lip}_{\psi}  B_{\Theta}^2) \cdot \max \{1, L \mathrm{Lip}_{\psi} B_{\boldsymbol{e}}  B_{\Theta}^2 \},
		%	\end{aligned}
	%	\label{coefficients: B_hatx2&Lip_hatx2}
	%	\end{equation}
and $\omega_{\boldsymbol{g}}, \omega_{\boldsymbol{\alpha}}$ being the moduli of continuity for $\boldsymbol{g}$ and $\boldsymbol{\alpha}$.
\end{lemma}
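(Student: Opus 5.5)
The plan is to compare the two path measures with Girsanov's theorem. The continuous process $\nu$ and the Euler--Maruyama process $\nu_{N,S}$ share the diffusion coefficient $\boldsymbol{g}(T-t)$ and the initial law $\boldsymbol{p}(T,\cdot)$. The scheme \eqref{equation: Euler-Maruyama scheme} can be read as a continuous-time interpolation with piecewise-frozen drift $\check{\bold{b}}(t,\mathrm{x}_{t_N^k}) = -\bold{f}(T-t_N^k,\mathrm{x}_{t_N^k}) + \boldsymbol{g}^2(T-t_N^k)\bold{s}_{\Theta}(T-t_N^k,\mathrm{x}_{t_N^k})$ for $t\in[t_N^k,t_N^{k+1})$. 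Girsanov (or the chain rule for KL on path space) then gives
\[
\operatorname{KL}(\nu\|\nu_{N,S}) \le \frac{1}{2}\sum_{k=0}^{N-1}\int_{t_N^k}^{t_N^{k+1}} \frac{1}{\boldsymbol{g}^2(T-t)}\,\mathbb{E}\big\|\hat{\bold{b}}(t,\hat{\bold{x}}_t)-\check{\bold{b}}(t,\hat{\bold{x}}_{t_N^k})\big\|_2^2\,\mathrm{d}t,
\]
where the expectation is under $\nu$, i.e.\ along the learned continuous process $\hat{\bold{x}}$, and $\hat{\bold{b}}$ is the drift of \eqref{equation: reverse-time generative SDE}. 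Using $\boldsymbol{g}^2\ge b_{\boldsymbol{g}}^2$ from (A3), it then suffices to bound the integrand uniformly by a multiple of $h_N+\omega_{\boldsymbol{\alpha}}^2(h_N)+\omega_{\boldsymbol{g}}^2(h_N)$ and integrate over $[0,T]$. This produces the common prefactor $2T/b_{\boldsymbol{g}}^2$ seen in $C_5,C_6,C_7$, up to how the factor $\tfrac12$ and the splitting constants are absorbed.

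Next I would split the drift difference into a drift part and a score part:
\[
\big\|\bold{f}(T-t,\hat{\bold{x}}_t)-\bold{f}(T-t_N^k,\hat{\bold{x}}_{t_N^k})\big\|_2 + \big\|\boldsymbol{g}^2(T-t)\bold{s}_{\Theta}(T-t,\hat{\bold{x}}_t)-\boldsymbol{g}^2(T-t_N^k)\bold{s}_{\Theta}(T-t_N^k,\hat{\bold{x}}_{t_N^k})\big\|_2 .
\]
For the $\bold{f}$ part I use (A2). Spatial Lipschitzness gives $C_{\bold{f}}\|\hat{\bold{x}}_t-\hat{\bold{x}}_{t_N^k}\|_2$, whose squared expectation is at most $C_{\bold{f}}^2\mathrm{Lip}_{\hat{\bold{x}},2}h_N$ by Lemma~\ref{lemma: property of SDE-1}. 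Time continuity gives $C_{\bold{f}}\|\hat{\bold{x}}_{t_N^k}\|_2\,\omega_{\boldsymbol{\alpha}}(h_N)$, whose squared expectation is at most $C_{\bold{f}}^2B_{\hat{\bold{x}},2}\omega_{\boldsymbol{\alpha}}^2(h_N)$. This yields $C_6$ and part of $C_5$.

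For the score part I write
\[
\boldsymbol{g}^2(T-t)\big[\bold{s}_{\Theta}(T-t,\hat{\bold{x}}_t)-\bold{s}_{\Theta}(T-t_N^k,\hat{\bold{x}}_{t_N^k})\big] + \big[\boldsymbol{g}^2(T-t)-\boldsymbol{g}^2(T-t_N^k)\big]\bold{s}_{\Theta}(T-t_N^k,\hat{\bold{x}}_{t_N^k}).
\]
The first piece is controlled by the joint Lipschitz bound of Lemma~\ref{lemma: property of DNN}, giving
\[
2B_{\boldsymbol{g}}^4\exp(2L\mathrm{Lip}_\psi B_\Theta^2)\big(\mathrm{Lip}_{\hat{\bold{x}},2}h_N + L^2\mathrm{Lip}_\psi^2\mathrm{Lip}_{\boldsymbol{e}}^2B_\Theta^4h_N^2\big),
\]
which matches the $(\mathrm{Lip}_{\hat{\bold{x}},2}+h_N)$ structure in $C_5$ (using $B_{\boldsymbol{g}}\le C_{\bold{f}}$ where convenient). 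For the second piece, $|\boldsymbol{g}^2(a)-\boldsymbol{g}^2(b)|\le 2B_{\boldsymbol{g}}\omega_{\boldsymbol{g}}(|a-b|)$. I then need $\mathbb{E}\,B_{\bold{s}}(\hat{\bold{x}}_{t_N^k})^2 \le 2\big(B_{\hat{\bold{x}},2}+L^2\mathrm{Lip}_\psi^2B_{\boldsymbol{e}}^2B_\Theta^4\big)\exp(2L\mathrm{Lip}_\psi B_\Theta^2)$. This follows as in Lemma~\ref{lemma: proeprty of dnn state}, but with $B_{\hat{\bold{x}},2}$ from Lemma~\ref{lemma: property of SDE-1} in place of the forward moment bound. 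Together these give $C_7$, with the numerical factor $16$ coming from the repeated $(a+b)^2\le 2a^2+2b^2$ splittings.

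The main obstacle is justifying Girsanov rigorously, in particular Novikov's condition or a localization argument. The drift difference has linear growth, so I would localize with stopping times and pass to the limit using the data-processing and lower-semicontinuity properties of KL, which is standard as in \cite{chen2023sampling}. A secondary subtlety is that the expectation must be taken along the learned process $\hat{\bold{x}}$ started from $\boldsymbol{p}(T,\cdot)$. This is exactly why the moment and increment bounds $B_{\hat{\bold{x}},2}$ and $\mathrm{Lip}_{\hat{\bold{x}},2}$ of Lemma~\ref{lemma: property of SDE-1} are needed, rather than those of the forward process.
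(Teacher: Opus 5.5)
Your proposal follows essentially the same route as the paper's proof. Both use Girsanov with stopping-time localization and the truncation and lower-semicontinuity limit argument of \cite{chen2023sampling}. Both split the drift difference into the $\mathbf{f}$-part, handled by (A2) and the moment and increment bounds for $\hat{\mathbf{x}}$ from Lemma~\ref{lemma: property of SDE-1}, and the score part, handled by Lemma~\ref{lemma: property of DNN} and the $B_{\hat{\mathbf{x}},2}$-version of the bound in Lemma~\ref{lemma: proeprty of dnn state}.

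One caveat on constants concerns the piece $\boldsymbol{g}^2(T-t)\big[\mathbf{s}_{\Theta}(T-t,\hat{\mathbf{x}}_t)-\mathbf{s}_{\Theta}(T-t_N^k,\hat{\mathbf{x}}_{t_N^k})\big]$. Your factor $B_{\boldsymbol{g}}^4$ there is the correct one; the paper writes $B_{\boldsymbol{g}}^2$ at that step. Invoking $B_{\boldsymbol{g}}\le C_{\mathbf{f}}$ does not turn $B_{\boldsymbol{g}}^4$ into the $B_{\boldsymbol{g}}^2$ of the stated $C_5$, so your argument actually yields $C_5$ with $B_{\boldsymbol{g}}^4$ in that place. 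The unused factor $\tfrac12$ from Girsanov covers the difference when $B_{\boldsymbol{g}}^2\le 2$, but not in general.
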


We can now state the main sampling–error estimate.
\begin{theorem}[Sampling error]
\label{theorem: sampling error}
Fix $N, S \in \mathbb{N}$ and take $\pmb{\lambda}(\cdot) = \boldsymbol{g}^2(\cdot)$. Assume that (A1)-(A6) hold and there exists an integrable bounding function $\boldsymbol{h}: [0,T]\times \mathbb{R}^{d} \to \mathbb{R}_{+}$ such that for all $(t, \bold{x}_t ) \in [0,T]\times \mathbb{R}^{d}$,

%	Fix $N, S \in \mathbb{N}$ and take $\pmb{\lambda}(\cdot) = \boldsymbol{g}^2(\cdot)$. Assume that (A1)-(A6) hold and there exists a function $\boldsymbol{h}: [0,T]\times \mathbb{R}^{d}$ such that $\forall (t, \bold{x}_t ) \in  [0,T]\times \mathbb{R}^{d}$ 
\begin{equation}
	\begin{aligned}
		\| \boldsymbol{p}(0,\mathrm{x}) \boldsymbol{p}(t,\bold{x}_t|\mathrm{x})\|_2  \leq \boldsymbol{h}(t, \mathrm{x}), \ 
		\text{ and }	 \int_{\mathrm{x}}  \boldsymbol{h}(t, \mathrm{x})  \mathrm{d} \mathrm{x}< \infty.
	\end{aligned}
	\label{assumption: changable condition}
\end{equation}
%and $\mathbb{E}\Big[\exp\Big(\frac{1}{2} \int_{0}^{T} \|  \nabla \ln \boldsymbol{p}(T-t, \bar{\bold{x}}_t)  - \bold{s}_{\Theta}(T-t, \bar{\bold{x}}_t) \|_2^2 \mathrm{d}t  \Big)\Big] < \infty$ for some $\Theta \in \Omega_{\Theta}$, 
Then for any $\delta \in (0,1)$, with probability at least $1-\delta$,
\begin{equation}
	\begin{aligned}
		& \operatorname{TV}(\check{\boldsymbol{p}}(T,\cdot) \| p_{\rm data}) \\
		\leq  &  	\underbrace{\operatorname{TV}(\pi \| \boldsymbol{p}(T, \cdot))}_{\text{truncation error}} + \underbrace{C_8 \Big(
			\omega_{\boldsymbol{\alpha}} (h_N) + \omega_{\boldsymbol{g}} (h_N) + \sqrt{\omega_{\boldsymbol{g}^2}(h_N)} +  \sqrt{\omega_{\pmb{\gamma}}(h_N)}
			\Big) + C_9 h_N^{1/4}}_{\text{Discretization and score error}} \\
		& \  +  \underbrace{
			\frac{\sqrt{T B_{\pmb{\lambda}} C_4}}{4 S^{1/4}}
			+ \frac{(3 T B_{\pmb{\lambda}} B_{\rm score\_err})^{1/2}}{4}\Big(\frac{2 \ln (4/\delta)}{S}\Big)^{1/4}
		}_{\text{Generalization error}} + \underbrace{\sqrt{ \frac{T}{2}(\ell_{N, S}^{\rm DSM}(\Theta) - \ell_{N, S}^*)} }_{\text{optimization error}} + \sqrt{ \frac{T\ell_{N, S}^*}{2}},
	\end{aligned}
	\label{equation: total sampling error}
\end{equation}
where $\check{\boldsymbol{p}}(T, \cdot)$ is the generated terminal distribution of the Euler–Maruyama process \eqref{equation: Euler-Maruyama scheme}
with parameter $\Theta$, and the coefficients
$$
\begin{aligned}
	C_8 = & \max \left\lbrace  \sqrt{\frac{C_6}{2}}, \sqrt{\frac{C_7}{2}}, \frac{(TB_{\pmb{\lambda}}C_1)^{1/2} \sqrt[4]{6B_{\rm score\_err}}}{2} , \frac{(TB_{\pmb{\lambda}}B_{\rm score\_err})^{1/2}}{2} \right\rbrace \\
	C_9 = & \frac{(TB_{\pmb{\lambda}})^{1/2} \sqrt[4]{6B_{\rm score\_err}}}{4} \Big[\sqrt{C_2}  \!+\! \sqrt{C_3} h_N^{1/4} \Big] + \sqrt{\frac{C_5}{2}}  h_N^{1/4},
\end{aligned} 
$$ 
with $C_1, C_2, C_3$; $C_4, C_5;$ $C_6,C_7$ defined in (\ref{coefficients: C_1_2_3}),  (\ref{coefficients: C_4}) and (\ref{coefficients: C_5_6}), respectively.
\end{theorem}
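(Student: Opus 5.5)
We bound the total variation by a chain of triangle inequalities on path measures, then control each piece by Pinsker's inequality and the results established earlier. Since $\bar{\bold{x}}_T\sim p_{\rm data}$ and $\check{\bold{x}}_T$ is the terminal marginal of $\nu^{\pi}_{N,S}$, and total variation of marginals is dominated by that of path measures (data processing), we write
\begin{equation*}
\operatorname{TV}(\check{\boldsymbol{p}}(T,\cdot)\,\|\,p_{\rm data}) \le \operatorname{TV}(\nu^{\pi}_{N,S}\,\|\,\nu_{N,S}) + \operatorname{TV}(\nu_{N,S}\,\|\,\nu) + \operatorname{TV}(\nu\,\|\,\mu).
\end{equation*}
The first term is at most $\operatorname{TV}(\pi\,\|\,\boldsymbol{p}(T,\cdot))$. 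Both chains are driven by the same Markov kernel and differ only in their initial law, so conditioning on $\check{\bold{x}}_0$ gives this bound; this is the truncation error. For the second term we use Pinsker, $\operatorname{TV}\le\sqrt{\operatorname{KL}/2}$, together with Lemma~\ref{lemma: Discretization error of sampling process}, and then split the square root via $\sqrt{a+b+c}\le\sqrt a+\sqrt b+\sqrt c$. This yields $\sqrt{C_5/2}\,h_N^{1/2}$ (absorbed into $C_9h_N^{1/4}$), $\sqrt{C_6/2}\,\omega_{\boldsymbol\alpha}(h_N)$ and $\sqrt{C_7/2}\,\omega_{\boldsymbol g}(h_N)$, all bounded by $C_8(\cdots)$.

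For the third term, $\mu$ and $\nu$ share initial law and diffusion coefficient. Girsanov's theorem therefore gives
\begin{equation*}
\operatorname{KL}(\mu\|\nu)=\tfrac12\int_0^T \boldsymbol g^2(T-t)\,\mathbb{E}\|\nabla\ln\boldsymbol p(T-t,\bar{\bold x}_t)-\bold s_\Theta(T-t,\bar{\bold x}_t)\|_2^2\,\mathrm dt .
\end{equation*}
With $\pmb\lambda=\boldsymbol g^2$ and $\bar{\bold x}_t\sim\boldsymbol p(T-t,\cdot)$, this equals $\frac T2\,\ell^{\rm ESM}(\Theta)$. We then want to replace ESM by DSM. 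The denoising identity of \cite{vincent2011connection} gives $\ell^{\rm ESM}=\ell^{\rm DSM}-K$ with $K\ge0$, since $K$ is the expected conditional variance of $\nabla\ln\boldsymbol p(t,\bold x_t|\mathrm x)$. Hence $\ell^{\rm ESM}\le\ell^{\rm DSM}$. The integrability hypothesis \eqref{assumption: changable condition} is used precisely here: it justifies interchanging $\nabla$ with the integral over $\mathrm x$ in $\nabla\boldsymbol p(t,\bold x_t)=\int p_{\rm data}(\mathrm x)\nabla\boldsymbol p(t,\bold x_t|\mathrm x)\,\mathrm d\mathrm x$ (dominated convergence), which is the step the identity rests on. Pinsker then gives $\operatorname{TV}(\nu\|\mu)\le\sqrt{\tfrac T4\ell^{\rm DSM}(\Theta)}$.

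It remains to pass from $\ell^{\rm DSM}(\Theta)$ to training quantities. On the probability-$(1-\delta)$ event of Theorem~\ref{theorem: Convergence of learning problems}(i), bound \eqref{equation: difference of losses} gives
\begin{equation*}
\ell^{\rm DSM}(\Theta)\le \ell^{*}_{N,S}+\big(\ell^{\rm DSM}_{N,S}(\Theta)-\ell^*_{N,S}\big)+E_{N,S,\delta},
\end{equation*}
where $E_{N,S,\delta}$ is the right-hand side of \eqref{equation: difference of losses}. We then split the square root term by term. The optimization gap and $\ell^*_{N,S}$ produce the last two terms. The $S^{-1/2}$ pieces produce the $S^{-1/4}$ generalization terms with the stated prefactors. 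The $\omega_{\pmb\gamma}$ and $\omega_{\pmb\lambda}=\omega_{\boldsymbol g^2}$ pieces give $\sqrt{\omega_{\pmb\gamma}},\sqrt{\omega_{\boldsymbol g^2}}$ with coefficients $(TB_{\pmb\lambda}C_1)^{1/2}(6B_{\rm score\_err})^{1/4}/2$ and $(TB_{\pmb\lambda}B_{\rm score\_err})^{1/2}/2$, both inside $C_8$. The $C_2h_N^{1/2}+C_3h_N$ pieces give the first part of $C_9h_N^{1/4}$. The numerical constants (factors $1/2$ versus $1/4$) should be tracked, and a slightly lossy but valid choice of prefactor is acceptable.

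The main obstacle is the Girsanov step. It requires a Novikov-type condition for the score discrepancy, which can be handled by localization together with the linear growth bounds of Lemma~\ref{lemma: property of DNN} and (A4). It also requires the ESM/DSM comparison with the correct sign, which is where \eqref{assumption: changable condition} is consumed. A secondary issue is that $\Theta$ may depend on the data. This is harmless because bound (i) holds uniformly over $\Omega_\Theta$ on the good event.
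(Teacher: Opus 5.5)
Your route is the paper's route. You use the path-measure triangle inequality with data processing for the prior term, and Pinsker plus Lemma~\ref{lemma: Discretization error of sampling process} for the Euler--Maruyama term. You then use Girsanov with localization for $\operatorname{KL}(\mu\|\nu)\le\frac{T}{2}\ell^{\rm ESM}(\Theta)$, and the nonnegative gap $\ell^{\rm DSM}-\ell^{\rm ESM}$, where the domination hypothesis lets you swap $\nabla$ with the $\mathrm{x}$-integral. Finally you apply Theorem~\ref{theorem: Convergence of learning problems}(i) on its good event, which is uniform in $\Theta$. Two small points. First, only the inequality $\operatorname{KL}(\mu\|\nu)\le\frac{T}{2}\ell^{\rm ESM}(\Theta)$ is needed, and only the inequality is proved in the paper; you claim equality. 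Second, the paper passes from the localized measures $\nu^m$ to $\nu$ via truncation maps and lower semicontinuity of KL. Your mention of ``localization'' should include that limiting step, or alternatively a Bene\v{s}-type true-martingale argument based on the linear growth of both drifts.

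The one claim that does not hold is that the square-root splitting reproduces the stated prefactors. From $\operatorname{TV}(\nu\|\mu)\le\sqrt{T\ell^{\rm DSM}(\Theta)/4}$ and Theorem~\ref{theorem: Convergence of learning problems}(i), the generalization terms come out as $\frac{\sqrt{TB_{\pmb{\lambda}}C_4}}{2S^{1/4}}$ and $\frac{(3TB_{\pmb{\lambda}}B_{\rm score\_err})^{1/2}}{2}\big(\frac{2\ln(4/\delta)}{S}\big)^{1/4}$. Likewise the $C_2h_N^{1/2}$ and $C_3h_N$ contributions carry the factor $\frac{(TB_{\pmb{\lambda}})^{1/2}(6B_{\rm score\_err})^{1/4}}{2}$. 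The statement and $C_9$ have $\frac14$ in these places. A ``lossy'' choice of constants can only make them larger, so it cannot close this factor of $2$. The paper's proof has the same slip: its penultimate display carries $\frac12$ in exactly these terms, and the final line switches to $\frac14$ without justification. What your argument, and the paper's, actually establishes is the bound with $\frac12$ in those places. All other terms are fine: the $C_8$ entries match exactly, and you get $\sqrt{\frac{T}{4}(\cdot)}$ for the optimization gap and the $\ell^*_{N,S}$ term, which is indeed dominated by the stated $\sqrt{\frac{T}{2}(\cdot)}$.
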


\begin{remark}
Theorem~\ref{theorem: sampling error} establishes a unified framework that quantitatively bridges the training and sampling phases of diffusion models. While these two stages have traditionally been analyzed in isolation within the literature, our bound shows how they affect each other through simple and clear error terms.
\begin{itemize}
	\item {Prior mismatch:} The term $\operatorname{TV}(\pi \| \boldsymbol{p}(T,\cdot))$ quantifies the mismatch between the forward-time terminal distribution and the prior noise distribution. Under standard contractivity conditions for the forward SDE, this term decays exponentially with respect to the time horizon~$T$ \cite{chen2023sampling, tang2024contractive}.
	\item {Discretization and eegularity:} The quantities governed by $h_N$, $\omega_{\boldsymbol{\alpha}}$, $\omega_{\boldsymbol{g}}$, $\omega_{\boldsymbol{g}^2}$, and $\omega_{\pmb{\gamma}}$ capture the discretization error and the impact of score-smoothness within the reverse SDE. Their dependence on moduli of continuity underscores the critical role of temporal regularity in the forward and reverse coefficients, aligning with established empirical observations \cite{nichol2021improved}.
	\item{Generalization error:} The $\mathcal{O}(S^{-1/4})$ rate stems directly from the generalization bounds of the score estimator. The degradation from the classical $\mathcal{O}(S^{-1/2})$ rate is a mathematically necessary consequence of converting a KL-divergence-type control into a total variation bound.
	\item {Optimization error:} Finally, the terms $\sqrt{\ell_{N,S}^*}$ and $\sqrt{\ell_{N,S}^{\rm DSM}(\Theta)-\ell_{N,S}^*}$ rigorously encapsulate the empirical training error and the optimization gap of the learned score network.
\end{itemize}
\end{remark}

\begin{remark}[Comparison with prior work]
Existing theoretical analyses of diffusion models (e.g., \cite{lee2022convergence, lee2023convergence, chen2023sampling, tang2024contractive}) predominantly assume access to an oracle score function, narrowing their focus exclusively to the discretization error of the reverse SDE or ODE. In contrast, Theorem~\ref{theorem: sampling error} provides an end-to-end performance guarantee. We explicitly quantify how the interplay between the score approximation error, the generalization error (driven by finite sample size~$S$), the optimization error, and the time-discretization error (driven by~$N$) jointly dictates the final generative fidelity. This provides a comprehensive characterization of trained diffusion samplers in practice, moving beyond the idealized oracle-score setting to offer actionable insights into the trade-offs between data volume, network optimization, and numerical discretization.
\end{remark}

%%%%%%%%%%%%%%%%%%%%%%%%%%%%%%%%%%%%%%%%%%%%%%%%%%%%%%%%%%%%%%%%%%%%%%%%%%%%%%%%%%%%%%%%%%%%%%%%%%%%

\section{Proofs}
\label{Sec: 4}
In this section, we provide detailed proofs for the results in Section~\ref{Sec: 3.2} and Section~\ref{Sec: 3.3}.

\subsection{Proof of results in Section~\ref{Sec: 3.2}}

We first prove Lemma~\ref{lemma: Existence of solutions for learning problems}.
\begin{proof}[proof of Lemma~\ref{lemma: Existence of solutions for learning problems}]
The proof is straightforward by the compactness of the learnable parameters, the continuity of the loss functions, and the continuous dependency of the network $\bold{s}_{\Theta}$ on learnable parameters in Lemma~\ref{lemma: property of DNN}. Here, we omit the details.
\end{proof}

Now, we give a detailed proof for Lemma~\ref{lemma: Convergence from {P}_{N,S} to {P}_{S}}.
%%%%%%%%%%%%%%%%%%%%%%%%%%%%%%%%%%%%%%%%%%%%%%%%%%%%%%%%%%%%%%%%%%%%%%%%
\begin{proof}[Proof of Lemma~\ref{lemma: Convergence from {P}_{N,S} to {P}_{S}}]
By the definitions of $\ell_{N,S}^{\rm DSM}$ and $\ell_{S}^{\rm DSM}$ in (\ref{loss: discrete-time-finite-sample loss}) and (\ref{loss: continuous-time-finite-sample loss}), we have 
\begin{align}
	&	| \ell_{N,S}^{\rm DSM}(\Theta) - 	\ell_{S}^{\rm DSM}(\Theta) | \nonumber \\
	%%%%%%%%%%%%%%%%%%%%%%%%%%%%%%%%%%%%%%%%%%
	= &  \Big| \frac{h_N}{TS} \sum_{k=0}^{N \!-\!1}\! \pmb{\lambda}(t_{N}^k)  \!\sum_{s=1}^{S} \!\mathbb{E}_{\bold{x}_{t_{N}^k} \! \sim   \boldsymbol{p}(t_{N}^k, \cdot|\mathrm{x}^{(s)}\!)} \! \left[\big\|\bold{s}_{\Theta}(t_{N}^k, {\bold{x}_{t_{N}^k}} ) \!- \! \nabla \ln \boldsymbol{p}(t_{N}^k \!, \bold{x}_{t_{N}^k}|\mathrm{x}^{(s)})\big\|_2^2\right]  \nonumber \\
	& \qquad \qquad  \quad - \frac{1}{TS} \int_{0}^{T} \pmb{\lambda}(t)  \sum_{s=1}^{S} \mathbb{E}_{\bold{x}_t \sim \boldsymbol{p}(t, \cdot|\mathrm{x}^{(s)})} \left[\big\|\bold{s}_{\Theta}(t, {\bold{x}}_t) -\nabla \ln \boldsymbol{p}(t, \bold{x}_t|\mathrm{x}^{(s)})\big\|_2^2\right] \mathrm{d} t \Big | \label{proof inequality: convergence of time in finite sample 1}
	\\
	%%%%%%%%%%%%%%%%%%%%%%%%%%%%%%%%%%%%%%%%%%%%%%
	%%%%%%%%%%%%%%%%%%%%%%%%%%%%%%%%%%%%%%%%%%%%%%
	\textcolor{red}{\leq} & \frac{1}{TS} \sum_{k=0}^{N-1} \sum_{s=1}^{S}  \int_{t_N^k}^{t_N^{k+1}}  \Big |  \pmb{\lambda}(t_{N}^k) \mathbb{E}_{\bold{x}_{t_{N}^k} \! \sim   \boldsymbol{p}(t_{N}^k, \cdot|\mathrm{x}^{(s)}\!)} \! \left[\big\|\bold{s}_{\Theta}(t_{N}^k, {\bold{x}_{t_{N}^k}}) \!- \! \nabla \ln \boldsymbol{p}(t_{N}^k \!, \bold{x}_{t_{N}^k}|\mathrm{x}^{(s)})\big\|_2^2\right] \nonumber \\
	& \qquad \qquad \qquad \qquad \quad -  \pmb{\lambda}(t) \mathbb{E}_{\bold{x}_t \sim \boldsymbol{p}(t, \cdot|\mathrm{x}^{(s)})} \left[\big\|\bold{s}_{\Theta}(t, {\bold{x}}_t) -\nabla \ln \boldsymbol{p}(t, \bold{x}_t|\mathrm{x}^{(s)})\big\|_2^2\right]  \Big | \mathrm{d}t \nonumber \\
	%%%%%%%%%%%%%%%%%%%%%%%%%%%%%%%%%%%%%%%%%%%%%%
	%%%%%%%%%%%%%%%%%%%%%%%%%%%%%%%%%%%%%%%%%%%%%%
	\leq & \frac{1}{TS}  \sum_{k=0}^{N-1} \! \sum_{s=1}^{S} \int_{t_N^k}^{t_N^{k+1}}  \! \Big | \pmb{\lambda}(t_{N}^k) \! - \! \pmb{\lambda}(t) \Big | 
	\mathbb{E} \left[\big\|\bold{s}_{\Theta}(t_{N}^k, {\bold{x}_{t_{N}^k}}) \!- \! \nabla \ln \boldsymbol{p}(t_{N}^k \!, \bold{x}_{t_{N}^k}|\mathrm{x}^{(s)})\big\|_2^2\right]  \mathrm{d} t \nonumber \\
	& + \frac{1}{TS}  \sum_{k=0}^{N-1} \! \sum_{s=1}^{S} \int_{t_N^k}^{t_N^{k+1}}  \! \pmb{\lambda}(t)  \Big | \mathbb{E} \Big[  
	\big\|\bold{s}_{\Theta}(t_{N}^k, {\bold{x}_{t_{N}^k}}) \!- \! \nabla \ln \boldsymbol{p}(t_{N}^k \!, \bold{x}_{t_{N}^k}|\mathrm{x}^{(s)})\big\|_2^2  \nonumber \\
	& \qquad \qquad \qquad \qquad \qquad \qquad \quad
	- \big\|\bold{s}_{\Theta}(t, {\bold{x}}_t) -\nabla \ln \boldsymbol{p}(t, \bold{x}_t|\mathrm{x}^{(s)})\big\|_2^2\Big] \Big | \mathrm{d}t. \nonumber
\end{align}
%where the last inequality replaces the expectation over $\boldsymbol{p}(t_{N}^i, \cdot|\mathrm{x}^{(s)}\!)$ and $\boldsymbol{p}(t, \cdot|\mathrm{x}^{(s)}\!)$ with an equivalent expectation over the Brownian motion \(\mathbf{B}\), as the distributions of \(\mathbf{x}_t\) are determined entirely by the SDE driven by \(\mathbf{B}\) starting from \(\mathrm{x}^{(s)}\).
%We next estimate the right-hand side of the above inequality term by term.

%%%%%%%%%%%%%%%%%%%%%%%%%%%%%%%%%%%%%%%%%%%%%%%%%%%%%%%%%%%%%%%%%%%%%%%%%%%%%%%%%%%%%%%%%%%%%%%%%%%%%%%

For the first term in the right hand side of (\ref{proof inequality: convergence of time in finite sample 1}), we have, by (\ref{proof inequality: estimation of difference of scores}) that
\begin{equation}
	\begin{aligned}
		&	\frac{1}{TS}  \sum_{k=0}^{N-1} \! \sum_{s=1}^{S} \int_{t_N^k}^{t_N^{k+1}}  \! \big | \pmb{\lambda}(t_{N}^k) \! - \! \pmb{\lambda}(t) \big | 
		\mathbb{E} \! \left[\big\|\bold{s}_{\Theta}(t_{N}^k, {\bold{x}_{t_{N}^k}}) \!- \! \nabla \ln \boldsymbol{p}(t_{N}^k \!, \bold{x}_{t_{N}^k}|\mathrm{x}^{(s)})\big\|_2^2\right] \! \mathrm{d} t \\
		\leq  & B_{\rm score\_err} \cdot \omega_{\pmb{\lambda}}(h_N),
	\end{aligned}
	\label{proof inequality: estimation of first part}
\end{equation}
where $\omega_{\pmb{\lambda}}$ is the modulus of continuity of $\pmb{\lambda}$.
For the second term in the right hand side of (\ref{proof inequality: convergence of time in finite sample 1}), we derive
\begin{align*}
	&\int_{t_N^k}^{t_N^{k+1}}  \! \pmb{\lambda}(t)  \Big | \mathbb{E}  \Big[  
	\big\|\bold{s}_{\Theta}(t_{N}^k, {\bold{x}_{t_{N}^k}} ) -  \nabla \ln \boldsymbol{p}(t_{N}^k \!, \bold{x}_{t_{N}^k}|\mathrm{x}^{(s)})\big\|_2^2 \\
	& \qquad \qquad \qquad \qquad \qquad \qquad \qquad \qquad \qquad \qquad	- \big\|\bold{s}_{\Theta}(t, {\bold{x}}_t)\! -\!\nabla \ln \boldsymbol{p}(t, \bold{x}_t|\mathrm{x}^{(s)})\big\|_2^2\Big] \Big | \mathrm{d}t \\
	%%%%%%%%%%%%%%%%%%%%%%%%%%%%%%%%%%%%%%%%%%%%%%%%%%%
	%%%%%%%%%%%%%%%%%%%%%%%%%%%%%%%%%%%%%%%%%%%%%%%%%%%
	\textcolor{red}{\leq} & B_{\pmb{\lambda}} \! \int_{t_N^k}^{t_N^{k+1}}  \mathbb{E} \! \Big[  \big \| \bold{s}_{\Theta}(t_{N}^k, {\bold{x}_{t_{N}^k}}) -  \nabla \ln \boldsymbol{p}(t_{N}^k \!, \bold{x}_{t_{N}^k}|\mathrm{x}^{(s)})   - 
	\bold{s}_{\Theta}(t, {\bold{x}}_t)\! + \!\nabla \ln \boldsymbol{p}(t, \bold{x}_t|\mathrm{x}^{(s)}) \big\| \\
	&  \qquad \qquad \quad \times  \big(  \big\| \bold{s}_{\Theta}(t_{N}^k,{\bold{x}_{t_{N}^k}}) -  \nabla \ln \boldsymbol{p}(t_{N}^k \!, \bold{x}_{t_{N}^k}|\mathrm{x}^{(s)}) \| + \| \bold{s}_{\Theta}(t, {\bold{x}}_t)\! -\!\nabla \ln \boldsymbol{p}(t, \bold{x}_t|\mathrm{x}^{(s)}) \big\| \big) \Big]  \mathrm{d}t\\
	%%%%%%%%%%%%%%%%%%%%%%%%%%%%%%%%%%%%%%%%%%%%%%%%%%
	%%%%%%%%%%%%%%%%%%%holder inequality%%%%%%%%%%%%%%
	\leq & B_{\pmb{\lambda}} \! \int_{t_N^k}^{t_N^{k+1}} \! \Big[ \mathbb{E}   \big( \| \bold{s}_{\Theta}(t_{N}^k \!, {\bold{x}_{t_{N}^k}}) \! - \! \bold{s}_{\Theta}(t, {\bold{x}}_t) \|  \!+ \!
	\| \nabla \! \ln \boldsymbol{p}(t_{N}^k \!, \bold{x}_{t_{N}^k}|\mathrm{x}^{(s)}) \! -\!\nabla \! \ln \boldsymbol{p}(t, \bold{x}_t|\mathrm{x}^{(s)})\|\big)^2 \Big]^{1/2} \\
	&  \qquad \quad \times \Big[\mathbb{E}  \big(  \| \bold{s}_{\Theta}(t_{N}^k,{\bold{x}_{t_{N}^k}}) \!-\!  \nabla \ln \boldsymbol{p}(t_{N}^k \!, \bold{x}_{t_{N}^k}|\mathrm{x}^{(s)}) \| \!+\! \| \bold{s}_{\Theta}(t, {\bold{x}}_t)\! -\!\nabla \ln \boldsymbol{p}(t, \bold{x}_t|\mathrm{x}^{(s)}) \big)^2 \Big]^{1/2}  \mathrm{d}t\\
	%%%%%%%%%%%%%%%%%%%%%%%%%%%%%%%%%%%%%%%%%%%%%%%%%%%%
	%%%%%%%%%%%%%%%%%%%%%%%%%%%%%%%%%%%%%%%%%%%%%%%%%%%%
	\leq &  B_{\pmb{\lambda}} \sqrt{2B_{\rm score\_err}}  \int_{t_N^k}^{t_N^{k+1}}  \Big[ 3 \mathbb{E}   \big[ \| \bold{s}_{\Theta}(t_{N}^k\!, {\bold{x}_{t_{N}^k}}) \! - \! \bold{s}_{\Theta}(t, {\bold{x}}_t) \|^2\big]
	+ 3\mathbb{E}  \big[ \| \nabla \ln \boldsymbol{p}(t_{N}^k \!, \bold{x}_{t_{N}^k}|\mathrm{x}^{(s)}) \! \\
	& \qquad \qquad \qquad  -\! \nabla \ln \boldsymbol{p}(t, \bold{x}_{t_{N}^k}|\mathrm{x}^{(s)})\|^2 \big] \!+3 
	\mathbb{E} \big[ \| \nabla \ln \boldsymbol{p}(t, \bold{x}_{t_{N}^k}|\mathrm{x}^{(s)}) \! -\!\nabla \ln \boldsymbol{p}(t, \bold{x}_t|\mathrm{x}^{(s)})\|^2 \big]  \Big]^{1/2}  \mathrm{d}t\\
	%%%%%%%%%%%%%%%%%%%%%%%%%%%%%%%%%%%%%%%%%%%%%%%%%%%%
	%%%%%%%%%%%%%%%%%%%%%%%%%%%%%%%%%%%%%%%%%%%%%%%%%%%%
	\leq &   B_{\pmb{\lambda}} \sqrt{6B_{\rm score\_err}}  \int_{t_N^k}^{t_N^{k+1}}  \Big[ 2 \mathbb{E}\big[ \| \bold{x}_{t_{N}^k} - \bold{x}_t\|_2^2 \!+\! L^2\mathrm{Lip}_{\psi}^2 \mathrm{Lip}^2_{\boldsymbol{e}}  B_{\Theta}^4 |t_N^k - {t}|^2 \big]  \exp(2L \mathrm{Lip}_{\psi}  B_{\Theta}^2) 	\\
	& \qquad \qquad \qquad \qquad \ +  3 C_{\boldsymbol{p}}^2 \mathbb{E}  \big[  \big( 
	1 \! + \! \|\bold{x}_{t_{N}^k}\|_2^2 \!+\! \|\mathrm{x}^{(s)}\|^2_2
	\big) \cdot \omega_{\pmb{\gamma}}^2(h_N) \big] 
	\!+ \!
	C_{\boldsymbol{p}}^2 \mathbb{E}  \big[ \| \bold{x}_{t_{N}^k} - \bold{x}_t\|_2^2 \big]  \Big]^{1/2}  \mathrm{d}t,  %\\
	%%%%%%%%%%%%%%%%%%%%%%%%%%%%%%%%%%%%%%%%%%
	%%%%%%%%%%%%%%%%%%%%%%%%%%%%%%%%%%%%%%%%%%
	%			\leq & \textcolor{red}{ B_{\pmb{\lambda}} \sqrt{6B_{\rm score\_err}}  \int_{t_N^k}^{t_N^{k+1}}  \Big[  \Big(\mathbb{E} \big[ C_{\bold{s}}^4  (\bold{x}_{t_{N}^k})\big] \Big)^{1/2} \Big( \mathbb{E} \big[ \big(\| \bold{x}_{t_{N}^k} - \bold{x}_t\|_2  + h_N\big)^4 \big] \Big)^{1/2}  }	\\
	%			& \qquad \qquad \qquad \qquad \ + \! C_{\boldsymbol{p}}^2 \mathbb{E}  \big[  \big( 
	%			1 \! + \! \|\bold{x}_{t_{N}^k}\|_2^2 \!+\! \|\mathrm{x}^{(s)}\|^2_2
	%			\big) \cdot \omega_{\pmb{\gamma}}^2(h_N) \big] 
	%			\!+ \!
	%			C_{\boldsymbol{p}}^2 \mathbb{E}  \big[ \| \bold{x}_{t_{N}^k} - \bold{x}_t\|_2^2 \big]  \Big]^{1/2}  \mathrm{d}t
\end{align*} 
where $k=0,1,\ldots, N-1, s=1,\ldots,S$, the second inequality uses H\"older's inequality, the third inequality uses (\ref{proof inequality: estimation of difference of scores}) and the last inequality uses assumptions (A4) and Lemma~\ref{lemma: property of DNN}. Using Lemma~\ref{lemma: property of SDE-1}, we obtain from the above inequality that, 
\begin{equation}
	\begin{aligned}
		& \frac{1}{TS}  \sum_{k=0}^{N-1} \! \sum_{s=1}^{S} \int_{t_N^k}^{t_N^{k+1}}  \! \pmb{\lambda}(t)  \Big | \mathbb{E}  \Big[  
		\big\|\bold{s}_{\Theta}(t_{N}^k, {\bold{x}_{t_{N}^k}} ) -  \nabla \ln \boldsymbol{p}(t_{N}^k \!, \bold{x}_{t_{N}^k}|\mathrm{x}^{(s)})\big\|_2^2 \\
		& \qquad \qquad \qquad \qquad \qquad \qquad \qquad \qquad \qquad	- \big\|\bold{s}_{\Theta}(t, {\bold{x}}_t)\! -\!\nabla \ln \boldsymbol{p}(t, \bold{x}_t|\mathrm{x}^{(s)})\big\|_2^2\Big] \Big | \mathrm{d}t \\
		%%%%%%%%%%%%%%%%%%%%%%%%%%%%%%%%%%%%%%%%%%%%%%%%%%%%%
		%%%%%%%%%%%%%%%%%%%%%%%%%%%%%%%%%%%%%%%%%%%%%%%%%%%%%
		\leq &   \frac{B_{\pmb{\lambda}} \sqrt{6B_{\rm score\_err}}}{TS}  \!  \sum_{k=0}^{N-1} \! \sum_{s=1}^{S} \int_{t_N^k}^{t_N^{k+1}} \!  \Big[2
		\big(	\mathrm{Lip}_{\bold{x},2} |t_N^k - t|  \!+\! L^2\mathrm{Lip}_{\psi}^2 \mathrm{Lip}^2_{\boldsymbol{e}}  B_{\Theta}^4 |t_N^k \! - \!  {t}|^2 \big)  \exp(2L \mathrm{Lip}_{\psi}  B_{\Theta}^2) \\
		& \qquad \qquad  \qquad \qquad  \qquad   \  + 3C_{\boldsymbol{p}}^2    \big( 1  + B_{\bold{x}, 2} + R^2 \big) \cdot \omega_{\pmb{\gamma}}^2(h_N)
		+ C_{\boldsymbol{p}}^2 \mathrm{Lip}_{\bold{x},2} |t_N^k- t| 	\Big]^{1/2} \mathrm{d}t \\
		\leq &  B_{\pmb{\lambda}} \sqrt{6B_{\rm score\_err}} \Big[ C_1  \omega_{\pmb{\gamma}}(h_N)  +  C_2 h_N^{1/2} + C_3 h_N \Big],
	\end{aligned}
	\nonumber
\end{equation}
where $C_1 = C_{\boldsymbol{p}}   \sqrt{3\big( 1  + B_{\bold{x}, 2} + R^2 \big)}$,  $C_2 = \sqrt{2\mathrm{Lip}_{\bold{x},2}\exp(2L \mathrm{Lip}_{\psi}  B_{\Theta}^2)   + C_{\boldsymbol{p}}^2 \mathrm{Lip}_{\bold{x},2}   }  $ and $C_3 = 2L \mathrm{Lip}_{\psi} \mathrm{Lip}_{\boldsymbol{e}}  B_{\Theta}^2\exp(L \mathrm{Lip}_{\psi}  B_{\Theta}^2) $ with $B_{\bold{x}, 2}$ and $\mathrm{Lip}_{\bold{x},2}$ defined in (\ref{coefficients: B_x2&Lip_x2}).
Combining the above inequality with (\ref{proof inequality: estimation of first part}) , we complete the proof. 
\end{proof}

%%%%%%%%%%%%%%%%%%%%%%%%%%%%%%%%%%%%%%%%%%%%%%%%%%%%%%%%%%%%%%%
%%%%%%%%%%%%%%%%%%%%%%%%%%%%%%%%%%%%%%%%%%%%%%%%%%%%%%%%%%%%%%%
Next, we give detailed proofs for Lemma~\ref{lemma: generalization error of score functions} . 

\begin{proof}[proof of Lemma~\ref{lemma: generalization error of score functions}]
The proof is divided into two steps.

\textbf{Step~1}. 
We use the Rademacher complexity (Definition~\ref{definition: RC} in Appendix) to estimate the upper bound of $\big| \mathfrak{R}_{p_{\rm data}} (\bold{s}_{\Theta}(t, {\bold{x}}_t)) - \hat{\mathfrak{R}}_{\mathcal{S}} (\bold{s}_{\Theta}(t, {\bold{x}}_t)) \big|$.
For this, we consider the generalization gap $\mathrm{GGap}(\mathcal{S}) := \sup_{\Theta \in \Omega} \Big ( \mathfrak{R}_{p_{\rm data}} (\bold{s}_{\Theta}(t, {\bold{x}}_t)) - \hat{\mathfrak{R}}_{\mathcal{S}} (\bold{s}_{\Theta}(t, {\bold{x}}_t)) \Big)$.
% by McDiarmid's inequality. For this, we need to validate the value of $\mathrm{GGap}(\cdot)$ can change by at most a $C^{\prime}>0$ under an arbitrary change of a single coordinate of $\mathrm{x} \in \mathbb{R}^{n_\mathrm{d}}$. Actually, 
Recall the dataset ${\mathcal{S}} = \{ {\mathrm{x}}^{(1)}, \ldots, {\mathrm{x}}^{(i)}, \ldots, {\mathrm{x}}^{(S)} \}$.
For any $i \in \{1, \ldots,S\}$, let ${\mathcal{S}}^{(i)} = \{ {\mathrm{x}}^{(1)}, \ldots, {\mathrm{x}}^{(i-1)}, $ $\tilde{\mathrm{x}}^{(i)}, {\mathrm{x}}^{(i+1)}, \ldots, {\mathrm{x}}^{(S)} \}$. Then we have 
\begin{equation}
	\begin{aligned}
		& \mathrm{GGap}(\mathcal{S}) - \mathrm{GGap}({\mathcal{S}}^{(i)}) \\
		%%%%%%%%%%%%%%%%%%%%%%%%%%%%%%%%%%%%%%%%%%%%
		%%%%%%%%%%%%%%%%%%%%%%%%%%%%%%%%%%%%%%%%%%%%
		=& \sup_{\Theta \in \Omega_{\Theta}} \big ( \mathfrak{R}_{p_{\rm data}} (\bold{s}_{\Theta}(t, {\bold{x}}_t)) \!-\! \hat{\mathfrak{R}}_{\mathcal{S}} (\bold{s}_{\Theta}(t, {\bold{x}}_t)) \big) \!- \!\sup_{\Theta \in \Omega_{\Theta}} \big ( \mathfrak{R}_{p_{\rm data}} (\bold{s}_{\Theta}(t, {\bold{x}}_t)) \!-\! \hat{\mathfrak{R}}_{\tilde{\mathcal{S}}^{(i)}} (\bold{s}_{\Theta}(t, {\bold{x}}_t)) \big) \\
		\leq & \sup_{\Theta \in \Omega_{\Theta}} \Big(\hat{\mathfrak{R}}_{{\mathcal{S}}^{(i)}} (\bold{s}_{\Theta}(t, {\bold{x}}_t)) - \hat{\mathfrak{R}}_{\mathcal{S}} (\bold{s}_{\Theta}(t, {\bold{x}}_t)) \Big) \\
		%	\leq & \sup_{\Theta \in \Omega_{\Theta}}  \Big( \frac{1}{S} \sum_{s=1}^{S} \mathbb{E}_{\bold{x}_t \sim \boldsymbol{p}(t, \cdot|\mathrm{x}^{(s)})} \left[\big\|\bold{s}_{\Theta}(t, {\bold{x}}_t) -\nabla \ln \boldsymbol{p}(t, \bold{x}_t|\mathrm{x}^{(s)})\big\|_2^2\right] \\
		%	 & \qquad \qquad \qquad \qquad - \frac{1}{S} \sum_{s=1}^{S} \mathbb{E}_{\bold{x}_t \sim \boldsymbol{p}(t, \cdot|\mathrm{x}^{(s)})} \left[\big\|\bold{s}_{\Theta}(t, {\bold{x}}_t) -\nabla \ln \boldsymbol{p}(t, \bold{x}_t|\mathrm{x}^{(s)})\big\|_2^2\right] \Big) \\
		%%%%%%%%%%%%%%%%%%%%%%%%%%%%%%%%%%%%%%%%%%%%
		%%%%%%%%%%%%%%%%%%%%%%%%%%%%%%%%%%%%%%%%%%%%
		\leq &	\frac{1}{S} \sup_{\Theta \in \Omega_{\Theta}} \Big(   \mathbb{E}_{\bold{x}_t \sim \boldsymbol{p}(t, \cdot|\tilde{\mathrm{x}}^{(i)})} \left[\big\|\bold{s}_{\Theta}(t, {\bold{x}}_t) -\nabla \ln \boldsymbol{p}(t, \bold{x}_t|\tilde{\mathrm{x}}^{(i)})\big\|_2^2\right] \\
		& \qquad \qquad \qquad \qquad \qquad \qquad - \mathbb{E}_{\bold{x}_t \sim \boldsymbol{p}(t, \cdot|\mathrm{x}^{(i)})} \left[\big\|\bold{s}_{\Theta}(t, {\bold{x}}_t) -\nabla \ln \boldsymbol{p}(t, \bold{x}_t|{\mathrm{x}}^{(i)})\big\|_2^2\right] \Big) \\
		\leq & \frac{2}{S}B_{\rm score\_err},
	\end{aligned}
	\label{proof inequality-2: property of change of a single coordinate}
\end{equation}
where the last inequality is owing to (\ref{proof inequality: estimation of difference of scores}). 
%%%%若交换GGap定义式的两项，这里仍然成立。
Hence, using McDiarmid's inequality \cite[Lemma~26.4]{shalev2014understanding}, we get, for all $\delta \in (0,1)$, with probability at least $1-\delta/2$, 
\begin{equation}
	\begin{aligned}
		\mathrm{GGap}(\mathcal{S}) \leq  	\mathbb{E}_{\mathcal{S} \sim (p_{\rm data})^S}[\mathrm{GGap}(\mathcal{S})  ] + B_{\rm score\_err} \sqrt{\frac{2 \ln (4/\delta)}{S}}.
	\end{aligned}
	\label{proof inequality-2: McDiarmid's inequality-1}
\end{equation}

To further establish the properties of $\mathrm{GGap}(\mathcal{S})$ in relation to the Rademacher complexity, we let 
$\tilde{\mathcal{S}} = \{ \tilde{\mathrm{x}}^{(1)}, \ldots, \tilde{\mathrm{x}}^{(S)} \}$ be another i.i.d. sample set from $p_{\rm data}$, which is independent of $\mathcal{S}$.
Using the fact that
$\mathfrak{R}_{p_{\rm data}} (\bold{s}_{\Theta}(t, {\bold{x}}_t)) \!=\! \mathbb{E}_{\tilde{\mathcal{S}}}\big[ \hat{\mathfrak{R}}_{\tilde{\mathcal{S}}} (\bold{s}_{\Theta}(t, {\bold{x}}_t)) \big], \forall \Theta \in \Omega_{{\Theta}}$, we  derive
$$
\begin{aligned}
	\mathfrak{R}_{p_{\rm data}} (\bold{s}_{\Theta}(t, {\bold{x}}_t)) - \hat{\mathfrak{R}}_{\mathcal{S}} (\bold{s}_{\Theta}(t, {\bold{x}}_t)) 
	= & \mathbb{E}_{\tilde{\mathcal{S}}}\big[ \hat{\mathfrak{R}}_{\tilde{\mathcal{S}}} (\bold{s}_{\Theta}(t, {\bold{x}}_t))  \big] - \hat{\mathfrak{R}}_{\mathcal{S}} (\bold{s}_{\Theta}(t, {\bold{x}}_t)) \\
	%%%%%%%%%%%%%%%%%%%%%%%%%%%%%%%%%%%%%%%%%%%%%%%%%%%%%%%
	= & \mathbb{E}_{\tilde{\mathcal{S}}} \big[ \hat{\mathfrak{R}}_{\tilde{\mathcal{S}}} (\bold{s}_{\Theta}(t, {\bold{x}}_t))  - \hat{\mathfrak{R}}_{\mathcal{S}} (\bold{s}_{\Theta}(t, {\bold{x}}_t)) \big]. 
\end{aligned}
$$
Taking supremum over $\Theta \in \Omega_{\Theta}$ of both sides and follows the supremum of expectation is smaller than expectation of the supremum,	
$$
\begin{aligned}
	\mathrm{GGap}(\mathcal{S}) \!=\! \sup_{\Theta \in \Omega_{\Theta}} \! \mathbb{E}_{\tilde{\mathcal{S}}}\! \big[ \! \hat{\mathfrak{R}}_{\tilde{\mathcal{S}}} (\bold{s}_{\Theta}(t, {\bold{x}}_t))  \!-\! \hat{\mathfrak{R}}_{\mathcal{S}} (\bold{s}_{\Theta}(t, {\bold{x}}_t)) \big] 
	%%%%%%%%%%%%%%%%%%%%%%%%%%%%%%%%%%%%%%%%%%%%%%%%%
	\!\leq\!  \mathbb{E}_{\tilde{\mathcal{S}}}\! \big[ \!\sup_{\Theta \in \Omega_{\Theta}}\! \big(\hat{\mathfrak{R}}_{\tilde{\mathcal{S}}} (\bold{s}_{\Theta}(t, {\bold{x}}_t)) \! -\! \hat{\mathfrak{R}}_{\mathcal{S}} (\bold{s}_{\Theta}(t, {\bold{x}}_t)) \big)  \big].
\end{aligned}
$$
By further taking expectation over $\mathcal{S}$ on both sides, we obtain
\begin{equation}
	\begin{aligned}
		\mathbb{E}_{\mathcal{S}} [\mathrm{GGap}(\mathcal{S})] 
		\leq &  \mathbb{E}_{\mathcal{S}} \mathbb{E}_{\tilde{\mathcal{S}}} \big[ \sup_{\Theta \in \Omega_{\Theta}}\! \big(\hat{\mathfrak{R}}_{\tilde{\mathcal{S}}} (\bold{s}_{\Theta}(t, {\bold{x}}_t)) \! -\! \hat{\mathfrak{R}}_{\mathcal{S}} (\bold{s}_{\Theta}(t, {\bold{x}}_t)) \big)  \big] \\
		%%%%%%%%%%%%%%%%%%%%%%%%%%%%%%%%%%%%%%%%%%%%%%%%%%%%%
		%%%%%%%%%%%%%%%%%%%%%%%%%%%%%%%%%%%%%%%%%%%%%%%%%%%%%
		= &  \frac{1}{S} \mathbb{E}_{(\mathcal{S}, \tilde{\mathcal{S}})} \! \Big[ \sup_{\Theta \in \Omega_{\Theta}}\!  \sum_{s=1}^{S}  \Big( \mathbb{E}_{\bold{x}_t \sim \boldsymbol{p}(t, \cdot|\tilde{\mathrm{x}}^{(s)})} \left[\big\|\bold{s}_{\Theta}(t, {\bold{x}}_t) -\nabla \ln \boldsymbol{p}(t, \bold{x}_t|\tilde{\mathrm{x}}^{(s)})\big\|_2^2\right]
		\\
		& \qquad \qquad \qquad \ - \mathbb{E}_{\bold{x}_t \sim \boldsymbol{p}(t, \cdot|{\mathrm{x}}^{(s)})} \left[\big\|\bold{s}_{\Theta}(t, {\bold{x}}_t) -\nabla \ln \boldsymbol{p}(t, \bold{x}_t|{\mathrm{x}}^{(s)})\big\|_2^2\right] \Big) \Big],
	\end{aligned}
	\label{proof inequality-2: properties of expection of GGap}
\end{equation}
due to the boundedness of expectation of score error.

We next estimate the right-hand side of (\ref{proof inequality-2: properties of expection of GGap}). For each index $k \in \{1, \ldots, S\}$, observe that
\begin{equation}
	\begin{aligned}
		& \mathbb{E}_{(\mathcal{S}, \tilde{\mathcal{S}})}  \Big[ \sup_{\Theta \in \Omega_{\Theta}}\! \Big\{  \sum_{s=1}^{S}  \Big( \mathbb{E}_{\bold{x}_t \sim \boldsymbol{p}(t, \cdot|\tilde{\mathrm{x}}^{(s)})} \left[\big\|\bold{s}_{\Theta}(t, {\bold{x}}_t) -\nabla \ln \boldsymbol{p}(t, \bold{x}_t|\tilde{\mathrm{x}}^{(s)})\big\|_2^2\right]
		\\
		& \qquad \qquad \qquad  \qquad \quad - \mathbb{E}_{\bold{x}_t \sim \boldsymbol{p}(t, \cdot|{\mathrm{x}}^{(s)})} \left[\big\|\bold{s}_{\Theta}(t, {\bold{x}}_t) -\nabla \ln \boldsymbol{p}(t, \bold{x}_t|{\mathrm{x}}^{(s)})\big\|_2^2\right]
		\Big) \Big\} \Big] \\
		%%%%%%%%%%%%%%%%%%%%%%%%%%%%%%%%%%%%%%%%%%%%%%%%%%%%%%%%
		%%%%%%%%%%%%%%%%%%%%%%%%%%%%%%%%%%%%%%%%%%%%%%%%%%%%%%%%
		= & \mathbb{E}_{(\mathcal{S}, \tilde{\mathcal{S}})}  \Big[ \sup_{\Theta \in \Omega_{\Theta}}\! \Big\{ \Big( \mathbb{E}_{\bold{x}_t \sim \boldsymbol{p}(t, \cdot|\tilde{\mathrm{x}}^{(k)})} \left[\big\|\bold{s}_{\Theta}(t, {\bold{x}}_t) -\nabla \ln \boldsymbol{p}(t, \bold{x}_t|\tilde{\mathrm{x}}^{(k)})\big\|_2^2\right]
		\\
		& \qquad \qquad \qquad  \qquad \quad - \mathbb{E}_{\bold{x}_t \sim \boldsymbol{p}(t, \cdot|{\mathrm{x}}^{(k)})} \left[\big\|\bold{s}_{\Theta}(t, {\bold{x}}_t) -\nabla \ln \boldsymbol{p}(t, \bold{x}_t|{\mathrm{x}}^{(k)})\big\|_2^2\right]
		\Big) \\
		& \qquad \qquad \quad \ + \sum_{s\not = k}\Big( \mathbb{E}_{\bold{x}_t \sim \boldsymbol{p}(t, \cdot|\tilde{\mathrm{x}}^{(s)})} \left[\big\|\bold{s}_{\Theta}(t, {\bold{x}}_t) -\nabla \ln \boldsymbol{p}(t, \bold{x}_t|\tilde{\mathrm{x}}^{(s)})\big\|_2^2\right]
		\\
		& \qquad \qquad \qquad  \qquad \quad - \mathbb{E}_{\bold{x}_t \sim \boldsymbol{p}(t, \cdot|{\mathrm{x}}^{(s)})} \left[\big\|\bold{s}_{\Theta}(t, {\bold{x}}_t) -\nabla \ln \boldsymbol{p}(t, \bold{x}_t|{\mathrm{x}}^{(s)})\big\|_2^2\right]
		\Big) \Big\} \Big] \\
		%%%%%%%%%%%%%%%%%%%%%%%%%%%%%%%%%%%%%%%%%%%%%%%%%%%%%%%%
		%%%%%%%%%%%%%%%%%%%%%%%%%%%%%%%%%%%%%%%%%%%%%%%%%%%%%%%%
		= & \mathbb{E}_{(\mathcal{S}, \tilde{\mathcal{S}})}  \Big[ \sup_{\Theta \in \Omega_{\Theta}}\! \Big\{ \Big( \mathbb{E}_{\bold{x}_t \sim \boldsymbol{p}(t, \cdot|{\mathrm{x}}^{(k)})} \left[\big\|\bold{s}_{\Theta}(t, {\bold{x}}_t) -\nabla \ln \boldsymbol{p}(t, \bold{x}_t|{\mathrm{x}}^{(k)})\big\|_2^2\right]
		\\
		& \qquad \qquad \qquad  \qquad \quad - \mathbb{E}_{\bold{x}_t \sim \boldsymbol{p}(t, \cdot|\tilde{\mathrm{x}}^{(k)})} \left[\big\|\bold{s}_{\Theta}(t, {\bold{x}}_t) -\nabla \ln \boldsymbol{p}(t, \bold{x}_t|\tilde{\mathrm{x}}^{(k)})\big\|_2^2\right]
		\Big) \\
		& \qquad \qquad \quad \ + \sum_{s\not = k}\Big( \mathbb{E}_{\bold{x}_t \sim \boldsymbol{p}(t, \cdot|\tilde{\mathrm{x}}^{(s)})} \left[\big\|\bold{s}_{\Theta}(t, {\bold{x}}_t) -\nabla \ln \boldsymbol{p}(t, \bold{x}_t|\tilde{\mathrm{x}}^{(s)})\big\|_2^2\right]
		\\
		& \qquad \qquad \qquad  \qquad \quad - \mathbb{E}_{\bold{x}_t \sim \boldsymbol{p}(t, \cdot|{\mathrm{x}}^{(s)})} \left[\big\|\bold{s}_{\Theta}(t, {\bold{x}}_t) -\nabla \ln \boldsymbol{p}(t, \bold{x}_t|{\mathrm{x}}^{(s)})\big\|_2^2\right]
		\Big) \Big\} \Big],
	\end{aligned}
\end{equation}
where the second equality is due to ${\mathrm{x}}^{(s)}$ and $\tilde{\mathrm{x}}^{(s)}$ are i.i.d. variables for each $s$. Therefore, 
\begin{equation}
	\begin{aligned}
		& \mathbb{E}_{(\mathcal{S}, \tilde{\mathcal{S}})}  \Big[ \sup_{\Theta \in \Omega_{\Theta}}\! \sum_{s=1}^{S}   \Big( \mathbb{E}_{\bold{x}_t \sim \boldsymbol{p}(t, \cdot|\tilde{\mathrm{x}}^{(s)})} \left[\big\|\bold{s}_{\Theta}(t, {\bold{x}}_t) -\nabla \ln \boldsymbol{p}(t, \bold{x}_t|\tilde{\mathrm{x}}^{(s)})\big\|_2^2\right]
		\\
		& \qquad \qquad \qquad  \qquad \quad - \mathbb{E}_{\bold{x}_t \sim \boldsymbol{p}(t, \cdot|{\mathrm{x}}^{(s)})} \left[\big\|\bold{s}_{\Theta}(t, {\bold{x}}_t) -\nabla \ln \boldsymbol{p}(t, \bold{x}_t|{\mathrm{x}}^{(s)})\big\|_2^2\right]
		\Big) \Big] \\
		%%%%%%%%%%%%%%%%%%%%%%%%%%%%%%%%%%%%%%%%%%%%%%%%%%%%%%%%
		%%%%%%%%%%%%%%%%%%%%%%%%%%%%%%%%%%%%%%%%%%%%%%%%%%%%%%%%
		= & \mathbb{E}_{(\mathcal{S}, \tilde{\mathcal{S}})} \mathbb{E}_{\epsilon_k} \Big[ \sup_{\Theta \in \Omega_{\Theta}}\! \Big\{ \epsilon_k  \Big( \mathbb{E}_{\bold{x}_t \sim \boldsymbol{p}(t, \cdot|\tilde{\mathrm{x}}^{(k)})} \left[\big\|\bold{s}_{\Theta}(t, {\bold{x}}_t) -\nabla \ln \boldsymbol{p}(t, \bold{x}_t|\tilde{\mathrm{x}}^{(k)})\big\|_2^2\right]
		\\
		& \qquad \qquad \qquad  \qquad \quad \ - \mathbb{E}_{\bold{x}_t \sim \boldsymbol{p}(t, \cdot|{\mathrm{x}}^{(k)})} \left[\big\|\bold{s}_{\Theta}(t, {\bold{x}}_t) -\nabla \ln \boldsymbol{p}(t, \bold{x}_t|{\mathrm{x}}^{(k)})\big\|_2^2\right]
		\Big) \\
		& \qquad \qquad \qquad \qquad \quad \ + \sum_{s\not = k}\Big( \mathbb{E}_{\bold{x}_t \sim \boldsymbol{p}(t, \cdot|\tilde{\mathrm{x}}^{(s)})} \left[\big\|\bold{s}_{\Theta}(t, {\bold{x}}_t) -\nabla \ln \boldsymbol{p}(t, \bold{x}_t|\tilde{\mathrm{x}}^{(s)})\big\|_2^2\right]
		\\
		& \qquad \qquad \qquad  \qquad \quad \ - \mathbb{E}_{\bold{x}_t \sim \boldsymbol{p}(t, \cdot|{\mathrm{x}}^{(s)})} \left[\big\|\bold{s}_{\Theta}(t, {\bold{x}}_t) -\nabla \ln \boldsymbol{p}(t, \bold{x}_t|{\mathrm{x}}^{(s)})\big\|_2^2\right]
		\Big) \Big\} \Big] ,
	\end{aligned}
\end{equation}
where $\epsilon_k$ is a random variable such that $\mathbb{P}[\epsilon_k=1] = \mathbb{P}[\epsilon_k=-1] = 1/2$. Denote $\boldsymbol{\epsilon}= \{\epsilon_1, \ldots, \epsilon_S \}$.
Repeating the above equality for all $k$ and combing (\ref{proof inequality-2: properties of expection of GGap}) we obtain that
\begin{equation}
	\begin{aligned}
		&\mathbb{E}_{\mathcal{S}} [\mathrm{GGap}(\mathcal{S})] \\
		\leq &  \frac{1}{S} \mathbb{E}_{(\mathcal{S}, \tilde{\mathcal{S}})} \mathbb{E}_{\boldsymbol{\epsilon}}  \Big[ \sup_{\Theta \in \Omega_{\Theta}}\! \Big\{ \sum_{s=1}^S \epsilon_s  \Big( \mathbb{E}_{\bold{x}_t \sim \boldsymbol{p}(t, \cdot|\tilde{\mathrm{x}}^{(s)})} \left[\big\|\bold{s}_{\Theta}(t, {\bold{x}}_t) -\nabla \ln \boldsymbol{p}(t, \bold{x}_t|\tilde{\mathrm{x}}^{(s)})\big\|_2^2\right]
		\\
		& \qquad \qquad \qquad  \qquad \qquad \ - \mathbb{E}_{\bold{x}_t \sim \boldsymbol{p}(t, \cdot|{\mathrm{x}}^{(s)})} \left[\big\|\bold{s}_{\Theta}(t, {\bold{x}}_t) -\nabla \ln \boldsymbol{p}(t, \bold{x}_t|{\mathrm{x}}^{(s)})\big\|_2^2\right]
		\Big) \Big\} \Big] \\
		%%%%%%%%%%%%%%%%%%%%%%%%%%%%%%%%%%%%%%%%%%%%%%%%%%%%%%%%%%%
		%%%%%%%%%%%%%%%%%%%%%%%%%%%%%%%%%%%%%%%%%%%%%%%%%%%%%%%%%%%
		\leq & \frac{1}{S} \mathbb{E}_{(\mathcal{S}, \tilde{\mathcal{S}})} \mathbb{E}_{\boldsymbol{\epsilon}} \Big[ \sup_{\Theta \in \Omega_{\Theta}}\! \Big\{ \sum_{s=1}^S \epsilon_s  \mathbb{E}_{\bold{x}_t \sim \boldsymbol{p}(t, \cdot|\tilde{\mathrm{x}}^{(s)})} \left[\big\|\bold{s}_{\Theta}(t, {\bold{x}}_t) -\nabla \ln \boldsymbol{p}(t, \bold{x}_t|\tilde{\mathrm{x}}^{(s)})\big\|_2^2\right] \Big\}
		\\
		& \qquad \qquad  + \sup_{\Theta \in \Omega_{\Theta}}\! \Big\{  \sum_{s=1}^S -\epsilon_s \mathbb{E}_{\bold{x}_t \sim \boldsymbol{p}(t, \cdot|{\mathrm{x}}^{(s)})} \left[\big\|\bold{s}_{\Theta}(t, {\bold{x}}_t) -\nabla \ln \boldsymbol{p}(t, \bold{x}_t|{\mathrm{x}}^{(s)})\big\|_2^2\right]
		\Big\} \Big] \\
		%%%%%%%%%%%%%%%%%%%%%%%%%%%%%%%%%%%%%%%%%%%%%%%%%%%%%%%%%%%
		%%%%%%%%%%%%%%%%%%%%%%%%%%%%%%%%%%%%%%%%%%%%%%%%%%%%%%%%%%%
		=  &  \frac{1}{S} \mathbb{E}_{(\mathcal{S}, \tilde{\mathcal{S}})} \mathbb{E}_{\boldsymbol{\epsilon}} \Big[ \sup_{\Theta \in \Omega_{\Theta}}\! \Big\{ \sum_{s=1}^S \epsilon_s  \mathbb{E}_{\bold{x}_t \sim \boldsymbol{p}(t, \cdot|\tilde{\mathrm{x}}^{(s)})} \left[\big\|\bold{s}_{\Theta}(t, {\bold{x}}_t) -\nabla \ln \boldsymbol{p}(t, \bold{x}_t|\tilde{\mathrm{x}}^{(s)})\big\|_2^2\right] \Big\}
		\\
		& \qquad \qquad  + \sup_{\Theta \in \Omega_{\Theta}}\! \Big\{  \sum_{s=1}^S \epsilon_s \mathbb{E}_{\bold{x}_t \sim \boldsymbol{p}(t, \cdot|{\mathrm{x}}^{(s)})} \left[\big\|\bold{s}_{\Theta}(t, {\bold{x}}_t) -\nabla \ln \boldsymbol{p}(t, \bold{x}_t|{\mathrm{x}}^{(s)})\big\|_2^2\right]
		\Big\} \Big] \\
		%%%%%%%%%%%%%%%%%%%%%%%%%%%%%%%%%%%%%%%%%%%%%%%%%%%%%%%%%%%
		%%%%%%%%%%%%%%%%%%%%%%%%%%%%%%%%%%%%%%%%%%%%%%%%%%%%%%%%%%%
		=  & 2 \mathbb{E}_{\mathcal{S}} [ \mathscr{R}_{\mathcal{S}}( \mathcal{F}_t)],
	\end{aligned}
	\label{proof inequality-2: estimation of expection of GGap}
\end{equation}
where $\mathscr{R}_{\mathcal{S}}(\mathcal{F}_t)=\frac{1}{S}\mathbb{E}_{\boldsymbol{\epsilon}} \Big[ \sup_{\Theta \in \Omega_{\Theta}} \Big\{  \sum_{s=1}^S \epsilon_s \mathbb{E}_{\bold{x}_t \sim \boldsymbol{p}(t, \cdot|{\mathrm{x}}^{(s)})} \left[\big\|\bold{s}_{\Theta}(t, {\bold{x}}_t) -\nabla \ln \boldsymbol{p}(t, \bold{x}_t|{\mathrm{x}}^{(s)})\big\|_2^2\right]
\Big\} \Big]$ is the Rademacher complexity of the hypothesis set 
\begin{equation}
	\mathcal{F}_t  = \Big\{  {f}_{\Theta, t}(\cdot) | \ {f}_{\Theta, t}(\mathrm{x}) = \mathbb{E}_{\bold{x}_t \sim \boldsymbol{p}(t, \cdot|\mathrm{x})} \left[\big\|\bold{s}_{\Theta}(t, {\bold{x}}_t) -\nabla \ln \boldsymbol{p}(t, \bold{x}_t|\mathrm{x})\big\|_2^2\right], \ \Theta \in \Omega_{\Theta}  \Big\}, 
	\label{equation: definition of hypothesis-set}
\end{equation} 
for samples in $\mathcal{S}$.

A similar discussion as in (\ref{proof inequality-2: property of change of a single coordinate}) shows that the random variable $\mathscr{R}_{\mathcal{S}}( \mathcal{F}_t)$ also satisfies the bounded difference condition of McDiarmid's inequality \cite[Lemma~26.4]{shalev2014understanding} with a constant $\frac{2}{S}B_{\rm score\_err}$, hence with probability of at least $1-\delta/2$, 
\begin{equation}
	\begin{aligned}
		\mathbb{E}_{\mathcal{S}} [ \mathscr{R}_{\mathcal{S}}( \mathcal{F}_t)] \leq  \mathscr{R}_{\mathcal{S}}( \mathcal{F}_t) + B_{\rm score\_err} \sqrt{\frac{2 \ln (4/\delta)}{S}}.
	\end{aligned}
	\label{proof inequality-2: McDiarmid's inequality-2}
\end{equation}
Combining (\ref{proof inequality-2: McDiarmid's inequality-1}), (\ref{proof inequality-2: estimation of expection of GGap}), (\ref{proof inequality-2: McDiarmid's inequality-2}), we get, with probability of at least $1-\delta$, 
\begin{equation}
	\begin{aligned}
		\mathrm{GGap}(\mathcal{S}) \leq & 2\mathscr{R}_{\mathcal{S}}( \mathcal{F}_t)	 + 3 B_{\rm score\_err} \sqrt{\frac{2 \ln (4/\delta)}{S}}.
	\end{aligned}
	\label{proof inequality-2: final estimate of GGap}
\end{equation}

Let $\overline{\mathrm{GGap}}(\mathcal{S}) := \sup_{\Theta \in \Omega_{\Theta}} \Big ( \hat{\mathfrak{R}}_{\mathcal{S}} (\bold{s}_{\Theta}(t, {\bold{x}}_t)) -  \mathfrak{R}_{p_{\rm data}} (\bold{s}_{\Theta}(t, {\bold{x}}_t)) \Big)$. A similar discussion as above yields that, with probability of at least $1-\delta$,
\begin{equation}
	\begin{aligned}
		\overline{\mathrm{GGap}}(\mathcal{S}) \leq & 2\mathscr{R}_{\mathcal{S}}( \mathcal{F}_t)	 + 3 B_{\rm score\_err} \sqrt{\frac{2 \ln (4/\delta)}{S}}.
	\end{aligned}
\end{equation}
Combining this equality with (\ref{proof inequality-2: final estimate of GGap}) gives that, with probability of at least $1-\delta$, 
\begin{equation}
	\begin{aligned}
		\big|\mathfrak{R}_{p_{\rm data}} (\bold{s}_{\Theta}(t, {\bold{x}}_t)) - \hat{\mathfrak{R}}_{\mathcal{S}} (\bold{s}_{\Theta}(t, {\bold{x}}_t)) \big| \leq & 2\mathscr{R}_{\mathcal{S}}(\mathcal{F}_t)	 + 3 B_{\rm score\_err} \sqrt{\frac{2 \ln (4/\delta)}{S}}, \forall  \Theta \in \Omega_{\Theta}.
	\end{aligned}
	\label{equation-proof: RC bound}
\end{equation}

%%%%%%%%%%%%%%%%%%%%%%%%%%%%%%%%%%%%%%%%%%%%%%%%%%%%%%%%%%%%%
\textbf{Step~2}.  We estimate $\mathscr{R}_{\mathcal{S}}(\mathcal{F}_t)$. For convenience, denote
$$ g_{\Theta} =  \sum_{s=1}^S \epsilon_s  \mathbb{E}_{\bold{x}_t \sim \boldsymbol{p}(t, \cdot|\mathrm{x}^{(s)})} \! \left[\big\|\bold{s}_{\Theta}(t, {\bold{x}}_t) \!- \! \nabla \ln \boldsymbol{p}\big(t, \bold{x}_t|\mathrm{x}^{(s)}\big)\big\|_2^2\right],
$$
where $(\epsilon_s, \ldots, \epsilon_S)$ is a Rademacher sequence.
We see that $\{g_{\Theta}\}_{\Theta}$ is a Rademacher process which is centered (i.e., $\mathbb{E}_{\boldsymbol{\epsilon}} [g_{\Theta}] = 0$) and is a subgaussian process \cite[Sec.~8.6]{foucart2013}. Since
$$
\begin{aligned}
	\mathbb{E}_{\boldsymbol{\epsilon}} \big[ | g_{\Theta} - g_{\tilde{\Theta}} |^2 \big] 
	= & \mathbb{E}_{\boldsymbol{\epsilon}} \Big[ \Big| 
	\sum_{s=1}^S \epsilon_s \Big( {f}_{\Theta, t}(\mathrm{x}^{(s)}) - {f}_{\tilde{\Theta}, t}(\mathrm{x}^{(s)}) \Big) \Big|^2 \Big]  \\
	%%%%%%%%%%%%%%%%%%%%%%%%%%%%%%%%%%%%%%%%%%%%%%%%%
	= &  \sum_{s=1}^S   \Big( {f}_{\Theta, t}(\mathrm{x}^{(s)}) - {f}_{\tilde{\Theta}, t}(\mathrm{x}^{(s)}) \Big) ^2, \ \forall \Theta, \tilde{\Theta} \in \Omega_{\Theta},
\end{aligned}
$$
we consider a pseudo-metric associated to process $\{g_{\Theta}\}_{\Theta}$ as 
$$
d(\Theta, \tilde{\Theta}) = \big( \mathbb{E}_{\boldsymbol{\epsilon}}  | g_{\Theta} - g_{\tilde{\Theta}} |^2 \big)^{1/2} = \Big( \sum_{s=1}^S   \Big( {f}_{\Theta, t}(\mathrm{x}^{(s)}) - {f}_{\tilde{\Theta}, t}(\mathrm{x}^{(s)}) \Big) ^2  \Big) ^{1/2}.
$$
The diameter of $\Omega_{\Theta}$ with respect to $d(\cdot, \cdot)$ is
$$
\begin{aligned}
	& \Delta (\Omega_{\Theta})  =  \sup_{\Theta, \tilde{\Theta} \in \Omega_{\Theta} } d(\Theta, \tilde{\Theta}) \\
	%= & \sup_{\Theta, \tilde{\Theta} \in \Omega_{\Theta} } \Big\{  \sum_{s=1}^S \Big( \mathbb{E}_{\bold{x}_t \sim \boldsymbol{p}(t, \cdot|\mathrm{x}^{(s)})} \left[\big\|\bold{s}_{\Theta}(t, {\bold{x}}_t) -\nabla \ln \boldsymbol{p}(t, \bold{x}_t|\mathrm{x}^{(s)})\big\|_2^2\right] \\
	% & \qquad \qquad \qquad \quad - \mathbb{E}_{\bold{x}_t \sim \boldsymbol{p}(t, \cdot|\mathrm{x}^{(s)})} \left[\big\|\bold{s}_{\tilde{\Theta}}(t, {\bold{x}}_t) -\nabla \ln \boldsymbol{p}(t, \bold{x}_t|\mathrm{x}^{(s)})\big\|_2^2\right] \Big)^2 \Big\} \\
	\leq &  \sup_{\Theta \in \Omega_{\Theta} } \Big[  \sum_{s=1}^S \Big( \mathbb{E}_{\bold{x}_t \sim \boldsymbol{p}(t, \cdot|\mathrm{x}^{(s)})} \left[\big\|\bold{s}_{\Theta}(t, {\bold{x}}_t) -\nabla \ln \boldsymbol{p}(t, \bold{x}_t|\mathrm{x}^{(s)})\big\|_2^2\right] \Big)^2
	\Big]^{1/2} 
	\leq \sqrt{S} B_{\rm score\_err}
\end{aligned}
$$
owing to (\ref{proof inequality: estimation of difference of scores}).
Hence, we can apply the Dudley’s inequality \cite[Theorem~8.23]{foucart2013} for this subgaussian processes, which gives
\begin{equation}
	\begin{aligned}
		\mathscr{R}_{\mathcal{S}}(\mathcal{F}_t) = & \frac{1}{S} \mathbb{E}_{\boldsymbol{\epsilon}} \left[ \sup_{\Theta \in \Omega} \sum_{s=1}^S \epsilon_s  \mathbb{E}_{\bold{x}_t \sim \boldsymbol{p}(t, \cdot|\mathrm{x}^{(s)})} \left[\big\|\bold{s}_{\Theta}(t, {\bold{x}}_t) -\nabla \ln \boldsymbol{p}(t, \bold{x}_t|\mathrm{x}^{(s)})\big\|_2^2\right] \right] \\
		\leq &   \frac{4\sqrt{2}}{S} \int_{0}^{\Delta (\Omega_{\Theta}) /2} \sqrt{\ln \big(  N(\Omega_{{\Theta}}, d, u) \big)} \mathrm{d}u \\
		\leq &   \frac{4\sqrt{2}}{S} \int_{0}^{\sqrt{S} B_{\rm score\_err} /2 } \sqrt{\ln \big(  N(\Omega_{{\Theta}}, d, u) \big)} \mathrm{d}u,
	\end{aligned}
	\label{proof-ineq: Dudley's inequality}
\end{equation}
where $N(\Omega_{{\Theta}}, d, u)$ is the covering number \cite[Sec.~C.2]{foucart2013} of set $\Omega_{{\Theta}}$ with respect to $d(\cdot, \cdot)$.

To derive the bound for $\mathscr{R}_{\mathcal{S}}(\mathcal{F}_t)$, it suffices to estimate $N(\Omega_{{\Theta}}, d, u)$. Noting that 
$$
\begin{aligned}
	d ^2 (\Theta, \tilde{\Theta}) 
	= &  \sum_{s=1}^S \Big(   \mathbb{E}_{\bold{x}_t \sim \boldsymbol{p}(t, \cdot|\mathrm{x}^{(s)})} \left[\big\|\bold{s}_{\Theta}(t, {\bold{x}}_t) -\nabla \ln \boldsymbol{p}(t, \bold{x}_t|\mathrm{x}^{(s)})\big\|_2^2\right] \\
	& \qquad \qquad \qquad \qquad   - \mathbb{E}_{\bold{x}_t \sim \boldsymbol{p}(t, \cdot|\mathrm{x}^{(s)})} \left[\big\|\bold{s}_{\tilde{\Theta}}(t, {\bold{x}}_t) -\nabla \ln \boldsymbol{p}(t, \bold{x}_t|\mathrm{x}^{(s)})\big\|_2^2\right] \Big)^2  \\
	\leq &  \sum_{s=1}^S \Big( \mathbb{E} \big[\big\|\bold{s}_{\Theta}(t, {\bold{x}}_t) - \bold{s}_{\tilde{\Theta}}(t, {\bold{x}}_t) \big\|_2 \times
	\big(\big\|\bold{s}_{\Theta}(t, {\bold{x}}_t) -\nabla \ln \boldsymbol{p}(t, \bold{x}_t|\mathrm{x}^{(s)})\big\|_2\\
	&  \qquad \qquad   \qquad \qquad   \qquad \qquad  \qquad \quad   + \big\|\bold{s}_{\tilde{\Theta}}(t, {\bold{x}}_t) -\nabla \ln \boldsymbol{p}(t, \bold{x}_t|\mathrm{x}^{(s)})\big\|_2 \big)
	\big] \Big)^2\\
	\leq &  \sum_{s=1}^S \mathbb{E} \big[\big\|\bold{s}_{\Theta}(t, {\bold{x}}_t) \! - \! \bold{s}_{\tilde{\Theta}}(t, {\bold{x}}_t) \big\|_2^2 \big] \! \times \! 2 \big(\mathbb{E}\big[\big\|\bold{s}_{\Theta}(t, {\bold{x}}_t) -\nabla \ln \boldsymbol{p}(t, \bold{x}_t|\mathrm{x}^{(s)})\big\|_2^2\big] \\
	& \qquad \qquad \qquad \qquad  \qquad \qquad \qquad  \quad   + \mathbb{E} \big[ \big\|\bold{s}_{\tilde{\Theta}}(t, {\bold{x}}_t) -\nabla \ln \boldsymbol{p}(t, \bold{x}_t|\mathrm{x}^{(s)})\big\|_2^2 \big] \big)  \\
	\leq & \sum_{s=1}^S 4 B_{\rm score\_err} C_{\bold{s},2}  \|\Theta - \tilde{\Theta} \|_{\mathcal{E}^L}^2 
	=  4 S B_{\rm score\_err} C_{\bold{s},2} \|\Theta - \tilde{\Theta} \|_{\mathcal{E}^L}^2, \ \forall \Theta, \tilde{\Theta} \in \Omega_{\Theta},
\end{aligned}
$$
where the third inequality is due to H\"older inequality, the forth inequality is due to Lemma~\ref{lemma: property of DNN} and Lemma~\ref{lemma: proeprty of dnn state}. 
%$\Theta \in \Omega_{\Theta} \subset (\mathbb{R}^{n \times n_{\boldsymbol{e}}} \times \mathbb{R}^{n \times n_{\rm d}} \times \mathbb{R}^{n_{\rm d} \times n})^L$,
Applying the above estimate together with \cite[Proposition~C.3]{foucart2013}, and noting that the dimension of $\Theta$ is $\mathrm{dim}(\Theta) = Ln(2n_{\rm d}+n_{\boldsymbol{e}})$, we obtain
$$
\begin{aligned}
	N(\Omega_{{\Theta}}, d, u) \leq &  N \Big(\Omega_{{\Theta}}, \|\cdot\|_{\mathcal{E}^L}, {u}/{\sqrt{4SB_{\rm score\_err} C_{\bold{s},2} }} \Big) \\
	\leq &  \big(1+4B_{\Theta}\sqrt{SB_{\rm score\_err} C_{\bold{s},2} }/u \big)^{Ln(2n_{\rm d}+n_{\boldsymbol{e}})}.
\end{aligned}
$$

Therefore, invoking (\ref{proof-ineq: Dudley's inequality}) yields
\begin{equation}
	\begin{aligned}
		\mathscr{R}_{\mathcal{S}}(\mathcal{F}_t) 
		%	\leq &   \frac{4\sqrt{2}}{S} \int_{0}^{\sqrt{S} B_{\rm score\_err} /2} \sqrt{\ln \big(  N\big(\Omega_{{\Theta}}, \|\cdot\|_{\mathcal{E}^L}, {u}/{ \sqrt{4SB_{\rm score\_err} C_{\bold{s},2} }} \big) \big)} \mathrm{d}u \\
		\leq & \frac{4\sqrt{2}}{S} \int_{0}^{\sqrt{S} B_{\rm score\_err} /2 } \sqrt{Ln(2n_{\rm d}+n_{\boldsymbol{e}}) \ln(1+4B_{\Theta}\sqrt{SB_{\rm score\_err} C_{\bold{s},2} }/u)} \mathrm{d}u \\
		%	\leq & \frac{2\sqrt{2Ln(2n_{\rm d}+n_{\boldsymbol{e}})}}{S} \cdot \sqrt{S} B_{\rm score\_err} \Psi(8B_{\Theta}\sqrt{SB_{\rm score\_err} C_{\bold{s},2} } / \sqrt{S} B_{\rm score\_err} ) \\
		\leq & \frac{2\sqrt{2Ln(2n_{\rm d}+n_{\boldsymbol{e}})}}{\sqrt{S}} \cdot B_{\rm score\_err} \Psi(8B_{\Theta}\sqrt{ C_{\bold{s},2} /B_{\rm score\_err}}),
	\end{aligned}
	\nonumber
\end{equation}
where the second inequality is due to \cite[Lemma~5.4]{schnoor2023generalization} and function $\Psi$ is defined by $\Psi(t)= \sqrt{\ln(1+t) \! + \!  t \big(\ln(1+t) \! - \!  \ln(t)\big)}$.
Substituting this estimate into (\ref{equation-proof: RC bound}) completes the proof.
\end{proof}

We are ready to prove Theorem~\ref{theorem: Convergence of learning problems}.
\begin{proof}[proof of Theorem~\ref{theorem: Convergence of learning problems}]

(i)  Recall the definitions of $\ell^{\rm DSM}, \ell_{N,S}^{\rm DSM}, \ell_{S}^{\rm DSM}$ in (\ref{loss: continuous-infinite-score-loss-function}), (\ref{loss: discrete-time-finite-sample loss}), (\ref{loss: continuous-time-finite-sample loss}), and of $\hat{\mathfrak{R}}_{\mathcal{S}}, \mathfrak{R}_{p_{\rm data}}$ in (\ref{equation: definition of training loss}). We decompose the error
\begin{equation}
	\begin{aligned}
		& \big| \ell^{\rm DSM}_{N,S}(\Theta) - \ell^{\rm DSM} (\Theta) \big| \\
		\leq &   \big| \ell^{\rm DSM}_{N,S}(\Theta) - \ell_S^{\rm DSM} (\Theta) \big| + \big| \ell^{\rm DSM}_{S}(\Theta) - \ell^{\rm DSM} (\Theta) \big| \\
		\leq & \big| \ell^{\rm DSM}_{N,S}(\Theta) - \ell_S^{\rm DSM} (\Theta) \big| +  \frac{1}{T} \int_{0}^{T} | \pmb{\lambda}(t) | \cdot |	\hat{\mathfrak{R}}_{\mathcal{S}} (\bold{s}_{\Theta}(t, {\bold{x}}_t)) -  \mathfrak{R}_{p_{\rm data}} (\bold{s}_{\Theta}(t, {\bold{x}}_t))|  \mathrm{d} t.
	\end{aligned}
	\nonumber
\end{equation}

By Lemma~\ref{lemma: Convergence from {P}_{N,S} to {P}_{S}},
there exist constants $C_1, C_2, C_3$ defined in (\ref{coefficients: C_1_2_3}) such that
\begin{equation}
	\big|\ell^{\rm DSM}_{N,S}(\Theta) - \ell_{S}^{\rm DSM}(\Theta) \big| \leq B_{\rm score\_err}\omega_{\pmb{\lambda}}(h_N) +   B_{\pmb{\lambda}} \sqrt{6B_{\rm score\_err}} \Big[ C_1  \omega_{\pmb{\gamma}}(h_N)  +  C_2 h_N^{1/2} + C_3 h_N \Big],
	\nonumber
\end{equation}
Furthermore, by Lemma~\ref{lemma: generalization error of score functions}, we have for any $\delta \in (0,1)$, with probability at least $1-\delta$, that
\begin{equation}
	\begin{aligned}
		\big | \hat{\mathfrak{R}}_{\mathcal{S}} (\bold{s}_{\Theta}(t, {\bold{x}}_t)) - \mathfrak{R}_{p_{\rm data}} (\bold{s}_{\Theta}(t, {\bold{x}}_t)) \big| \leq &  \frac{C_4}{\sqrt{S}} + 3B_{\rm score\_err}\sqrt{\frac{2 \ln (4/\delta)}{S}}, \ \forall \Theta \in \Omega_{\Theta}.
	\end{aligned}
	\nonumber
\end{equation}	
It then follows by combining the above three inequalities that (\ref{equation: difference of losses}) holds.

%%%%%%%%%%%%%%%%%%%%%%%%%%%%%%%%%%%%%%%%%%%%%%%%%%%%%%%%%%%%%%%%
%%%%%%%%%%%%%%%%%%%%%%%%%%%%%%%%%%%%%%%%%%%%%%%%%%%%%%%%%%%%%%%%
(ii) We now prove the convergence of optimal values.
By (\ref{equation: difference of losses}), for every $\epsilon > 0$,
\begin{equation}
	\mathbb{P} \left( \sup_{\Theta \in \Omega_{\Theta}} |\ell^{\rm DSM}_{N,S}(\Theta) - \ell^{\rm DSM}(\Theta)| > \epsilon \right) \to 0 \quad \text{as } (N,S) \to (\infty,\infty).
	\label{proof inequality-4: probability inequality}
\end{equation}
Let $e_{N,S}:=\sup _{\Theta}\left|\ell^{\rm DSM}_{N,S}(\Theta)-\ell^{\rm DSM}(\Theta)\right|$. Then for all $\Theta \in \Omega_{\Theta}$, 
$$
\ell^{\rm DSM}_{N,S}(\Theta)-e_{N,S} \leq \ell^{\rm DSM}(\Theta) \leq \ell^{\rm DSM}_{N,S}(\Theta) + e_{N,S}.
$$
Taking the infimum over $\Theta$ gives
$
\big|\ell^*_{N,S} - \ell^*\big| \leq e_{N,S}.
$
Together with (\ref{proof inequality-4: probability inequality}), this shows $\ell^*_{N,S}$ converges in probability to $\ell^*$  as $(N,S)\to(\infty,\infty)$.	

Next we prove the convergence of optimal solutions. Let $\Theta^*_{N,S} \in \mathcal{O}^*_{N,S}$. For any $ \Theta^* \in \mathcal{O}^*$, we have 
$$
\ell^{\rm DSM}(\Theta^*_{N,S}) - e_{N,S} \leq  \ell^{\rm DSM}_{N,S}(\Theta^*_{N,S}) 
\leq \ell^{\rm DSM}_{N,S}(\Theta^*) \leq \ell^{*} + e_{N,S},	
$$
which implies 
\begin{equation}
	\ell^{\rm DSM}(\Theta^*_{N,S}) - \ell^{*} \leq 2 e_{N,S} .
	\label{proof inequality-4-1: probability inequality}
\end{equation}

Consider the event $A_{\epsilon}:=\{\mathrm{dist}(\Theta^*_{N,S}, \mathcal{O}^*) \ge \epsilon \}$. 
If $A_{\epsilon}$ occurs, then $\Theta^*_{N,S} \in \mathcal{K}_{\epsilon} := \{ \Theta \in \Omega_{{\Theta}}: \mathrm{dist}(\Theta, \mathcal{O}^*) \ge \epsilon \}$. By Lemma~\ref{lemma: Existence of solutions for learning problems} and the continuity of $\ell^{\rm DSM}$, $\mathcal{O}^*$ is nonempty and closed, and since $\mathcal{K}_{\epsilon}$ is a compact set and $\mathcal{K}_{\epsilon} \cap \mathcal{O}^* = \emptyset$, the gap $\delta_{\epsilon}:= \inf_{\Theta \in \mathcal{K}_{\epsilon}} \ell^{\rm DSM}(\Theta) - \ell^* >0$.
Thus, on event $A_{\epsilon}$,  $\ell^{\rm DSM}(\Theta^*_{N,S}) - \ell^* \ge \delta_{\epsilon}$.
Combing this with (\ref{proof inequality-4-1: probability inequality}) gives 
$$A_{\epsilon} \subset \{ 2 e_{N,S} \ge \delta_{\epsilon} \}.$$
Hence, $P\big(A_{\epsilon} \big) \leq P\big( 2 e_{N,S} \ge \delta_{\epsilon} \big) \to 0$ as $(N,S) \to (\infty, \infty)$, and $\mathrm{dist}(\Theta^*_{N,S}, \mathcal{O}^*)$ converges in probability to $0$. 

%%%%%%%%%%%%%%%%%%%%%%%%%%%%%%%%%%%%%%%%%%%%%%%%%%%%%%%%%%%%%%%%
%%%%%%%%%%%%%%%%%%%%%%%%%%%%%%%%%%%%%%%%%%%%%%%%%%%%%%%%%%%%%%%%
(iii) 
Set $\delta_k := 1/k^2$. Since $\sum_k \delta_k < \infty$, we can choose a subsequence $(N_k, S_k) \to (\infty, \infty)$ such that
\[
\mathbb{P} \left( \sup_{\Theta \in \Omega_{\Theta} } |\ell^{\rm DSM}_{N_k, S_k}(\Theta) - \ell^{\rm DSM}(\Theta)| > 1/k \right) < \frac{1}{k^2}.
\]
Hence,
$
\sum_{k=1}^\infty \mathbb{P} \left( \sup_{\Theta \in \Omega_{\Theta}} |\ell^{\rm DSM}_{N_k, S_k}(\Theta) - \ell^{\rm DSM}(\Theta)| > 1/k \right) < \infty.
$
Applying the Borel–Cantelli lemma \cite[Theorem~4.2.1]{chung2000course} then gives
\begin{equation}
	\sup_{\Theta \in \Omega_{\Theta}} |\ell^{\rm DSM}_{N_k, S_k}(\Theta) - \ell^{\rm DSM}(\Theta)| \to 0, \text{as } k \to \infty, \ \text{almost surely}.
	\label{proof inequality-5: a.s. convergence}
\end{equation}

By assumption (A5), $\Omega_{\Theta}$ is compact. Hence, the sequence $\{\Theta^*_{N_k, S_k}\}_k$ has a convergent subsequence (which we do not relabel). Let $\Theta^* \in \Omega_{\Theta}$ be a cluster point.
From (\ref{proof inequality-5: a.s. convergence}), for any $\epsilon > 0$, there exists a constant $K_1$ such that for all $k > K_1$,
\[
\sup_{\Theta \in \Omega_{\Theta}} |\ell_{N_k, S_k}^{\rm DSM}(\Theta) - \ell^{\rm DSM}(\Theta)| < \epsilon/3, \ \text{almost surely}.
\]
Moreover, using Lemma~\ref{lemma: property of DNN} and Lemma~\ref{lemma: proeprty of dnn state}, along with inequality \eqref{proof inequality: estimation of difference of scores}, we derive that for any $\Theta, \tilde{\Theta} \in \Omega_{\Theta}$ and $t \in [0,T]$,
\begin{equation}
	\begin{aligned}
		& \Big| \hat{\mathfrak{R}}_{\mathcal{S}} (\bold{s}_{\tilde{\Theta}}(t, {\bold{x}}_t)) - \hat{\mathfrak{R}}_{\mathcal{S}} (\bold{s}_{\Theta}(t, {\bold{x}}_t))\Big| \\
		%%%%%%%%%%%%%%%%%%%%%%%%%%%%%%%%%%%%%%%%%%%%%%%%%%%%%%%%
		%%%%%%%%%%%%%%%%%%%%%%%%%%%%%%%%%%%%%%%%%%%%%%%%%%%%%%%%
		% \leq &  \Big| \frac{1}{S} \sum_{s=1}^{S} \mathbb{E}_{\bold{x}_t \sim \boldsymbol{p}(t, \cdot|\mathrm{x}^{(s)})} \! \left[\big\|\bold{s}_{\hat{\Theta}}(t, {\bold{x}}_t) \!-\! \nabla \ln \boldsymbol{p}(t, \bold{x}_t|\mathrm{x}^{(s)})\big\|_2^2 \!-\! \big\|\bold{s}_{\Theta}(t, {\bold{x}}_t) -\nabla \ln \boldsymbol{p}(t, \bold{x}_t|\mathrm{x}^{(s)})\big\|_2^2\right]  \Big| \\
		%%%%%%%%%%%%%%%%%%%%%%%%%%%%%%%%%%%%%%%%%%%%%%%%%%%%%%%%
		%%%%%%%%%%%%%%%%%%%%%%%%%%%%%%%%%%%%%%%%%%%%%%%%%%%%%%%%
		\leq &  \frac{1}{S} \sum_{s=1}^{S} \mathbb{E}_{\bold{x}_t \sim \boldsymbol{p}(t, \cdot|\mathrm{x}^{(s)})} \! \Big[\big\|\bold{s}_{\tilde{\Theta}}(t, {\bold{x}}_t) \!-\! \bold{s}_{\Theta}(t, {\bold{x}}_t)\big\|_2 \cdot   \Big(\big\|\bold{s}_{\tilde{\Theta}}(t, {\bold{x}}_t) -\nabla \ln \boldsymbol{p}(t, \bold{x}_t|\mathrm{x}^{(s)})\big\|_2   \\ 
		& \qquad \qquad \qquad \qquad \qquad \qquad \qquad \qquad \qquad \qquad  + \big\|\bold{s}_{\Theta}(t, {\bold{x}}_t) -\nabla \ln \boldsymbol{p}(t, \bold{x}_t|\mathrm{x}^{(s)})\big\|_2 \Big) \Big]   \\
		%%%%%%%%%%%%%%%%%%%%%%%%%%%%%%%%%%%%%%%%%%%%%%%%%%%%%%%%
		%%%%%%%%%%%%%%%%%%%%%%%%%%%%%%%%%%%%%%%%%%%%%%%%%%%%%%%%
		\leq &    \frac{\sqrt{2}}{S} \sum_{s=1}^{S} \Big(\mathbb{E}_{\bold{x}_t \sim \boldsymbol{p}(t, \cdot|\mathrm{x}^{(s)})} \! \big[C_{\bold{s}}^2(\bold{x}_t) \big\|\tilde{\Theta} - \Theta\big\|^2_{\mathcal{E}^L} \big] \Big)^{1/2}  \\ 
		& \qquad  \times \! \Big( \mathbb{E}_{\bold{x}_t \sim \boldsymbol{p}(t, \cdot|\mathrm{x}^{(s)})} \! \Big[ \big\|\bold{s}_{\tilde{\Theta}}(t, {\bold{x}}_t) \! - \! \nabla \ln \boldsymbol{p}(t, \bold{x}_t|\mathrm{x}^{(s)})\big\|^2_2 \! + \! \big\|\bold{s}_{\Theta}(t, {\bold{x}}_t) \!- \! \nabla \ln \boldsymbol{p}(t, \bold{x}_t|\mathrm{x}^{(s)})\big\|^2_2  \Big]  \Big)^{1/2} \\
		%%%%%%%%%%%%%%%%%%%%%%%%%%%%%%%%%%%%%%%%%%%%%%%%%%%%%%%%
		%%%%%%%%%%%%%%%%%%%%%%%%%%%%%%%%%%%%%%%%%%%%%%%%%%%%%%%%
		\leq &   \sqrt{2C_{\bold{s},2} B_{\rm score\_err}} \big\| \tilde{\Theta} - \Theta\big\|_{\mathcal{E}^L}.
	\end{aligned}
	\nonumber
\end{equation}
% where the second inequality uses Lemma~\ref{lemma: property of DNN}, the third inequality uses Lemma~\ref{lemma: proeprty of dnn state} and	(\ref{proof inequality: estimation of difference of scores}). 
This implies that
\begin{equation}
	\begin{aligned}
		\big| \ell^{\rm DSM}_{N,S}(\tilde{\Theta}) - \ell^{\rm DSM}_{N,S}(\Theta) \big| 
		\leq B_{\pmb{\lambda}} \sqrt{2C_{\bold{s},2} B_{\rm score\_err}} \big\|\tilde{\Theta} - \Theta\big\|_{\mathcal{E}^L}.
	\end{aligned}
	\label{equation: lip of l_score_N_S}
\end{equation}
Therefore, for sufficiently large $k > K_2$,
\[
\big|\ell^{\rm DSM}_{N_k, S_k}(\Theta^*) - \ell^{\rm DSM}_{N_k, S_k}(\Theta^*_{N_k, S_k}) \big| < \epsilon/3.
\]
Thus, for $k> \max\{K_1, K_2\}$ and for all $\Theta \in \Omega_{\Theta}$, almost surely,
\begin{equation}
	\begin{aligned}
		&\ell^{\rm DSM}(\Theta^*) - \ell^{\rm DSM}(\Theta) \\
		\leq & \ell^{\rm DSM}(\Theta^*) - \ell^{\rm DSM}_{N_k, S_k}(\Theta^*_{N_k, S_k}) + \ell^{\rm DSM}_{N_k, S_k}(\Theta) - \ell^{\rm DSM}(\Theta) \\
		\leq & |\ell^{\rm DSM}(\Theta^*) - \ell^{\rm DSM}_{N_k, S_k}(\Theta^*)| + |\ell^{\rm DSM}_{N_k, S_k}(\Theta^*) - \ell^{\rm DSM}_{N_k, S_k}(\Theta^*_{N_k, S_k})| + |\ell^{\rm DSM}_{N_k, S_k}(\Theta) - \ell^{\rm DSM}(\Theta)| \\
		< &\epsilon/3 + \epsilon/3 + \epsilon/3 = \epsilon. 
	\end{aligned}
	\label{proof-ineq: final convergence}
\end{equation}
Since $\epsilon>0$ is arbitrary, we conclude that almost surely,
$
\ell^{\rm DSM}(\Theta^*) \leq \ell^{\rm DSM}(\Theta), \ \forall \Theta \in \Omega_{\Theta}$, 
i.e., $\Theta^*$ is an optimal solution of $(\mathcal{P})$ almost surely. Furthermore, 
\[
\lim_{k \to \infty} \ell_{N_k, S_k}^* = \lim_{k \to \infty} \ell^{\rm DSM}_{N_k, S_k}(\Theta^*_{N_k, S_k}) = \ell^{\rm DSM}(\Theta^*) = \ell^* \quad \text{almost surely},
\]
owing to (\ref{proof-ineq: final convergence}), which completes the proof.
\end{proof}

%%%%%%%%%%%%%%%%%%%%%%%%%%%%%%%%%%%%%%%%%%%%%%%%%%%%%%%%%%%%%%%%%%%%%%%%%%%%%

\subsection{Proof of results in Section~\ref{Sec: 3.3}}
In this subsection, we prove Lemma~\ref{lemma: Discretization error of sampling process} and Theorem~\ref{theorem: sampling error}. 
\begin{proof}[Proof of Lemma~\ref{lemma: Discretization error of sampling process}]
Our analysis for Lemma~\ref{lemma: Discretization error of sampling process} builds upon the framework developed by Chen et al. \cite{chen2023sampling}. 
The proof is organized in three steps. Noting the relationship of (\ref{equation: reverse-time generative SDE}) and (\ref{equation: Euler-Maruyama scheme}), and the continuity and positiveness of $\boldsymbol{g}$ through Assumption (A3), 
we define the process
$$
\bold{b}(t) :=
\frac{ \bold{f}( T \!- \! t, \hat{\bold{x}}_t)  \!- \! \bold{f}( T \! - \! t_N^k, \hat{\bold{x}}_{t_N^k}) }{{\boldsymbol{g}(T-t)}} \! +  \!  \Big( \frac{\boldsymbol{g}^2(T \!- \! t_N^k)\bold{s}_{\Theta}(T \! - \! t_N^k, {\hat{\bold{x}}}_{t_N^k}) }{\boldsymbol{g}(T \!- \! t)}\!- \! \boldsymbol{g}(T \!- \! t) \bold{s}_{\Theta}(T \! - \! t, {\hat{\bold{x}}}_{t})\Big),
$$
for every interval $[t_N^k,t_N^{k+1}]$ ($k=1,\ldots,N-1$) and $t\in[t_N^k,t_N^{k+1}]$.
This is the normalized drift difference that appears in the Girsanov transformation below.

\textbf{{Step~1}}. 	We estimate $\mathbb{E}_{\nu} \int_{0}^{T} [\|\bold{b}(s)\|_2^2] \mathrm{d}s$.

Fix $k$ and $t\in[t_N^k,t_N^{k+1}]$. By Assumption (A2) 
and Lemma~\ref{lemma: property of SDE-1}, we obtain
\begin{equation}
	\begin{aligned}
		& \mathbb{E}_{\nu} \Big[\|  \bold{f}( T \!- \! t, \hat{\bold{x}}_t)  \!- \! \bold{f}( T \! - \! t_N^k, \hat{\bold{x}}_{t_N^k}) \|_2^2 \Big]  \\
		\leq & 2 \mathbb{E}_{\nu} \Big[\|  \bold{f}( T \!- \! t, \hat{\bold{x}}_t)  \!- \bold{f}( T \! - \! t, \hat{\bold{x}}_{t_N^k}) \|_2^2+ \| \bold{f}( T \! - \! t, \hat{\bold{x}}_{t_N^k})- \! \bold{f}( T \! - \! t_N^k, \hat{\bold{x}}_{t_N^k}) \|_2^2 \Big] \\
		\leq & 2  C_{\bold{f}}^2 \mathbb{E}_{\nu} [ \|\hat{\bold{x}}_t -  \hat{\bold{x}}_{t_N^k} \|_2^2 ] + 2  C_{\bold{f}}^2 \mathbb{E}_{\nu} [ \|\hat{\bold{x}}_{t_N^k}\|_2^2] \omega_{\pmb{\alpha}}^2 (h_N) \\
		\leq & 2 C_{\bold{f}}^2 \mathrm{Lip}_{\hat{\bold{x}}, 2} h_N +   2  C_{\bold{f}}^2 B_{\hat{\bold{x}},2} \omega_{\pmb{\alpha}}^2 (h_N),
	\end{aligned}	
	\label{proof-inq: b1}
\end{equation}
where the second inequality is due to assumption $(A_2)$, the last inequality uses Lemma~\ref{lemma: property of SDE-1} in Appendix. 

Next, for the score-related term, we have
\begin{equation}
	\begin{aligned}
		& \mathbb{E}_{\nu} \Big[ \| \boldsymbol{g}^2(T \!- \! t_N^k)\bold{s}_{\Theta}(T \! - \! t_N^k, {\hat{\bold{x}}}_{t_N^k}) -  \boldsymbol{g}^2(T \!- \! t) \bold{s}_{\Theta}(T \! - \! t, {\hat{\bold{x}}}_{t}) \|_2^2 \Big]  \\
		\leq & 2 \mathbb{E}_{\nu} \Big[ \big\| \big( \boldsymbol{g}^2(T \!- \! t_N^k) -  \boldsymbol{g}^2(T \!- \! t) \big) \bold{s}_{\Theta}(T \! - \! t_N^k, {\hat{\bold{x}}}_{t_N^k}) \big\|_2^2 \\
		& \qquad \qquad \qquad \qquad \qquad \qquad +  \big\|  \boldsymbol{g}^2(T \!- \! t) \big( \bold{s}_{\Theta}(T \! - \! t_N^k, {\hat{\bold{x}}}_{t_N^k}) - \bold{s}_{\Theta}(T \! - \! t, {\hat{\bold{x}}}_{t}) \big) \big\|_2^2 \Big]   \\
		\leq &  8B_{\boldsymbol{g}}^2 \omega_{\boldsymbol{g}}^2 (h_N) \mathbb{E}_{\nu} \Big[ \|\bold{s}_{\Theta}(T \! - \! t_N^k, {\hat{\bold{x}}}_{t_N^k}) \|_2^2 \Big] \! + \! 2 B_{\boldsymbol{g}}^2 \mathbb{E}_{\nu} \Big[  \big\| \bold{s}_{\Theta}(T-t_N^k, {\hat{\bold{x}}}_{t_N^k}) \! -\!  \bold{s}_{\Theta}(T \! -\!  t, {\hat{\bold{x}}}_{t}) \big\|^2 \Big], 
	\end{aligned}	
	\nonumber
\end{equation}
owing to Assumption (A3) together with Remark~\ref{remark: proeprty of assumptions}.
Applying Lemma~\ref{lemma: property of DNN} and Lemma~\ref{lemma: property of SDE-1} yields
\begin{equation}
	\begin{aligned}
		& \mathbb{E}_{\nu} \Big[ \| \boldsymbol{g}^2(T \!- \! t_N^k)\bold{s}_{\Theta}(T \! - \! t_N^k, {\hat{\bold{x}}}_{t_N^k}) -  \boldsymbol{g}^2(T \!- \! t) \bold{s}_{\Theta}(T \! - \! t, {\hat{\bold{x}}}_{t}) \|_2^2 \Big]  \\
		%	\leq & 2 \mathbb{E}_{\nu} \Big[ \big\| \big( \boldsymbol{g}^2(T \!- \! t_N^k) -  \boldsymbol{g}^2(T \!- \! t) \big) \bold{s}_{\Theta}(T \! - \! t_N^k, {\hat{\bold{x}}}_{t_N^k}) \big\|_2^2 \\
		%	& \qquad \qquad \qquad \qquad \qquad \qquad +  \big\|  \boldsymbol{g}^2(T \!- \! t) \big( \bold{s}_{\Theta}(T \! - \! t_N^k, {\hat{\bold{x}}}_{t_N^k}) - \bold{s}_{\Theta}(T \! - \! t, {\hat{\bold{x}}}_{t}) \big) \big\|_2^2 \Big]   \\
		%	\leq &  8B_{\boldsymbol{g}}^2 \omega_{\boldsymbol{g}}^2 (h_N) \mathbb{E}_{\nu} \Big[ \|\bold{s}_{\Theta}(T \! - \! t_N^k, {\hat{\bold{x}}}_{t_N^k}) \|_2^2 \Big] \! + \! 2 B_{\boldsymbol{g}}^2 \mathbb{E}_{\nu} \Big[  \big\| \bold{s}_{\Theta}(T-t_N^k, {\hat{\bold{x}}}_{t_N^k}) \! -\!  \bold{s}_{\Theta}(T \! -\!  t, {\hat{\bold{x}}}_{t}) \big\|^2 \Big] \\ 
		\leq &  16B_{\boldsymbol{g}}^2 \omega_{\boldsymbol{g}}^2 (h_N)	\big( B_{\hat{\bold{x}},2} + L^2 \mathrm{Lip}_{\psi}^2 B_{\boldsymbol{e}}^2  B_{\Theta}^{4} \big) \exp(2L \mathrm{Lip}_{\psi}  B_{\Theta}^2) \\
		& + 2 B_{\boldsymbol{g}}^2 \mathbb{E}_{\nu}\big[ \big(1+L^2\mathrm{Lip}^2_{\psi} \mathrm{Lip}^2_{\boldsymbol{e}}  B_{\Theta}^4 \big) \exp(2L \mathrm{Lip}_{\psi} B_{\Theta}^2 ) \big(\|  {\hat{\bold{x}}}_{t_N^k} - {\hat{\bold{x}}}_{t} \|^2_2 + |t-t_N^k|^2\big) \big] \\
		%	\leq & 16B_{\boldsymbol{g}}^2 \omega_{\boldsymbol{g}}^2 (h_N)	\big( B_{\hat{\bold{x}},2} + L^2 \mathrm{Lip}_{\psi}^2 B_{\boldsymbol{e}}^2  B_{\Theta}^{4} \big) \exp(2L \mathrm{Lip}_{\psi}  B_{\Theta}^2) \\
		%	& + 4 B_{\boldsymbol{g}}^2 \big(1+\mathrm{Lip}_{\psi} \mathrm{Lip}_{\boldsymbol{e}}  B_{\Theta}^2 \big)^2 \exp(2L \mathrm{Lip}_{\psi} B_{\Theta}^2 )  \cdot  \mathbb{E}_{\nu}\big[ \|  {\hat{\bold{x}}}_{t_N^k} - {\hat{\bold{x}}}_{t} \|^2_2 + |t-t_N^k|^2 \big]  \\
		\leq &  16B_{\boldsymbol{g}}^2 \omega_{\boldsymbol{g}}^2 (h_N)	\big( B_{\hat{\bold{x}},2} + L^2 \mathrm{Lip}_{\psi}^2 B_{\boldsymbol{e}}^2  B_{\Theta}^{4} \big) \exp(2L \mathrm{Lip}_{\psi}  B_{\Theta}^2) \\
		& + 2 B_{\boldsymbol{g}}^2 \big(1+L^2\mathrm{Lip}^2_{\psi} \mathrm{Lip}^2_{\boldsymbol{e}}  B_{\Theta}^4 \big)  \exp(2L \mathrm{Lip}_{\psi} B_{\Theta}^2 )  \big( \mathrm{Lip}_{\hat{\bold{x}}, 2} h_N + h_N^2 \big). 
	\end{aligned}	
	\label{proof-inq: b2}
\end{equation}

Using Tonelli's theorem and combining the two bounds in (\ref{proof-inq: b1}) and (\ref{proof-inq: b2}) together with the notation $b_{\boldsymbol{g}} := \min_{ t \in [0, T]} |\boldsymbol{g}(t)|$ in Remark~\ref{remark: proeprty of assumptions}, we obtain
\begin{equation}
	\begin{aligned}
		&	\mathbb{E}_{\nu}\!\left[\int_0^T \|\bold{b}(s)\|_2^2 \mathrm{d}s\right]
		= \int_0^T \mathbb{E}_{\nu} \big[\|\bold{b}(s)\|_2^2 \big]\mathrm{d}s \\
		\leq 	&  \frac{2T}{b_{\boldsymbol{g}}^2}\Big[ 2 C_{\bold{f}}^2 \mathrm{Lip}_{\hat{\bold{x}}, 2} h_N +   2  C_{\bold{f}}^2 B_{\hat{\bold{x}},2} \omega_{\pmb{\alpha}}^2 (h_N) \\
		& \qquad \quad + 16B_{\boldsymbol{g}}^2 \omega_{\boldsymbol{g}}^2 (h_N)	\big( B_{\hat{\bold{x}},2} + L^2 \mathrm{Lip}_{\psi}^2 B_{\boldsymbol{e}}^2  B_{\Theta}^{4} \big) \exp(2L \mathrm{Lip}_{\psi}  B_{\Theta}^2) \\
		& \qquad \quad  + 2 B_{\boldsymbol{g}}^2 \big(1+L^2\mathrm{Lip}^2_{\psi} \mathrm{Lip}^2_{\boldsymbol{e}}  B_{\Theta}^4 \big)  \exp(2L \mathrm{Lip}_{\psi} B_{\Theta}^2 )  \big( \mathrm{Lip}_{\hat{\bold{x}}, 2} h_N + h_N^2 \big)	\Big] \\
		:= & C_5 \, h_N + C_6 \omega_{\boldsymbol{\alpha}}^2 (h_N)  + C_7 \omega_{\boldsymbol{g}}^2 (h_N)	< \infty,
	\end{aligned}
	\label{equation: bounded of b}
\end{equation}
where $C_5, C_6, C_7$ are defined in (\ref{coefficients: C_5_6}).
%$C_5 = \frac{2T}{b_{\boldsymbol{g}}^2} \big(2 C_{\bold{f}}^2 \mathrm{Lip}_{\hat{\bold{x}}, 2} + 4 B_{\boldsymbol{g}}^2 \big(1+\mathrm{Lip}_{\psi} \mathrm{Lip}_{\boldsymbol{e}}  B_{\Theta}^2 \big) ^2 \exp(2L \mathrm{Lip}_{\psi} B_{\Theta}^2 ) \big) \big( \mathrm{Lip}_{\hat{\bold{x}}, 2} + h_N  \big) $;  $C_6 =  \frac{2T}{b_{\boldsymbol{g}}^2} 2  C_{\bold{f}}^2 B_{\hat{\bold{x}},2}  $;  $C_7 =  \frac{2T}{b_{\boldsymbol{g}}^2} 16B_{\boldsymbol{g}}^2 	\big( B_{\hat{\bold{x}},2} + L^2 \mathrm{Lip}_{\psi}^2 B_{\boldsymbol{e}}^2  B_{\Theta}^{4} \big) \exp(2L \mathrm{Lip}_{\psi}  B_{\Theta}^2)$.

\textbf{{Step~2}}. 
Set $${\mathscr{L}}_t := \int_{0}^{t} \bold{b}(s) \mathrm{d}\bold{B}_s, \quad \mathscr{E}(\mathscr{L})_t = \exp \Big(\int_{0}^{t} \bold{b}(s) \mathrm{d}\bold{B}_s - \frac{1}{2}\int_{0}^{t} \|\bold{b}(s)\|^2 \mathrm{d}s \Big), 
$$ 
where $t \in [0,T]$, $\bold{B}$ is $\nu$-Brownian motion. By the integrability shown in \textbf{Step~1}, ${\mathscr{L}}_t$ is a well-defined continuous local martingale.
%%%%%%%%%%%%%%%%%%%%%%%%%%%%
Therefore $\mathscr{E}(\mathscr{L})_t$ is a local martingale due to \cite[Proposition~5.11]{le2016brownian}, and there exits a non-decreasing sequence of stopping times $T_m \uparrow T$ such that for any $m$, $\Big(\mathscr{E}(\mathscr{L})_{t \land T_m}\Big)_{t\in[0,T]}$ is a martingale.

Define the truncated exponential and the corresponding probability measure
\[
\mathscr{E}(\mathscr{L}^m)_t := \mathscr{E}(\mathscr{L})_{t\wedge T_m},\qquad
\frac{d\nu_{N,S}^m}{d\nu}:=\mathscr{E}(\mathscr{L}^m)_{T}=\mathscr{E}(\mathscr{L})_{T_m}.
\]
Since $\mathbb{E}_{\nu} \int_{0}^{T}\|\bold{b}(s) \bold{1}_{[0,T_m]}\|^2\mathrm{d}s \leq \mathbb{E}_{\nu} \int_{0}^{T}\|\bold{b}(s)\|^2\mathrm{d}s < \infty$ and $\mathbb{E}_{\nu} \mathscr{E}(\mathscr{L})_{T_m}  = \mathbb{E}_{\nu} \mathscr{E}(\mathscr{L}^m)_{0} = e^0 = 1$, we can get that under $\nu_{N,S}^m$, the process
$$
\boldsymbol{\beta}_t^m := \bold{B}_t - \int_{0}^{t} \bold{b}(s) \bold{1}_{[0, T_m]}(s) \mathrm{d}s
$$
is a Brownian motion through a consequence of Girsanov’s theorem (a combining of Pages 136–139, Theorem~5.22, and Theorem~4.13 in \cite{le2016brownian}, which is restated in \cite[Theorem~8]{chen2023sampling}). Combining the above equality with the definition of $\bold{b}$, we get
\begin{equation}
	\begin{aligned}
		\mathrm{d} \bold{B}_t = & 	\mathrm{d}\boldsymbol{\beta}_t^m + \bold{b}(t)\bold{1}_{[0, T_m]}(t) \mathrm{d}t \\
		= & \mathrm{d}\boldsymbol{\beta}_t^m + \frac{1}{\boldsymbol{g}(T-t)} \big( \bold{f}( T-t, \hat{\bold{x}}_t)  \!- \! \bold{f}( T-t_N^k, \hat{\bold{x}}_{t_N^k})  \big)\bold{1}_{[0, T_m]}(t) \mathrm{d}t \\
		& + \Big( \frac{\boldsymbol{g}^2(T \!- \! t_N^k)\bold{s}_{\Theta}(T \! - \! t_N^k, {\hat{\bold{x}}}_{t_N^k}) }{\boldsymbol{g}(T \!- \! t)}\!- \! \boldsymbol{g}(T \!- \! t) \bold{s}_{\Theta}(T \! - \! t, {\hat{\bold{x}}}_{t})\Big)  \bold{1}_{[0, T_m]}(t) \mathrm{d}t .
	\end{aligned}
	\label{equation: BM-2}
\end{equation}

Recall that under measure $\nu$, we have
\begin{equation}
	\begin{aligned}
		\mathrm{d} {\hat{\bold{x}}}_t  & = \left[- \bold{f}(T-t, {\hat{\bold{x}}}_t) + \boldsymbol{g}^2(T-t) \bold{s}_{\Theta}(T-t, {\hat{\bold{x}}}_t)\right] \mathrm{d} t+\boldsymbol{g}(T-t) \mathrm{d} {\bold{B}}_t ,\\
		\hat{\bold{x}}_0 & \sim \boldsymbol{p}(T, \cdot).
	\end{aligned}
	\nonumber
\end{equation}
Note that the above equation still holds $\nu_{N,S}^m$-a.s. since $\nu_{N,S}^m$ is absolutely continuous w.r.t. $\nu$ (i.e., $\nu_{N,S}^m \ll \nu$). Substituting (\ref{equation: BM-2}) into the above equation we obtain $\nu_{N,S}^m$-a.s.,
\begin{equation}
	\begin{aligned}
		\mathrm{d} \hat{\bold{x}}_t  = &  
		\left[- \bold{f}(T-t, {\hat{\bold{x}}}_t) + \boldsymbol{g}^2(T-t) \bold{s}_{\Theta}(T-t, {\hat{\bold{x}}}_t)\right] \mathrm{d} t+\boldsymbol{g}(T-t) \mathrm{d} \boldsymbol{\beta}_t^m \\
		& + \big( \bold{f}( T-t, \hat{\bold{x}}_t)  \!- \! \bold{f}( T-t_N^k, \hat{\bold{x}}_{t_N^k})  \big)\bold{1}_{[0, T_m]}(t) \mathrm{d}t \\
		& + \boldsymbol{g}(T-t) \Big( \frac{\boldsymbol{g}^2(T \!- \! t_N^k)\bold{s}_{\Theta}(T \! - \! t_N^k, {\hat{\bold{x}}}_{t_N^k}) }{\boldsymbol{g}(T \!- \! t)}\!- \! \boldsymbol{g}(T \!- \! t) \bold{s}_{\Theta}(T \! - \! t, {\hat{\bold{x}}}_{t})\Big)  \bold{1}_{[0, T_m]}(t) \mathrm{d}t \\
		= &  \left[- \bold{f}(T-t_N^k, \hat{\bold{x}}_{t_N^k})  + \boldsymbol{g}^2(T-t_N^k) \bold{s}_{\Theta}(T-t_N^k, {\hat{\bold{x}}}_{t_N^k})  \right] \bold{1}_{[0, T_m]}(t) \mathrm{d} t \\ 
		& + \left[-\bold{f}( T-t, \hat{\bold{x}}_t) + \boldsymbol{g}^2(T-t) \bold{s}_{\Theta}(T-t, \hat{\bold{x}}_t) \right]\bold{1}_{[T_m, T]}(t) \mathrm{d} t   + \boldsymbol{g}(T-t)   \mathrm{d}\boldsymbol{\beta}_t^m,\\
		\hat{\bold{x}}_0 \sim & \boldsymbol{p}(T, \cdot),
	\end{aligned}
	\label{equation: SDE-2}
\end{equation}
i.e. $\nu_{N,S}^m$ is the law of the process which follows the Euler-Maruyama drift up to $T_m$
and the true drift afterwards (driven by the Brownian motion $\boldsymbol{\beta}^m$). 
Therefore, we have the bound
\begin{equation}
	\begin{aligned}
		{KL}( \nu || \nu_{N,S}^m) = & \mathbb{E}_{\nu} \Big[\ln (\frac{d \nu}{d \nu_{N,S}^m}) \Big] 
		=  \mathbb{E}_{\nu} \Big[ - \ln (\mathscr{E}(\mathscr{L})_{T_m}) \Big] \\
		= & \mathbb{E}_{\nu} \Big[-\int_{0}^{T_m} \bold{b}(s) \mathrm{d}\bold{B}_s + \frac{1}{2}\int_{0}^{T_m} \|\bold{b}(s)\|^2 \mathrm{d}s  \Big] \\
		= & \frac{1}{2} \mathbb{E}_{\nu} \Big[\int_{0}^{T_m} \|\bold{b}(s)\|^2 \mathrm{d}s \Big] 
		\leq \frac{1}{2} \mathbb{E}_{\nu} \Big[\int_{0}^{T} \|\bold{b}(s)\|^2 \mathrm{d}s  \Big] \\
		\leq &  C_5 \, h_N + C_6 \, \omega_{\boldsymbol{\alpha}}^2 (h_N)  + C_7 \, \omega_{\boldsymbol{g}}^2 (h_N)	 .
	\end{aligned}
	\label{proof-inq: bound_nu&nu_NS}
\end{equation}

\textbf{Step~3}. We show the final result by an approximation argument. 

As in \cite{chen2023sampling}, we consider a coupling of
$\{\nu_{N,S}^m\}_{m \in \mathbb{N}}$, $\nu$, and a sequence of stochastic processes
$\{\check{\bold{x}}^m_t\}_{m \in \mathbb{N}}$ with laws $\nu_{N,S}^m$, a stochastic process
$\{\check{\bold{x}}_t\}$ with law $\nu_{N,S}$, and a single Brownian motion
$\{\bold{B}_t\}$, all defined on the same probability space $(\Omega,\mathcal{F},P)$.
The processes $\{\check{\bold{x}}^m_t\}$ and $\{\check{\bold{x}}_t\}$ satisfy the following SDEs.
\begin{equation}
	\begin{aligned}
		\mathrm{d} \check{\bold{x}}^m_t  
		= &  \left[- \bold{f}( T-t_N^k, \check{\bold{x}}^m_{t_N^k})  + \boldsymbol{g}^2(T-t_N^k) \bold{s}_{\Theta}(T-t_N^k, {\check{\bold{x}}}^m_{t_N^k})  \right] \bold{1}_{[0, T_m]}(t) \mathrm{d} t \\
		& + \left[-\bold{f}( T-t, \check{\bold{x}}^m_t) + \boldsymbol{g}^2(T-t) \bold{s}_{\Theta}(T-t, \check{\bold{x}}_t) \right]\bold{1}_{[T_m, T]}(t)\mathrm{d} t   + \boldsymbol{g}(T-t)   \mathrm{d} {\bold{B}}_t,
	\end{aligned}
	\nonumber
\end{equation}
and
\begin{equation}
	\begin{aligned}
		\mathrm{d} \check{\bold{x}}_t  
		=   - \bold{f}( T-t_N^k, \check{\bold{x}}_{t_N^k})  + \boldsymbol{g}^2(T-t_N^k) \bold{s}_{\Theta}(T-t_N^k, {\check{\bold{x}}}_{t_N^k})  \mathrm{d} t   + \boldsymbol{g}(T-t)   \mathrm{d} {\bold{B}}_t,
	\end{aligned}
	\nonumber
\end{equation}
respectively, with $\check{\bold{x}}^m_0 = \check{\bold{x}}_0$ a.s. and $\check{\bold{x}}_0\sim \boldsymbol{p}(T, \cdot)$.

%	There exists an integer $N>0$ such that $T-\epsilon < T_m$ if $m>N$, and for all $t \in [0, T]$,
%	$$
%	\begin{aligned}
	%		\|\pi_{\varepsilon}\left(\check{\bold{x}}^m\right)(t) - \pi_{\varepsilon}\left(\check{\bold{x}}\right)(t)\|_2 = & \| \check{\bold{x}}^m_{t \wedge (T-\varepsilon)} - \check{\bold{x}}_{t \wedge (T-\varepsilon)}\|_2 \\
	%		= & \left\{ \begin{array}{ll}
		%			\| \check{\bold{x}}^m_t - \check{\bold{x}}_t\|_2 = 0,  \text{ a.s., if } 0 \leq t \leq T-\varepsilon \\
		%			\| \check{\bold{x}}^m_{T-\varepsilon} - \check{\bold{x}}_{T-\varepsilon}\|_2=0 , \text{ a.s., if } T-\varepsilon < t \leq T
		%		\end{array} \right. \\
	%		< & \epsilon.
	%	\end{aligned}
%	$$
%	
Fix $\varepsilon>0$ and define the truncation map
$\pi_\varepsilon:\mathcal C([0,T];\mathbb{R}^d)\to\mathcal C([0,T];\mathbb{R}^d)$ by
$\pi_\varepsilon(\eta)(t)=\eta(t\wedge (T-\varepsilon))$.
Noting that $\check{\bold{x}}_t^m=\check{\bold{x}}_t$ $a.s.$ for every $t \in\left[0, T_m\right]$. 
Since $T_m \uparrow T$ a.s., for every fixed $\varepsilon>0$ and for a.e.\ $\omega$, there exists	an integer $m_0(\omega,\varepsilon)$ such that $T-\varepsilon < T_m(\omega)$ for all
$m \ge m_0(\omega,\varepsilon)$.
Therefore, for a.e.\ $\omega$, we have
$\pi_\varepsilon(\check{\bold{x}}^m(\omega)) = \pi_\varepsilon(\check{\bold{x}}(\omega))$
for all sufficiently large $m$. 
Hence, we have $\pi_{\varepsilon}\left(\check{\bold{x}}^m\right) \to  \pi_{\varepsilon}\left(\check{\bold{x}}\right)$ a.s., uniformly
over $[0, T]$, and $h(\pi_{\varepsilon}\left(\check{\bold{x}}^m\right)) \to h(\pi_{\varepsilon}\left(\check{\bold{x}}\right))$ a.s. as $m \to \infty$ for any bounded, continuous function $h: \mathcal{C}\left([0, T] ; \mathbb{R}^d\right) \to \mathbb{R}$.
Then 
$$
\begin{aligned}
	\int\! h \mathrm{d} (\left(\pi_{\varepsilon}\right)_{\#} \nu_{N,S}^m) \! = & \!  \int_{\mathcal{C}\left([0, T] ; \mathbb{R}^d\right)} h (\pi_{\varepsilon}(\omega)) \nu_{N,S}^m (\mathrm{d}\omega) \\
	= & \! \int_{\Omega} h (\pi_{\varepsilon}(\check{\bold{x}}^m(\omega))) P (\mathrm{d}\omega) 
	\! \to \!  \int_{\Omega} h (\pi_{\varepsilon}(\check{\bold{x}}(\omega))) P (\mathrm{d}\omega) \\
	& \qquad \qquad \qquad \qquad \ = \!  \int_{\mathcal{C}\left([0, T] ; \mathbb{R}^d\right)} h (\pi_{\varepsilon}(\omega)) \nu_{N,S} (\mathrm{d}\omega)\!  = \!	\int \! h \mathrm{d} (\left(\pi_{\varepsilon}\right)_{\#} \nu_{N,S}),
\end{aligned}
$$
and therefore, $\pi_{\varepsilon \#} \nu_{N,S}^m $ converges narrowly to  $ \pi_{\varepsilon \#} \nu_{N,S}$  \cite[Sec.~5.1]{ambrosio2005gradient}. Using the joint lower semicontinuity of the KL divergence, the data-processing inequality \cite[Lemma 9.4.3 and Lemma 9.4.5]{ambrosio2005gradient} and (\ref{proof-inq: bound_nu&nu_NS}), we obtain
$$
\begin{aligned}
	{KL}\left(\left(\pi_{\varepsilon}\right)_{\#} \nu \| \left(\pi_{\varepsilon}\right)_{\#} \nu_{N,S}\right) \! &  \leq \! \liminf _{m \rightarrow \infty} \mathrm{KL}\left(\left(\pi_{\varepsilon}\right)_{\#}  \nu \|\left(\pi_{\varepsilon}\right)_{\#}  \nu_{N,S}^m\right) \\
	&\!  \leq \! \liminf _{m \rightarrow \infty} \mathrm{KL}\left(\nu \| \nu_{N,S}^m\right) \\
	%	& \lesssim\left(\varepsilon_{\text {score }}^2+L^2 d h+L^2 \mathfrak{m}_2^2 h^2\right) T
	& \leq  C_5 \, h_N + C_6 \omega_{\boldsymbol{\alpha}}^2 (h_N)  + C_7 \omega_{\boldsymbol{g}}^2 (h_N)	 .
\end{aligned}
$$

Now let $\varepsilon_k = 1/k$. Since for every
$\eta\in\mathcal C([0,T];\mathbb R^d)$,
$\pi_{\varepsilon_k}(\eta) \to \eta$ uniformly over $[0,T]$, it follows from
\cite[Corollary 9.4.6]{ambrosio2005gradient} that
\[
\lim_{k\to\infty}
\mathrm{KL}\big((\pi_{\varepsilon_k})_{\#}\nu \,\|\, (\pi_{\varepsilon_k})_{\#}\nu_{N,S}\big)
=
\mathrm{KL}(\nu \,\|\, \nu_{N,S}),
\]
which completes the proof.

%Noting that $\pi_{\varepsilon}(\omega) \to \omega$ as $\varepsilon \to 0$ uniformly over $[0,T]$, we have $$\lim\limits_{\varepsilon \to 0} {KL}\big(\left(\pi_{\varepsilon}\right)_{\#} \nu \| \left(\pi_{\varepsilon}\right)_{\#} \nu_{N,S}\big) = {KL}\left(\nu \| \nu_{N,S}\right)$$by \cite[Corollary 9.4.6]{ambrosio2005gradient}, which completes the proof.
\end{proof}

We are ready to prove Theorem~\ref{theorem: sampling error}.
\begin{proof}[Proof of Theorem~\ref{theorem: sampling error}]
\textbf{Step~1}. We prove ${KL}\left(\mu \| \nu\right) \leq  \frac{T}{2}\,\ell^{\rm ESM}(\Theta)$.

Let $\bar{\bold{x}}_t$ be the solution of \eqref{equation: reverse-time SDE} with initialization ${\boldsymbol{p}}(T,\cdot)$. Define the process
$$
\bold{b}(t) :=
\boldsymbol{g}(T-t)\big(\bold{s}_{\Theta}(T-t, \bar{\bold{x}}_t) - \nabla \ln \boldsymbol{p}(T-t, \bar{\bold{x}}_t)\big), t\in[0, T].
$$
By Assumption (A2) (A4), Lemma~\ref{lemma: proeprty of dnn state} and Lemma~\ref{lemma: property of SDE-1}, we have
\begin{equation}
	\begin{aligned}
		\mathbb{E}_{\mu} \Big[\| \bold{b}(t) \|_2^2 \Big]  
		\! \leq \! 2B_{\boldsymbol{g}}^2 \Big( B_{\bold{s},2} + 2 C_{\boldsymbol{p}}^2 \big(1 + B_{\bold{x},2} \big)\Big).
	\end{aligned}	
	\label{proof-inq: b1-2}
\end{equation}
Using Tonelli's theorem we obtain
\begin{equation}
	\begin{aligned}
		&	\mathbb{E}_{\mu}\!\left[\int_0^T \|\bold{b}(s)\|_2^2 \mathrm{d}s\right]
		= \int_0^T \mathbb{E}_{\mu} \big[\|\bold{b}(s)\|_2^2 \big]\mathrm{d}s 
		\leq 2T B_{\boldsymbol{g}}^2 \Big( B_{\bold{s},2} + 2 C_{\boldsymbol{p}}^2 \big(1 + B_{\bold{x},2} \big)\Big) < +\infty.
	\end{aligned}
	\label{equation: bounded of b-2}
\end{equation}

Set ${\mathscr{L}}_t := \int_{0}^{t} \bold{b}(s) \mathrm{d}\bold{B}_s, \quad \mathscr{E}(\mathscr{L})_t = \exp \Big(\int_{0}^{t} \bold{b}(s) \mathrm{d}\bold{B}_s - \frac{1}{2}\int_{0}^{t} \|\bold{b}(s)\|^2 \mathrm{d}s \Big), 
$
where $t \in [0,T]$, $\bold{B}$ is $\mu$-Brownian motion. By (\ref{equation: bounded of b-2}), ${\mathscr{L}}_t$ is a well-defined continuous martingale and $\mathscr{E}(\mathscr{L})_t$ is a local martingale due to \cite[Proposition~5.11]{le2016brownian}. Therefore, there exits a non-decreasing sequence of stopping times $T_m \uparrow T$ such that for any $m$, $\Big(\mathscr{E}(\mathscr{L})_{t \land T_m}\Big)_{t\in[0,T]}$ is a martingale.  
Define the truncated exponential and the corresponding probability measure
\[
\mathscr{E}(\mathscr{L}^m)_t := \mathscr{E}(\mathscr{L})_{t\wedge T_m},\qquad
\frac{d\nu^m}{d\mu}:=\mathscr{E}(\mathscr{L}^m)_{T}=\mathscr{E}(\mathscr{L})_{T_m}.
\]
Since $\mathbb{E}_{\mu} \int_{0}^{T}\|\bold{b}(s) \bold{1}_{[0,T_m]}\|^2\mathrm{d}s \leq \mathbb{E}_{\mu} \int_{0}^{T}\|\bold{b}(s)\|^2\mathrm{d}s < \infty$ and $\mathbb{E}_{\mu} \mathscr{E}(\mathscr{L})_{T_m}  = \mathbb{E}_{\mu} \mathscr{E}(\mathscr{L}^m)_{0} = e^0 = 1$, we can get that under $\nu^m$, the process
$$
\boldsymbol{\beta}_t^m := \bold{B}_t - \int_{0}^{t} \bold{b}(s) \bold{1}_{[0, T_m]}(s) \mathrm{d}s
$$
is a Brownian motion through a consequence of Girsanov’s theorem (a combining of Pages 136–139, Theorem~5.22, and Theorem~4.13 in \cite{le2016brownian}, which is restated in \cite[Theorem~8]{chen2023sampling}). 

Recall that under measure $\mu$, we have
\begin{equation}
	\begin{aligned}
		\mathrm{d} \bar{\bold{x}}_t \!  =  \! \left[- \bold{f}(T-t, \bar{\bold{x}}_t) \! + \!  \boldsymbol{g}^2(T-t) \nabla \ln \boldsymbol{p}(T-t, \bar{\bold{x}}_t)\right] \mathrm{d} t \! +\! \boldsymbol{g}(T-t) \mathrm{d} {\bold{B}}_t , \
		\bar{\bold{x}}_0  \sim \boldsymbol{p}(T, \cdot).
	\end{aligned}
	\nonumber
\end{equation}
Note that the above equation still holds $\nu^m$-a.s. since $\nu^m$ is absolutely continuous w.r.t. $\mu$ (i.e., $\nu^m \ll \mu$). Then we obtain $\nu^m$-a.s.,
\begin{equation}
	\begin{aligned}
		\mathrm{d} \bar{\bold{x}}_t \!  = & \! \left[- \bold{f}(T-t, \bar{\bold{x}}_t) \! + \!  \boldsymbol{g}^2(T-t) \nabla \ln \boldsymbol{p}(T-t, \bar{\bold{x}}_t)\right] \mathrm{d} t \! +\! \boldsymbol{g}(T-t) \mathrm{d}\boldsymbol{\beta}_t^m \\
		& + \! \boldsymbol{g}^2(T-t)\big(\bold{s}_{\Theta}(T-t, \bar{\bold{x}}_t) - \nabla \ln \boldsymbol{p}(T-t, \bar{\bold{x}}_t)\big) \bold{1}_{[0, T_m]}(t) \mathrm{d} t \\
		%%%%%%%%%%%%%%%%%%%%%%%%%%%%%%%%%%%%%%%%%%%%%%%%%%%%%%%
		= & \! \left[- \bold{f}(T-t, \bar{\bold{x}}_t) \! + \!  \boldsymbol{g}^2(T-t) \bold{s}_{\Theta}(T-t, \bar{\bold{x}}_t) \right] \bold{1}_{[0, T_m]}(t) \mathrm{d} t \\
		& + \left[- \bold{f}(T-t, \bar{\bold{x}}_t) \! + \!  \boldsymbol{g}^2(T-t) \nabla \ln \boldsymbol{p}(T-t, \bar{\bold{x}}_t)\right]  \bold{1}_{[T_m, T]}(t) \mathrm{d} t \! +\! \boldsymbol{g}(T-t) \mathrm{d}\boldsymbol{\beta}_t^m . 
	\end{aligned}
	\label{equation: SDE-2-2}
\end{equation}
Therefore, we have the bound
\begin{equation}
	\begin{aligned}
		{KL}( \mu || \nu^m) = & \mathbb{E}_{\mu} \Big[\ln \Big(\frac{d \mu}{d \nu^m}\Big) \Big] 
		=  \mathbb{E}_{\mu} \Big[ - \ln (\mathscr{E}(\mathscr{L})_{T_m}) \Big] \\
		= & \mathbb{E}_{\mu} \Big[-\int_{0}^{T_m} \bold{b}(s) \mathrm{d}\bold{B}_s + \frac{1}{2}\int_{0}^{T_m} \|\bold{b}(s)\|^2 \mathrm{d}s  \Big] \\
		= & \frac{1}{2} \mathbb{E}_{\mu} \Big[\int_{0}^{T_m} \|\bold{b}(s)\|^2 \mathrm{d}s \Big] 
		\leq \frac{1}{2} \mathbb{E}_{\mu} \Big[\int_{0}^{T} \|\bold{b}(s)\|^2 \mathrm{d}s  \Big] \\
		= &   \frac{1}{2} \int_{0}^{T} \boldsymbol{g}^2(T-t)\,
		\mathbb{E}_{\bar{\bold{x}}_t \sim \boldsymbol{p}(T-t,\cdot)}
		\!\left[\|\nabla \ln \boldsymbol{p}(T-t, \bar{\bold{x}}_t) - \bold{s}_{\Theta}(T-t, \bar{\bold{x}}_t)\|_2^2\right]\mathrm{d}t \\
		= & \frac{T}{2}\,\ell^{\rm ESM}(\Theta), 
	\end{aligned}
	\label{proof-inq: bound_nu&nu_NS-2}
\end{equation}
where we use the definition of $\ell^{\rm ESM}(\Theta)$ in (\ref{loss: ESM}) and $\pmb{\lambda}=\boldsymbol{g}^2$ in last equality.

We next considering a coupling of $\{\nu^m\}_{m \in \mathcal{N}}, \nu$ and a sequence of stochastic processes $\{ \hat{\bold{x}}^m_t\} $ with laws $\nu^m$, $m \in \mathcal{N}$, a stochastic process $\{ \hat{\bold{x}}_t \}$ with law $\nu$, and a single Brownian motion $\{\bold{B}_t\}$, all defined on the same probability space $(\Omega, \mathcal{F}, P)$. The process $\{\hat{\bold{x}}^m_t\}$, $\{\hat{\bold{x}}_t\}$ satisfied the following SDEs
\begin{equation}
	\begin{aligned}
		\mathrm{d} \hat{\bold{x}}^m_t  
		= &  \left[- \bold{f}(T-t, \hat{\bold{x}}^m_t) \! + \!  \boldsymbol{g}^2(T-t) \bold{s}_{\Theta}(T-t, \hat{\bold{x}}^m_t) \right] \bold{1}_{[0, T_m]}(t) \mathrm{d} t \\
		& + \left[- \bold{f}(T-t, \hat{\bold{x}}^m_t) \! + \!  \boldsymbol{g}^2(T-t) \nabla \ln \boldsymbol{p}(T-t, \hat{\bold{x}}^m_t)\right]  \bold{1}_{[T_m, T]}(t) \mathrm{d} t \! +\! \boldsymbol{g}(T-t) \mathrm{d} {\bold{B}}_t,\\
		%%%%%%%%%%%%%%%%%%%%%%%%%%%%%%%%%%%%%%%%%%%%
		%%%%%%%%%%%%%%%%%%%%%%%%%%%%%%%%%%%%%%%%%%%%
		\mathrm{d} \hat{\bold{x}}_t  
		=  & \left[- \bold{f}(T-t, \hat{\bold{x}}_t) \! + \!  \boldsymbol{g}^2(T-t) \bold{s}_{\Theta}(T-t, \hat{\bold{x}}_t) \right]  \mathrm{d} t   + \boldsymbol{g}(T-t)   \mathrm{d} {\bold{B}}_t,
	\end{aligned}
	\nonumber
\end{equation}
respectively, with $\hat{\bold{x}}^m_0 = \hat{\bold{x}}_0$ a.s. and $\hat{\bold{x}}_0\sim \boldsymbol{p}(T, \cdot)$.

Fix $\varepsilon>0$ and define the truncation map
$\pi_\varepsilon:\mathcal C([0,T];\mathbb{R}^d)\to\mathcal C([0,T];\mathbb{R}^d)$ by
$\pi_\varepsilon(\omega)(t)=\omega(t\wedge (T-\varepsilon))$.
Noting that $\hat{\bold{x}}_t^m=\hat{\bold{x}}_t$ $a.s.$ for every $t \in\left[0, T_m\right]$. 
Since $T_m \uparrow T$ a.s., for every fixed $\varepsilon>0$ and for a.e.\ $\omega$, there exists	an integer $m_0(\omega,\varepsilon)$ such that $T-\varepsilon < T_m(\omega)$ for all
$m \ge m_0(\omega,\varepsilon)$.
Therefore, for a.e.\ $\omega$, we have
$\pi_\varepsilon(\hat{\bold{x}}^m(\omega)) = \pi_\varepsilon(\hat{\bold{x}}(\omega))$
for all sufficiently large $m$. 
Hence, we have $\pi_{\varepsilon}\left(\hat{\bold{x}}^m\right) \to  \pi_{\varepsilon}\left(\hat{\bold{x}}\right)$ a.s., uniformly
on $[0, T]$, and $h(\pi_{\varepsilon}\left(\hat{\bold{x}}^m\right)) \to h(\pi_{\varepsilon}\left(\hat{\bold{x}}\right))$ a.s. as $m \to \infty$ for any bounded, continuous function $h: \mathcal{C}\left([0, T] ; \mathbb{R}^d\right) \to \mathbb{R}$.
Then 
$$
\begin{aligned}
	\int\! h \mathrm{d} (\left(\pi_{\varepsilon}\right)_{\#} \nu^m) \! = & \!  \int_{\mathcal{C}\left([0, T] ; \mathbb{R}^d\right)} h (\pi_{\varepsilon}(\omega)) \nu^m (\mathrm{d}\omega) \\
	= & \! \int_{\Omega} h (\pi_{\varepsilon}(\hat{\bold{x}}_t^m(\omega))) P (\mathrm{d}\omega) 
	\! \to \!  \int_{\Omega} h (\pi_{\varepsilon}(\hat{\bold{x}_t}(\omega))) P (\mathrm{d}\omega) \\
	& \qquad \qquad \qquad \qquad \qquad = \!  \int_{\mathcal{C}\left([0, T] ; \mathbb{R}^d\right)} h (\pi_{\varepsilon}(\omega)) \nu (\mathrm{d}\omega)\!  = \!	\int \! h \mathrm{d} (\left(\pi_{\varepsilon}\right)_{\#} \nu),
\end{aligned}
$$
and therefore, $\pi_{\varepsilon \#} \nu^m $ narrowly convergence to $ \pi_{\varepsilon \#} \nu$  \cite[Sec.~5.1]{ambrosio2005gradient}. Using the joint lower semicontinuity of the KL divergence, the data-processing inequality \cite[Lemma 9.4.3 and Lemma 9.4.5]{ambrosio2005gradient} and (\ref{proof-inq: bound_nu&nu_NS}), we obtain
$$
\begin{aligned}
	{KL}\left(\left(\pi_{\varepsilon}\right)_{\#} \mu \| \left(\pi_{\varepsilon}\right)_{\#} \nu\right) \! &  \leq \! \liminf _{m \rightarrow \infty} \mathrm{KL}\left(\left(\pi_{\varepsilon}\right)_{\#}  \mu \|\left(\pi_{\varepsilon}\right)_{\#}  \nu^m\right) \\
	&\!  \leq \! \liminf _{m \rightarrow \infty} \mathrm{KL}\left(\mu \| \nu^m\right) 
	\leq  \frac{T}{2}\,\ell^{\rm ESM}(\Theta) .
\end{aligned}
$$

Now let $\varepsilon_k = 1/k$. Since for every
$\eta\in\mathcal C([0,T];\mathbb R^d)$,
$\pi_{\varepsilon_k}(\eta) \to \eta$ uniformly on $[0,T]$, it follows from
\cite[Corollary 9.4.6]{ambrosio2005gradient} and the above inequality that
\begin{equation}
	\begin{aligned}
		{KL}\left(\mu \| \nu\right) = \lim\limits_{\varepsilon \to 0} {KL}\big(\left(\pi_{\varepsilon}\right)_{\#} \mu \| \left(\pi_{\varepsilon}\right)_{\#} \nu\big)  \leq  \frac{T}{2}\,\ell^{\rm ESM}(\Theta).
		\label{equation: KL&ESM}
	\end{aligned}
\end{equation}

\textbf{Step~2}. We estimate $\operatorname{TV}(\check{\boldsymbol{p}}(T,\cdot)\|p_{\rm data})$.

A direct calculation gives
$$
\begin{aligned}
	& \mathbb{E}_{\mathrm{x} \sim p_{\rm data}} \!\left[\mathbb{E}_{\bold{x}_t \sim \boldsymbol{p}(t,\cdot|\mathrm{x})} \!\big[\|\bold{s}_{\Theta}(t,\bold{x}_t) - \nabla \ln \boldsymbol{p}(t,\bold{x}_t|\mathrm{x})\|_2^2\big]\right] \\ 
	%%%%%%%%%%%%%%%%%%%%%%%%%%%%%%%%%%%%%%%%%%%%%%%%%%%%%%%%%%%%%%%%%%%%
	= & \mathbb{E}_{\mathrm{x} \sim p_{\rm data}} \!\left[\mathbb{E}_{\bold{x}_t \sim \boldsymbol{p}(t,\cdot|\mathrm{x})} \!\left[   \|\bold{s}_{\Theta}(t,\bold{x}_t)\|_2^2 + \|\nabla \ln \boldsymbol{p}(t,\bold{x}_t|\mathrm{x})\|_2^2 - 2 \bold{s}_{\Theta}(t,\bold{x}_t)^{\top} \nabla \ln \boldsymbol{p}(t,\bold{x}_t|\mathrm{x}) \right] \right]\\
	%%%%%%%%%%%%%%%%%%%%%%%%%%%%%%%%%%%%%%%%%%%%%%%%%%%%%%%%%%%%%%%%%%%%
	= &	\mathbb{E}_{\bold{x}_t \sim \boldsymbol{p}(t,\cdot)} \!\left[\|\bold{s}_{\Theta}(t,\bold{x}_t)\|_2^2\right]
	- 2 \mathbb{E}_{\mathrm{x} \sim p_{\rm data}} \!\left[\mathbb{E}_{\bold{x}_t \sim \boldsymbol{p}(t,\cdot|\mathrm{x})} \!\left[  \bold{s}_{\Theta}(t,\bold{x}_t)^{\top} \nabla \ln \boldsymbol{p}(t,\bold{x}_t|\mathrm{x}) \right] \right] \\
	& + \mathbb{E}_{\mathrm{x} \sim p_{\rm data}} \!\left[\mathbb{E}_{\bold{x}_t \sim \boldsymbol{p}(t,\cdot|\mathrm{x})} \!\left[  \|\nabla \ln \boldsymbol{p}(t,\bold{x}_t|\mathrm{x})\|_2^2  \right] \right].
\end{aligned}
$$
By Lemma~\ref{lemma: property of DNN}, Lemma~\ref{lemma: property of SDE-1}, bounded assumption of data in (A1) and (\ref{assumption: changable condition}), we also have 
$$
\begin{aligned}
	& \mathbb{E}_{\mathrm{x} \sim p_{\rm data}} \!\left[\mathbb{E}_{\bold{x}_t \sim \boldsymbol{p}(t,\cdot|\mathrm{x})} \!\left[  \bold{s}_{\Theta}(t,\bold{x}_t)^{\top} \nabla \ln \boldsymbol{p}(t,\bold{x}_t|\mathrm{x}) \right] \right] \\
	= & \int_{\mathrm{x}} \boldsymbol{p}(0,\mathrm{x}) \int_{\bold{x}_t} \bold{s}_{\Theta}(t,\bold{x}_t)^{\top} \boldsymbol{p}(t,\bold{x}_t|\mathrm{x})  \nabla \ln \boldsymbol{p}(t,\bold{x}_t|\mathrm{x}) \mathrm{d} \bold{x}_t \mathrm{d}\mathrm{x} \\
	= & \int_{\bold{x}_t} \bold{s}_{\Theta}(t,\bold{x}_t)^{\top} \,   \nabla \! \int_{\mathrm{x}} \boldsymbol{p}(0,\mathrm{x}) \boldsymbol{p}(t,\bold{x}_t|\mathrm{x}) \mathrm{d} \mathrm{x} \mathrm{d}\bold{x}_t \\
	= & \int_{\bold{x}_t} \bold{s}_{\Theta}(t,\bold{x}_t)^{\top}   \nabla  \boldsymbol{p}(t,\bold{x}_t) \mathrm{d}\bold{x}_t  = \mathbb{E}_{\bold{x}_t \sim \boldsymbol{p}(t,\cdot)} \left[\bold{s}_{\Theta}(t, \bold{x}_t )^{\top}   \nabla  \ln\boldsymbol{p}(t, \bold{x}_t )\right].
\end{aligned}
$$
Combing the above two equations, we can decompose
\begin{equation}
	\begin{aligned}
		& \mathbb{E}_{\mathrm{x} \sim p_{\rm data}}\!\left[\mathbb{E}_{\bold{x}_t \sim \boldsymbol{p}(t,\cdot|\mathrm{x})} \!\big[\|\bold{s}_{\Theta}(t,\bold{x}_t) - \nabla \ln \boldsymbol{p}(t,\bold{x}_t|\mathrm{x})\|_2^2\big]\right] \\
		= &	\mathbb{E}_{\bold{x}_t \sim \boldsymbol{p}(t,\cdot)} \!\left[\|\bold{s}_{\Theta}(t,\bold{x}_t)\|_2^2\right]
		- 2\mathbb{E}_{\bold{x}_t \sim \boldsymbol{p}(t,\cdot)} \left[\bold{s}_{\Theta}(t, \bold{x}_t )^{\top}   \nabla \ln \boldsymbol{p}(t, \bold{x}_t )\right] \\
		& + \mathbb{E}_{\mathrm{x} \sim p_{\rm data}} \!\left[\mathbb{E}_{\bold{x}_t \sim \boldsymbol{p}(t,\cdot|\mathrm{x})} \!\left[  \|\nabla \ln \boldsymbol{p}(t,\bold{x}_t|\mathrm{x})\|_2^2  \right] \right]  \\
		= & 	\mathbb{E}_{\bold{x}_t \sim \boldsymbol{p}(t,\cdot)} \!\left[\|\bold{s}_{\Theta}(t,\bold{x}_t) - \nabla \ln \boldsymbol{p}(t,\bold{x}_t)\|_2^2\right]
		+  \boldsymbol{I}(t),	
	\end{aligned}
\end{equation}
where $\boldsymbol{I}(t) = \mathbb{E}_{\mathrm{x} \sim p_{\rm data}}\mathbb{E}_{\bold{x}_t \sim \boldsymbol{p}(t,\cdot|\mathrm{x})} \!\big[ \| \nabla \ln \boldsymbol{p}(t,\bold{x}_t|\mathrm{x})\|_2^2\big] - \mathbb{E}_{\bold{x}_t \sim \boldsymbol{p}(t,\cdot)}\!\left[\|\nabla \ln \boldsymbol{p}(t,\bold{x}_t)\|_2^2\right]$. By Jensen's inequality, $\boldsymbol{I}(t) \ge 0 $, which yields
\begin{equation}
	\ell^{\rm DSM}(\Theta) = \ell^{\rm ESM}(\Theta) + \frac{1}{T}\int_0^T \pmb{\lambda}(t) \boldsymbol{I}(t) \mathrm{d}t \ge \ell^{\rm ESM}(\Theta).
	\label{equation: DSM&ESM}
\end{equation}
Combining (\ref{equation: KL&ESM}) and (\ref{equation: DSM&ESM}), we get
\begin{equation}
	\begin{aligned}
		\operatorname{KL}(\mu\|\nu) \leq  \frac{T}{2}\ell^{\rm DSM}(\Theta) \leq \frac{T}{2} |\ell^{\rm DSM}(\Theta) - \ell_{N, S}^{\rm DSM}(\Theta) | +  \frac{T}{2} \ell_{N, S}^{\rm DSM}(\Theta), \Theta \in \Omega_{{\Theta}}.
	\end{aligned}
	\nonumber
\end{equation}
Applying (\ref{equation: difference of losses}) into the above inequality, we get, 
for any $\delta \in (0,1)$, with probability at least $1-\delta$, that
\begin{equation}
	\begin{aligned}
		\operatorname{KL}(\mu\|\nu) \leq
		&   \frac{TB_{\pmb{\lambda}} \sqrt{6B_{\rm score\_err}}}{2} \!  \Big[ C_1  \omega_{\pmb{\gamma}}(h_N)  \!+\!  C_2 h_N^{1/2} \!+\! C_3 h_N \Big] \! + \! \frac{TB_{\pmb{\lambda}}B_{\rm score\_err}}{2}\omega_{\pmb{\lambda}}(h_N)  \\
		&  + \frac{TB_{\pmb{\lambda}}C_4}{2\sqrt{S}}  + \frac{3TB_{\pmb{\lambda}}B_{\rm score\_err}}{2}\sqrt{\frac{2 \ln (4/\delta)}{S}} +  \frac{T}{2} \ell_{N, S}^{\rm DSM}(\Theta).
	\end{aligned}
	\label{equation: difference of path measure}
\end{equation}
Hence, for any $\delta \in (0,1)$, with probability at least $1-\delta$, we have 
\begin{equation}
	\begin{aligned}
		& \operatorname{TV}(\check{\boldsymbol{p}}(T,\cdot)\|p_{\rm data}) \leq  TV (\nu_{N,S}^{\pi}\| \mu ) \\
		\leq & \operatorname{TV}(\nu_{N,S}^{\pi}\|\nu_{N,S}) + \operatorname{TV}(\nu_{N,S}\|\nu) + \operatorname{TV}(\nu\|\mu) \\
		= &  \operatorname{TV}(\nu_{N,S}^{\pi}\|\nu_{N,S}) + \operatorname{TV}(\nu\|\nu_{N,S}) + \operatorname{TV}(\mu\|\nu) \\
		\leq & \operatorname{TV}(\pi \| \boldsymbol{p}(T, \cdot) ) + \sqrt{\frac{1}{2}\operatorname{KL}(\nu\|\nu_{N,S})} + \sqrt{\frac{1}{2}\operatorname{KL}(\mu\|\nu)} \\
		%%%%%%%%%%%%%%%%%%%%%%%%%%%%%%%%%%%%%%%%%%%%%%%%%%%
		%%%%%%%%%%%%%%%%%%%%%%%%%%%%%%%%%%%%%%%%%%%%%%%%%%
		\leq & \operatorname{TV}(\pi \| \boldsymbol{p}(T, \cdot) ) +  \sqrt{\frac{C_5}{2}}  h_N^{1/2} + \sqrt{\frac{C_6}{2}} \omega_{\boldsymbol{\alpha}} (h_N)  + \sqrt{\frac{C_7}{2}} \omega_{\boldsymbol{g}} (h_N)   \\
		& \ + \!  \frac{(TB_{\pmb{\lambda}})^{1/2} \sqrt[4]{6B_{\rm score\_err}}}{2} \!  \Big[ \sqrt{C_1}  \sqrt{\omega_{\pmb{\gamma}}(h_N) } \!+\!   \sqrt{C_2} h_N^{1/4} \!+\! \sqrt{C_3} h_N^{1/2} \Big]  \\
		& \  + \frac{(TB_{\pmb{\lambda}}B_{\rm score\_err})^{1/2}}{2} \sqrt{\omega_{\pmb{\lambda}}(h_N)} \! +  \frac{\sqrt{TB_{\pmb{\lambda}}C_4}}{2\sqrt[4]{S}} +  \frac{(3TB_{\pmb{\lambda}}B_{\rm score\_err})^{1/2}}{2}\sqrt[4]{\frac{2 \ln (4/\delta)}{S}} \\
		& \ + \sqrt{ \frac{T}{2}(\ell_{N, S}^{\rm DSM}(\Theta) - \ell_{N, S}^*)} +  \sqrt{ \frac{T\ell_{N, S}^*}{2}} \\
		%%%%%%%%%%%%%%%%%%%%%%%%%%%%%%%%%%%%%%%%%%%%%%%%%%%
		%%%%%%%%%%%%%%%%%%%%%%%%%%%%%%%%%%%%%%%%%%%%%%%%%%
		:= &  \operatorname{TV}\big(\pi \| \boldsymbol{p}(T, \cdot) \big) + C_8 \Big(
		\omega_{\boldsymbol{\alpha}} (h_N) + \omega_{\boldsymbol{g}} (h_N) + \sqrt{\omega_{\pmb{\lambda}}(h_N)} +  \sqrt{\omega_{\pmb{\gamma}}(h_N)}
		\Big) + C_9 h_N^{1/4} \\
		& \ +  \frac{\sqrt{TB_{\pmb{\lambda}}C_4}}{4\sqrt[4]{S}} +  \frac{(3TB_{\pmb{\lambda}}B_{\rm score\_err})^{1/2}}{4}\sqrt[4]{\frac{2 \ln (4/\delta)}{S}} + \sqrt{ \frac{T}{2}(\ell_{N, S}^{\rm DSM}(\Theta) - \ell_{N, S}^*)} +  \sqrt{ \frac{T\ell_{N, S}^*}{2}} ,
	\end{aligned}
	\nonumber
\end{equation}
where the first inequality uses data processing inequality \cite[Lemma 9.4.5]{ambrosio2005gradient}, the third inequality uses data processing inequality and Pinsker’s inequality \cite[Lemma~ 2.5]{tsybakov2008nonparametric}, and the fourth inequality uses Lemma~\ref{lemma: Discretization error of sampling process} and (\ref{equation: difference of path measure}). 
The $$
\begin{aligned}
	C_8 \!= & \max \left\lbrace  \sqrt{\frac{C_6}{2}}, \sqrt{\frac{C_7}{2}}, \frac{(TB_{\pmb{\lambda}}C_1)^{1/2} \sqrt[4]{6B_{\rm score\_err}}}{2} , \frac{(TB_{\pmb{\lambda}}B_{\rm score\_err})^{1/2}}{2} \right\rbrace , \\
	C_9 \! = & \frac{(TB_{\pmb{\lambda}})^{1/2} \sqrt[4]{6B_{\rm score\_err}}}{4} \Big[\sqrt{C_2}  \!+\! \sqrt{C_3} h_N^{1/4} \Big] + \sqrt{\frac{C_5}{2}}  h_N^{1/4},
\end{aligned}
$$ 
and 
$\ell_{N, S}^*$ is the optimal value of (${P}_{N,S}$).
This completes the proof.
\end{proof}

\section{Conclusion}
\label{sec: conclusion}

In this work, we presented an end-to-end theoretical framework bridging the statistical learning of score functions with the numerical sampling process of diffusion models. By focusing on practical ResNet-type architectures, we first analyzed the generalization and convergence properties of the learning problem of score function, explicitly connecting the practical finite-sample, discrete-time learning problem to the ideal continuous-time, population-level objective. Building upon this foundation, we derived an total variation bound for the generated terminal distribution. 
This framework yields a precise error decomposition that partitions the total generative discrepancy into four fundamental sources: forward process truncation, time discretization of forward and reverse process, generalization gap, and the optimization error. Consequently, it provides a quantitative characterization of how statistical sample sizes and numerical grid resolutions jointly govern the total approximation error.

While our unified bound successfully isolates the sources of sampling errors, our current analysis treats the training optimization error as a decoupled term rather than fully resolving its internal dynamics. In practical diffusion model training, optimizing deep score networks involves navigating a highly non-convex empirical risk landscape. Consequently, providing further estimations for this optimization gap remains a mathematical challenge. We leave the rigorous characterization of these non-convex training dynamics and the explicit bounding of the optimization error as an important direction for future research.

\section*{Acknowledgments}
This work was partially supported by the National Natural Science Foundation of China (grant 12271273) and the Key Program (21JCZDJC00220) of the Natural Science Foundation of Tianjin, China.

\appendix

\section{Definitions and lemmas} 
\begin{definition}
\label{definition: RC}
The (empirical) Rademacher complexity of the function class $\mathcal{F}$ for i.i.d. sample $\mathcal{Z}=\left\{\mathrm{z}_{(1)}, \mathrm{z}_{(2)}, \cdots, \mathrm{z}_{(S)}\right\}$ from $\mathfrak{B}$ is:
$$
\mathscr{R}_{\mathcal{Z}}(\mathcal{F})=\frac{1}{S} \mathbb{E}_{\pmb{\varepsilon}} \left[\sup _{{f} \in \mathcal{F}} \sum_{s=1}^S \varepsilon_{(s)} {f} \left(\mathrm{z}_{(s)}\right)\right],
$$
where the expectation is taken over $\boldsymbol{\varepsilon}=\left\{\varepsilon_{(1)}, \varepsilon_{(2)}, \cdots, \varepsilon_{(S)} \right\}$ and $\varepsilon_{(s)}$ $(1 \leq s \leq S)$ are independently and uniformly distributed over $\{-1, 1\}$.
%and they are independent random variables following the Rademacher distribution, i.e., $P\left(\varepsilon_{(s)}=1\right)=P\left(\varepsilon_{(s)}=-1\right)=1 / 2$. 
\end{definition}

%\begin{lemma} \cite{emmrich1999discrete}
%	Let $\left\lbrace  {a}^{{k}} \right\rbrace  _{{k}=0}^{{n}},\left\{{b}^{k}\right\}_{{ k}=0}^{{ n}},\left\{{ c}^{{ k}}\right\}_{{ k}=0}^{{ n}}$ be sequences of non-negative numbers such that
%	$
%	{ a}^{{ k}} \leq \left(1+{ b}^{{ k}}\right) { a}^{{ k}-1}+{ c}^{{ k}}, { k}=1,2, \cdots, { n} 
%	$.
%	Then 
%	\begin{equation}
%		{ a}^{{ k}} \leq \Big({ a}^{0}+\sum_{{ j}=1}^{{ k}} { c}^{ j}\Big) \exp {\Big( \sum_{{ j}=1}^{{ k}} { b}^{ j}\Big)}, \quad { k}=1,2, \cdots, { n }.
%		\label{dis: gronwall}
%	\end{equation} 
%	\label{lemma: dis-gronwall-inequality}
%\end{lemma}

\begin{lemma}[Properties of OU process]
\label{lemma: OU process}
Assume that (A1) hold. Consider the Ornstein–Uhlenbeck (OU) type process 
\begin{equation}
	\begin{aligned}
		\mathrm{d} \bold{x}_t &= \boldsymbol{f}(t)\bold{x}_t \mathrm{d} t+ \boldsymbol{g}(t) \mathrm{d} \bold{B}_t, \\
		\bold{x}_0 &= \mathrm{x} \sim p_{\rm data},
	\end{aligned}
\end{equation}
where $\boldsymbol{f}(t), \boldsymbol{g}(t)$  are continuous functions in $[0,T]$ and $|\boldsymbol{g}(t)|>0$ for all $t \in[0, T]$. Then
\begin{itemize}
	\item [(i)]	the solution $\bold{x}_t$ satisfies
	\begin{equation}
		\begin{aligned}
			\bold{x}_t = & \mathrm{x} \exp\big(\int_{0}^{t}\boldsymbol{f}(s) \mathrm{d}s\big) + \int_{0}^{t} \boldsymbol{g}(s)  \exp\big(\int_{s}^{t}\boldsymbol{f}(r) \mathrm{d}r\big) \bold{B}_s; \\
			\bold{x}_t | \mathrm{x} \sim & \mathcal{N} \Big(\mathrm{x} \exp\Big(\int_{0}^{t}\boldsymbol{f}(s) \mathrm{d}s\Big), \int_{0}^{t}\boldsymbol{g}^2(s)  \exp\Big(2\int_{s}^{t}\boldsymbol{f}(r) \mathrm{d}r\Big) \mathrm{d}_s \Big).
		\end{aligned}
	\end{equation}
	
	\item [(ii)] the closed-form of \( \boldsymbol{p}(t, \mathbf{x}_t | \mathrm{x}) \) is 
	\begin{equation}
		\boldsymbol{p}(t, \mathbf{x}_t | \mathrm{x}) = \mathcal{N}\left( \mathbf{x}_t ; \boldsymbol{r}(t) \mathrm{x}, \boldsymbol{r}^2(t) \boldsymbol{v}^2(t) \mathbf{I}_d \right),
	\end{equation}
	and therefore
	\begin{equation}
		\nabla \ln \boldsymbol{p}(t, \mathbf{x}_t | \mathrm{x})
		= -\frac{1}{\boldsymbol{r}^2(t) \boldsymbol{v}^2(t)} \bigl(\mathbf{x}_t - \boldsymbol{r}(t) \mathrm{x}\bigr),
	\end{equation}
	where \( \boldsymbol{r}(t) = e^{\int_0^t \boldsymbol{f}(\zeta) \, \mathrm{d}\zeta} \),  \( \boldsymbol{v}(t) = \sqrt{\int_0^t \frac{\boldsymbol{g}^2(\zeta)}{\boldsymbol{r}^2(\zeta)} \, \mathrm{d} \zeta} \).
	\item [(iii)] there exists a function $\boldsymbol{h}: [0,T]\times \mathbb{R}^{d}$ such that $\forall (t, \bold{x}_t ) \in  [0,T]\times \mathbb{R}^{d}$ 
	\begin{equation}
		\begin{aligned}
			\| \boldsymbol{p}(0,\mathrm{x}) \boldsymbol{p}(t,\bold{x}_t|\mathrm{x})\|_2  \leq \boldsymbol{h}(t, \mathrm{x}), \ 
			\text{ and }	 \int_{\mathrm{x}}  \boldsymbol{h}(t, \mathrm{x})  \mathrm{d} \mathrm{x}< \infty.
		\end{aligned}
		\label{equation: condition of exchangnable}
	\end{equation}
	%		the following equality hold:
	%		\begin{equation}
		%			\begin{aligned}
			%				& \int_{\mathrm{x}\sim p_{\rm data}} \boldsymbol{p}(0,\mathrm{x}) \int_{\mathbf{x}_t} \bold{s}_{\Theta}(t,\mathbf{x}_t)^{\top} \boldsymbol{p}(t,\mathbf{x}_t|\mathrm{x})  \nabla \ln \boldsymbol{p}(t,\mathbf{x}_t|\mathrm{x}) \mathrm{d} \mathbf{x}_t \mathrm{d}\mathrm{x} \\
			%				= & \int_{\mathbf{x}_t} \bold{s}_{\Theta}(t,\mathbf{x}_t) \,   \nabla \! \int_{\mathrm{x}\sim p_{\rm data}} \boldsymbol{p}(0,\mathrm{x}) \boldsymbol{p}(t,\mathbf{x}_t|\mathrm{x}) \mathrm{d} \mathrm{x} \mathrm{d}\mathbf{x}_t 
			%			\end{aligned}
		%			\label{equation: condition of exchangnable}
		%		\end{equation}
\end{itemize}
\end{lemma}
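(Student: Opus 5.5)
For (i), I would use the integrating factor $\boldsymbol{r}(t)^{-1}=\exp\big(-\int_0^t\boldsymbol{f}(s)\,\mathrm{d}s\big)$. Since $\boldsymbol{f}$ is continuous, $\boldsymbol{r}$ is $\mathcal{C}^1$, positive, and deterministic. Itô's formula for $\boldsymbol{r}(t)^{-1}\bold{x}_t$ then has no second-order correction, because $\boldsymbol{r}^{-1}$ is deterministic and the product is linear in $\bold{x}_t$. This gives $\mathrm{d}\big(\boldsymbol{r}(t)^{-1}\bold{x}_t\big)=\boldsymbol{r}(t)^{-1}\boldsymbol{g}(t)\,\mathrm{d}\bold{B}_t$. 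Integrating from $0$ to $t$ and multiplying by $\boldsymbol{r}(t)$ yields
$$\bold{x}_t=\boldsymbol{r}(t)\mathrm{x}+\int_0^t\boldsymbol{g}(s)\exp\Big(\int_s^t\boldsymbol{f}(r)\,\mathrm{d}r\Big)\,\mathrm{d}\bold{B}_s.$$
The stated formula should be read with the stochastic integral against $\mathrm{d}\bold{B}_s$. Conditionally on $\bold{x}_0=\mathrm{x}$, the second term is a Wiener integral of a deterministic $L^2$ integrand. It is therefore a centered Gaussian vector with independent coordinates. By the Itô isometry, its covariance is $\int_0^t\boldsymbol{g}^2(s)\exp\big(2\int_s^t\boldsymbol{f}(r)\,\mathrm{d}r\big)\,\mathrm{d}s\;\mathrm{I}_d$. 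This proves the conditional law in (i).

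For (ii), I would factor out $\boldsymbol{r}(t)^2$ from the variance. Using $\exp\big(2\int_s^t\boldsymbol{f}\big)=\boldsymbol{r}^2(t)/\boldsymbol{r}^2(s)$, the variance equals $\boldsymbol{r}^2(t)\int_0^t\boldsymbol{g}^2(s)/\boldsymbol{r}^2(s)\,\mathrm{d}s=\boldsymbol{r}^2(t)\boldsymbol{v}^2(t)$. Because $|\boldsymbol{g}|>0$ is continuous, $\boldsymbol{v}(t)>0$ for every $t\in(0,T]$, so the Gaussian is nondegenerate there and has the density
$$(2\pi\boldsymbol{r}^2\boldsymbol{v}^2)^{-d/2}\exp\big(-\|\bold{x}_t-\boldsymbol{r}\mathrm{x}\|_2^2/(2\boldsymbol{r}^2\boldsymbol{v}^2)\big).$$
Taking the logarithm and differentiating in $\bold{x}_t$ gives the stated score $-(\bold{x}_t-\boldsymbol{r}(t)\mathrm{x})/(\boldsymbol{r}^2(t)\boldsymbol{v}^2(t))$. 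At $t=0$ the law is a Dirac mass. I would state explicitly that (ii) holds for $t\in(0,T]$; this is harmless for the time integrals in the losses, since $\{0\}$ has Lebesgue measure zero.

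For (iii), I would assume, as the paper implicitly does, that $p_{\rm data}$ has a density $\boldsymbol{p}(0,\cdot)$. Fix $t\in(0,T]$. The Gaussian density is bounded uniformly in $\bold{x}_t$ and $\mathrm{x}$ by its peak value, so I would take
$$\boldsymbol{h}(t,\mathrm{x}):=(2\pi\boldsymbol{r}^2(t)\boldsymbol{v}^2(t))^{-d/2}\,\boldsymbol{p}(0,\mathrm{x}).$$
Then $\int\boldsymbol{h}(t,\mathrm{x})\,\mathrm{d}\mathrm{x}=(2\pi\boldsymbol{r}^2\boldsymbol{v}^2)^{-d/2}<\infty$.

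The role of (iii) in Theorem~\ref{theorem: sampling error} is to justify exchanging $\nabla_{\bold{x}_t}$ with $\int\mathrm{d}\mathrm{x}$. I would therefore also produce an analogous dominating function for $\nabla_{\bold{x}_t}\boldsymbol{p}(t,\bold{x}_t|\mathrm{x})$. With $\sigma^2=\boldsymbol{r}^2\boldsymbol{v}^2$ and $\mathrm{y}=\bold{x}_t-\boldsymbol{r}\mathrm{x}$, this gradient equals $-\mathrm{y}\,\sigma^{-2}\varphi_\sigma(\mathrm{y})$. Since $\sup_{\mathrm{y}}\|\mathrm{y}\|_2e^{-\|\mathrm{y}\|_2^2/(2\sigma^2)}=\sigma e^{-1/2}$, its norm is at most $(2\pi)^{-d/2}e^{-1/2}\sigma^{-d-1}$. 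Multiplying by $\boldsymbol{p}(0,\mathrm{x})$ gives a dominating function integrable in $\mathrm{x}$ and uniform in $\bold{x}_t$. Dominated convergence then legitimizes the interchange.

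I expect the main obstacle to be degeneracy. As $t\downarrow0$ all of these constants blow up like $\boldsymbol{v}(t)^{-d-1}$, and when $p_{\rm data}$ has no Lebesgue density the product $\boldsymbol{p}(0,\mathrm{x})\boldsymbol{p}(t,\cdot|\mathrm{x})$ must be interpreted as a measure in $\mathrm{x}$. I would handle the first issue by working pointwise for fixed $t>0$. I would handle the second by replacing $\int\cdot\,\boldsymbol{p}(0,\mathrm{x})\,\mathrm{d}\mathrm{x}$ with $\int\cdot\,p_{\rm data}(\mathrm{d}\mathrm{x})$: the bounds above are uniform in $\mathrm{x}$, so finiteness follows from $p_{\rm data}$ being a probability measure.
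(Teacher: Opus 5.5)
Your proposal is correct and follows essentially the same route as the paper. For (i) you use the integrating factor $\exp(-\int_0^t\boldsymbol{f})$ with Itô's formula, for (ii) the explicit Gaussian density and its log-gradient, and for (iii) a dominating function of the form $c(t)\,\boldsymbol{p}(0,\mathrm{x})$ obtained from $\sup_u u e^{-\alpha u^2}$; your gradient constant $(2\pi)^{-d/2}e^{-1/2}\sigma^{-d-1}$ is exactly the paper's $B_{\rm grad}(t)$. Two of your additions are genuine clarifications rather than detours: restricting to $t\in(0,T]$, and giving the separate peak bound $(2\pi\sigma^2)^{-d/2}\boldsymbol{p}(0,\mathrm{x})$ for the density itself, which the paper's choice $\boldsymbol{h}=B_{\rm grad}(t)\boldsymbol{p}(0,\mathrm{x})$ does not literally dominate when $\sigma=\boldsymbol{r}(t)\boldsymbol{v}(t)>e^{-1/2}$.
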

\begin{proof}
(i)	Let $\bold{z}_t=\phi(\bold{x}_t, t) = \bold{x}_t \exp\big(-\int_{0}^{t}\boldsymbol{f}(s) \mathrm{d}s\big) $. By Ito's formula,
\begin{equation}
	\begin{aligned}
		\mathrm{d} \bold{z}_t &= \frac{\partial \phi}{\partial t}(\bold{x}_t, t) \mathrm{d}t + \frac{\partial \phi}{\partial \bold{x}_t}(\bold{x}_t, t) \mathrm{d}\bold{x}_t + \frac{1}{2} \frac{\partial^2 \phi}{\partial^2 \bold{x}_t}(\bold{x}_t, t) (\mathrm{d}\bold{x}_t)^2 \\
		&= -f(t)\bold{x}_t \exp\Big(-\int_{0}^{t}\boldsymbol{f}(s) \mathrm{d}s\Big) \mathrm{d}t +
		\exp\Big(-\int_{0}^{t}\boldsymbol{f}(s) \mathrm{d}s \Big)  \big( \boldsymbol{f}(t)\bold{x}_t \mathrm{d} t+ {g}(t) \mathrm{d} \bold{B}_t\big) \\
		&= \boldsymbol{g}(t)  \exp\Big(\int_{0}^{t}\boldsymbol{f}(s) \mathrm{d}s\Big) \mathrm{d} \bold{B}_t.
	\end{aligned}
\end{equation}
Therefore, 
$
\bold{z}_t = \bold{z}_0 + \int_{0}^{t}\boldsymbol{g}(s)  \exp\big(-\int_{0}^{s}\boldsymbol{f}(r) \mathrm{d}r\big) \mathrm{d}\bold{B}_s,
$
and 
\begin{equation}
	\bold{x}_t = \bold{x}_0\exp\Big(\int_{0}^{t}\boldsymbol{f}(s) \mathrm{d}s\Big) + \int_{0}^{t}\boldsymbol{g}(s)  \exp\Big(\int_{s}^{t}\boldsymbol{f}(r) \mathrm{d}r\Big) \mathrm{d}\bold{B}_s.
\end{equation}
Taking expectation for the above equality on both sides, we obtain
$$
\begin{aligned}
	\mathbb{E}[\bold{x}_t] & =  \mathbb{E}\Big[\bold{x}_0\exp\Big(\int_{0}^{t}\boldsymbol{f}(s) \mathrm{d}s\Big) + \int_{0}^{t}\boldsymbol{g}(s)  \exp\Big(\int_{s}^{t}\boldsymbol{f}(r) \mathrm{d}r\Big) \mathrm{d}\bold{B}_s \Big] \\
	& =  \mathbb{E} \Big[\bold{x}_0\exp\Big(\int_{0}^{t}\boldsymbol{f}(s) \mathrm{d}s\Big) \Big] = \bold{x}_0\exp\Big(\int_{0}^{t}\boldsymbol{f}(s) \mathrm{d}s\Big).
\end{aligned}
$$
Furthermore,
$$
\begin{aligned}
	\mathbb{E}[\bold{x}_t \otimes \bold{x}_t]_{i,j}  = & \bold{x}_0[i]\bold{x}_0[j] \exp\Big(2\int_{0}^{t}\boldsymbol{f}(s) \mathrm{d}s\Big)  \\
	& \ + \!  \mathbb{E} \Big[ \Big(\int_{0}^{t}\boldsymbol{g}(s)  \exp\Big( \! \int_{s}^{t}\boldsymbol{f}(r) \mathrm{d}r \! \Big)\mathrm{d} \bold{B}_s \Big) \! \otimes \! \Big(\int_{0}^{t}\boldsymbol{g}(s)  \exp\Big( \! \int_{s}^{t}\boldsymbol{f}(r) \mathrm{d}r \! \Big) \mathrm{d}\bold{B}_s  \! \Big) \Big]_{i,j} \\
	= & \bold{x}_0[i]\bold{x}_0[j] \exp\Big(2\int_{0}^{t}\boldsymbol{f}(s) \mathrm{d}s\Big) 
	+  \delta_{i,j} \cdot \int_{0}^{t}\boldsymbol{g}^2(s)  \exp\Big(2\int_{s}^{t}\boldsymbol{f}(r) \mathrm{d}r\Big) \mathrm{d}s.
\end{aligned}
$$
Here $\otimes$ is the outer product of vectors. Combining the above equations, we obtain
$$
\bold{x}_t | \mathrm{x} \sim \mathcal{N} \Big(\mathrm{x} \exp\Big(\int_{0}^{t}\boldsymbol{f}(s) \mathrm{d}s\Big), \int_{0}^{t}\boldsymbol{g}^2(s)  \exp\Big(2\int_{s}^{t}\boldsymbol{f}(r) \mathrm{d}r\Big) \mathrm{d}s \Big).
$$

(ii) Let \( \boldsymbol{r}(t) = e^{\int_0^t \boldsymbol{f}(\zeta) \, d\zeta} \),  \( \boldsymbol{v}(t) = \sqrt{\int_0^t \frac{\boldsymbol{g}^2(\zeta)}{\boldsymbol{r}^2(\zeta)} \, d\zeta} \). By Step~1, the closed-form expression of \( \boldsymbol{p}(t, \mathbf{x}_t | \mathrm{x}) \) is 
$
\boldsymbol{p}(t, \mathbf{x}_t | \mathrm{x}) = \mathcal{N}\left( \mathbf{x}_t ; \boldsymbol{r}(t) \mathrm{x}, \boldsymbol{r}^2(t) \boldsymbol{v}^2(t) \mathbf{I}_d \right)
$, i.e.,
%Note that the explicit density function for a multivariate Gaussian \(\mathcal{N}(\mathrm{x}; \sigma, \Sigma)\) with mean \(\sigma\) and covariance \(\Sigma\) is:
%\[
%p(\mathrm{x}) = \frac{1}{\sqrt{(2\pi)^d \det(\Sigma)}} \exp\left(-\frac{1}{2} (\mathrm{x} - \sigma)^\top \Sigma^{-1} (\mathrm{x} - \sigma)\right).
%\]
%For $\boldsymbol{p}(t, \mathbf{x}_t | \mathrm{x})$, we have
%$	\sigma = \boldsymbol{r}(t) \mathrm{x}, \quad \Sigma = \boldsymbol{r}^2(t) \boldsymbol{v}^2(t) \mathbf{I}_d,$
%and \(\Sigma^{-1} = \frac{1}{\boldsymbol{r}^2(t) \boldsymbol{v}^2(t)} \mathbf{I}_d\).
%Substituting into the formula for the Gaussian density gives
\[
\boldsymbol{p}(t, \mathbf{x}_t\mid \mathrm{x}) =
\frac{1}{\sqrt{(2\pi)^d \bigl(\boldsymbol{r}^2(t) \boldsymbol{v}^2(t)\bigr)^d}}
\exp\left(-\frac{1}{2} \frac{\|\mathbf{x}_t - \boldsymbol{r}(t) \mathrm{x}\|_2^2}{\boldsymbol{r}^2(t) \boldsymbol{v}^2(t)}\right).
\]
Therefore,
%$
%\ln \boldsymbol{p}(t, \mathbf{x}_t \mid \mathrm{x}) =
%-\frac{d}{2} \ln\bigl(2\pi \boldsymbol{r}^2(t) \boldsymbol{v}^2(t)\bigr)
%-\frac{\|\mathbf{x}_t - \boldsymbol{r}(t) \mathrm{x}\|_2^2}{2 \boldsymbol{r}^2(t) \boldsymbol{v}^2(t)},
%$
%and 
$
\nabla \ln \boldsymbol{p}(t, \mathbf{x}_t | \mathrm{x})
= -\frac{1}{\boldsymbol{r}^2(t) \boldsymbol{v}^2(t)} \bigl(\mathbf{x}_t - \boldsymbol{r}(t) \mathrm{x}_0\bigr).
$

(iii) 
Since \(\boldsymbol{r}(t)>0, \boldsymbol{v}(t)>0\), the conditional density \(p(t,\mathbf{x}_t\mid \mathrm{x})\) is a non-degenerate Gaussian for each \(\mathrm{x}\), and we have
\begin{equation}
	\begin{aligned}
		& \int_{\mathrm{x}} \boldsymbol{p}(0,\mathrm{x}) \int_{\mathbf{x}_t} \bold{s}_{\Theta}(t,\mathbf{x}_t)^{\top} \boldsymbol{p}(t,\mathbf{x}_t|\mathrm{x})  \nabla \ln \boldsymbol{p}(t,\mathbf{x}_t|\mathrm{x}) \mathrm{d} \mathbf{x}_t \mathrm{d}\mathrm{x} \\
		= & \int_{\mathbf{x}_t} \bold{s}_{\Theta}(t,\mathbf{x}_t)^{\top} \,     \int_{\mathrm{x}}  \boldsymbol{p}(0,\mathrm{x}) \nabla \boldsymbol{p}(t,\mathbf{x}_t|\mathrm{x}) \mathrm{d} \mathrm{x} \mathrm{d}\mathbf{x}_t 
	\end{aligned}
	\nonumber
\end{equation}
Therefore, to prove (\ref{equation: condition of exchangnable}), we only need to justify
\begin{equation}
	\int_{\mathrm{x}}  \boldsymbol{p}(0,\mathrm{x}) \nabla \boldsymbol{p}(t,\mathbf{x}_t|\mathrm{x}) \mathrm{d} \mathrm{x}
	= \nabla \int_{\mathrm{x}}  \boldsymbol{p}(0,\mathrm{x})  \boldsymbol{p}(t,\mathbf{x}_t|\mathrm{x}) \mathrm{d} \mathrm{x}, \ \forall \mathbf{x}_t \in \mathbb{R}^d.
	\label{equation: proof-ex-con-1}
\end{equation}

From (ii), we get
\begin{equation}
	\|\nabla \boldsymbol{p}(t,\mathbf{x}_t|\mathrm{x})\|_2
	= \frac{1}{\boldsymbol{r}(t)^2 \boldsymbol{v}(t)^2}\,\|\mathbf{x}_t - \boldsymbol{r}(t) \mathrm{x}\|\,(2\pi \boldsymbol{r}(t)^2 \boldsymbol{v}(t)^2)^{-d/2}\exp\!\Big(-\frac{\|\mathbf{x}_t-\boldsymbol{r}(t)\mathrm{x}\|^2}{2 \boldsymbol{r}(t)^2 \boldsymbol{v}(t)^2}\Big).
	\label{equation: grad_p}
\end{equation}
Fix $t \in [0,T]$ and consider the scalar function \(q(u)=u\exp(-\alpha u^2)\) on \(u\ge0\) with \(\alpha=\frac{1}{2\boldsymbol{r}(t)^2 \boldsymbol{v}(t)^2}>0\). Its maximum is attained at \(u=(2\alpha)^{-1/2}=\boldsymbol{r}(t)\boldsymbol{v}(t)\) and equals \((\boldsymbol{r}(t)\boldsymbol{v}(t))e^{-1/2}\). 
Taking \(y=\|\mathbf{x}_t - \boldsymbol{r}(t) \mathrm{x}\|\) in (\ref{equation: grad_p}) and applying the above discussion yield 
\[
\|\nabla \boldsymbol{p}(t,\mathbf{x}_t|\mathrm{x})\|_2 \leq  (2\pi \boldsymbol{r}^2(t)\boldsymbol{v}^2(t))^{-d/2}(\boldsymbol{r}(t)\boldsymbol{v}(t))^{-1}e^{-1/2}=:B_{\rm grad }(t).
\]
Take \(\boldsymbol{h}(t, \mathrm{x}):=B_{\rm grad }(t) \boldsymbol{p}(0,\mathrm{x})\) as a dominating function. Therefore, 
$$
|\boldsymbol{p}(0,\mathrm{x})  \boldsymbol{p}(t,\mathbf{x}_t|\mathrm{x})| \leq \boldsymbol{h}(t, \mathrm{x}), \ \forall (t, \mathbf{x}_t) \in [0,T]\times\mathbb{R}^d, \text{ and } \int_{\mathrm{x}} \boldsymbol{h}(t, \mathrm{x}) \mathrm{d} \mathrm{x} = B_{\rm grad }(t) < \infty,
$$
which completes the proof. %Applying the dominated convergence theorem then gives (\ref{equation: proof-ex-con-1}),
\end{proof}

%%%%%%%%%%%%%%%%%%%%%%%%%%%%%%%%%%%%%%%%%%%%%%%%%%%%%%%%%%%%%%%%%%%%%%%%%%%%%
\section{Proofs}
\label{appendix: proof of lemmas}
\begin{proof}[Proof of Lemma~\ref{lemma: property of SDE-1}]
By assumptions (A1)-(A3), the results of SDE (\ref{equation: forward-time SDE}) can be obtained by directly using \cite[Theorem~4.5.4]{peter1992numerical}. 

Next we consider the results about SDE (\ref{equation: reverse-time generative SDE}). Let $\hat{\bold{f}}(t, \mathrm{x})  =  - \bold{f}(T-t,  \mathrm{x}) + \boldsymbol{g}^2(T-t)  \bold{s}_{\Theta}(T-t, \mathrm{x})$. Using assumptions (A1)-(A3), (A5)(A6) and the property of $\bold{s}_{\Theta}$ in Lemma~\ref{lemma: property of DNN}, we have 
\begin{equation}
	\begin{aligned}
		&	\|\hat{\bold{f}}(t, \mathrm{x}) \|_2 \leq \|\bold{f}(T-t,  \mathrm{x})\|_2 + \boldsymbol{g}^2(T-t) \| \bold{s}_{\Theta}(T-t, \mathrm{x})\|_2 \\
		\leq & C_{\bold{f}} (1+ \|\mathrm{x}\|_2) + B_{\boldsymbol{g}} ^2 \big( \|\mathrm{x}\|_2 + L \mathrm{Lip}_{\psi} B_{\boldsymbol{e}}  B_{\Theta}^2  \big) \exp(L \mathrm{Lip}_{\psi}  B_{\Theta}^2) \\
		\leq & C_{\hat{\bold{f}}}^{\prime} ( 1 + \|\mathrm{x}\|_2),
	\end{aligned}
\end{equation}
where $C_{\hat{\bold{f}}}^{\prime} = C_{\bold{f}} +B_{\boldsymbol{g}} ^2 \exp(L \mathrm{Lip}_{\psi}  B_{\Theta}^2) \cdot \max \{1, L \mathrm{Lip}_{\psi} B_{\boldsymbol{e}}  B_{\Theta}^2 \} $. Similarly, we can obtain
\begin{equation}
	\begin{aligned}
		&	\|\hat{\bold{f}}(t, \mathrm{x}) - \hat{\bold{f}}(t, \tilde{\mathrm{x}}) \|_2 \\
		\leq&  \|\bold{f}(T-t,  \mathrm{x}) - \bold{f}(T-t,  \tilde{\mathrm{x}})\|_2 + \boldsymbol{g}^2(T-t) \| \bold{s}_{\Theta}(T-t, \mathrm{x}) - \bold{s}_{\Theta}(T-t, \tilde{\mathrm{x}}) \|_2 \\
		\leq & C_{\bold{f}} \|\mathrm{x} -\tilde{\mathrm{x}}  \|_2 + B_{\boldsymbol{g}} ^2 \exp(L \mathrm{Lip}_{\psi}  B_{\Theta}^2) \|\mathrm{x} -\tilde{\mathrm{x}}  \|_2 := C_{\hat{\bold{f}}}^{\prime \prime}  \|\mathrm{x} -\tilde{\mathrm{x}}  \|_2,
	\end{aligned}
\end{equation}
where $C_{\hat{\bold{f}}}^{\prime \prime} =  C_{\bold{f}} + B_{\boldsymbol{g}} ^2 \exp(L \mathrm{Lip}_{\psi}  B_{\Theta}^2)$. Let 
$$
C_{\hat{\bold{f}}} = \max\{C_{\hat{\bold{f}}}^{\prime} , C_{\hat{\bold{f}}}^{\prime \prime} \} =  C_{\bold{f}} +B_{\boldsymbol{g}} ^2 \exp(L \mathrm{Lip}_{\psi}  B_{\Theta}^2) \cdot \max \{1, L \mathrm{Lip}_{\psi} B_{\boldsymbol{e}}  B_{\Theta}^2 \}.
$$
We can apply \cite[Theorem~4.5.4]{peter1992numerical} again to complete the proof.  
\end{proof}

\begin{proof}[Proof of Lemma~\ref{lemma: property of DNN}]
Let $\bold{z}^{l} $ denote the state variable of (\ref{equation: ResNets}) at $l$-layer with parameter $\Theta$, input $(t, \mathrm{x})$, and $\tilde{\bold{z}}^{l} $ the state variable of (\ref{equation: ResNets}) at $l$-layer with parameter $\tilde{\Theta}$, input $(\tilde{t}, \tilde{\mathrm{x}})$, respectively. 

We estimate $\bold{z}^{l} $. By Eq.~(\ref{equation: ResNets}) and assumptions $(A5),(A6)$, we have
\begin{equation}
	\begin{aligned}
		\|\bold{z}^{l} \|_2 \leq \|\bold{z}^{l-1}\|_2 + \mathrm{Lip}_{\psi} \|{\Theta}\|_{\mathcal{E}^{L}}^2 \|\bold{z}^{l-1}\|_2 + \mathrm{Lip}_{\psi} B_{\boldsymbol{e}} \|{\Theta}\|_{\mathcal{E}^{L}}^2, \ 1\leq l\leq L.
	\end{aligned}
	\nonumber
\end{equation}
Therefore,
\begin{equation}
	\begin{aligned}
		\|\bold{z}^{l} \|_2 \leq & \Big( \|\bold{z}^{0}\|_2 + \sum_{k=1}^{L} \mathrm{Lip}_{\psi} B_{\boldsymbol{e}} \|{\Theta}\|_{\mathcal{E}^{L}}^2  \Big)  \exp\Big(\sum_{k=1}^{L} \mathrm{Lip}_{\psi} \|{\Theta}\|_{\mathcal{E}^{L}}^2 \Big) \\
		\leq &  \Big( \|\mathrm{x}\|_2 + L \mathrm{Lip}_{\psi} B_{\boldsymbol{e}}  B_{\Theta}^2  \Big)  \exp(L \mathrm{Lip}_{\psi}  B_{\Theta}^2) =: B_{\bold{s}} (\mathrm{x}),
	\end{aligned}
	\nonumber
\end{equation}
due to the discrete-time Gr\"onwall's inequality \cite{emmrich1999discrete}.

Next, we estimate $\|\bold{z}^{l}  - \tilde{\bold{z}}^{l}\|_2$. By (\ref{equation: ResNets}), assumptions $(A5),(A6)$ and the triangle inequality, we derive
\begin{equation}
	\begin{aligned}
		& \|\bold{z}^{l}  - \tilde{\bold{z}}^{l} \|_2 \\ \textcolor{red}{\leq} & \|\bold{z}^{l-1}  - \tilde{\bold{z}}^{l-1}\|_2 
		+ \|\mathrm{W}^{l}\psi \circ \big(\mathrm{V}^{l}\bold{z}^{l-1}  + \mathrm{U}^{l} \boldsymbol{e}(t)\big) -  \tilde{\mathrm{W}}^{l}\psi \circ \big(\tilde{\mathrm{V}}^{l}\tilde{\bold{z}}^{l-1} + \tilde{\mathrm{U}}^{l} \boldsymbol{e}(\tilde{t})\big)\|_2  \\
		\leq & \|\bold{z}^{l-1}  - \tilde{\bold{z}}^{l-1}\|_2  + \| \mathrm{W}^{l} - \tilde{\mathrm{W}}^{l} \|_2 \cdot \| \psi \circ (\mathrm{V}^{l}\bold{z}^{l-1} + \mathrm{U}^{l} \boldsymbol{e}(t))\|_2  \\
		& + \|\tilde{\mathrm{W}}^{l} \|_2 \cdot
		\| \psi \circ \big(\mathrm{V}^{l}\bold{z}^{l-1} + \mathrm{U}^{l} \boldsymbol{e}(t) \big) - \psi \circ \big(\tilde{\mathrm{V}}^{l}\tilde{\bold{z}}^{l-1}+ \tilde{\mathrm{U}}^{l} \boldsymbol{e}(\tilde{t}) \big) \|_2 \\
		%%%%%%%%%%%%%%%%%%%%%%%%%%%%%%%%%%%%%%%%%%%%%%%%%%
		%%%%%%%%%%%%%%%%%%%%%%%%%%%%%%%%%%%%%%%%%%%%%%%%%%
		\leq & \|\bold{z}^{l-1}  - \tilde{\bold{z}}^{l-1}\|_2 +   \mathrm{Lip}_{\psi} \|{\Theta}\|_{\mathcal{E}^{L}} \| \Theta - \tilde{\Theta} \|_{\mathcal{E}^{L}} (B_{\bold{s}} (\mathrm{x}) + B_{\boldsymbol{e}}) \\
		& +   \mathrm{Lip}_{\psi} \|\tilde{\Theta}\|_{\mathcal{E}^{L}} \Big(\|\mathrm{V}^{l} - \tilde{\mathrm{V}}^{l} \|_2 \|\bold{z}^{l-1}\|_2 + \|\tilde{\mathrm{V}}^{l}\|_2 \|\bold{z}^{l-1}  - \tilde{\bold{z}}^{l-1}\|_2 \\
		& \qquad \qquad \qquad \quad  + \|\mathrm{U}^{l} - \tilde{\mathrm{U}}^{l} \|_2 \|\boldsymbol{e}(t)\|_2 + \|\tilde{\mathrm{U}}^{l}\|_2 \|\boldsymbol{e}(t) - \boldsymbol{e}(\tilde{t})\|_2 \Big) \\
		%%%%%%%%%%%%%%%%%%%%%%%%%%%%%%%%%%%%%%%%%%%%%%%%%%
		%%%%%%%%%%%%%%%%%%%%%%%%%%%%%%%%%%%%%%%%%%%%%%%%%%
		\leq & \|\bold{z}^{l-1}  - \tilde{\bold{z}}^{l-1}\|_2+ \mathrm{Lip}_{\psi} B_{\Theta}^2 \|\bold{z}^{l-1}  - \tilde{\bold{z}}^{l-1}\|_2 \\
		& \qquad \qquad  \qquad+  2\mathrm{Lip}_{\psi} B_{\Theta} (B_{\bold{s}} (\mathrm{x}) + B_{\boldsymbol{e}}) \| \Theta - \tilde{\Theta} \|_{\mathcal{E}^{L}} + \mathrm{Lip}_{\psi} \mathrm{Lip}_{\boldsymbol{e}} B_{\Theta}^2 |t - \tilde{t}|.
	\end{aligned}
	\nonumber
\end{equation}
Applying the discrete-time Gr\"onwall's inequality again and using $\|{\Theta}\|_{\mathcal{E}^{L}}  \leq B_{\Theta}$, we obtain
\begin{equation}
	\begin{aligned}
		& \|\bold{z}^{l}  - \tilde{\bold{z}}^{l} \|_2 \\
		%	\leq & \big( \|\mathrm{x} \!-\! \tilde{\mathrm{x}}\|_2 \!+\! L\big(2\mathrm{Lip}_{\psi} (B_{\bold{s}} (\mathrm{x}) \!+\! B_{\boldsymbol{e}}) \|{\Theta}\|_{\mathcal{E}^{L}} \| \Theta \!  - \!  \tilde{\Theta} \|_{\mathcal{E}^{L}} \! + \!  \mathrm{Lip}_{\psi} \mathrm{Lip}_{\boldsymbol{e}} \|{\Theta}\|_{\mathcal{E}^{L}}^2 |t \! -\!  \tilde{t}|\big)  \big) \!  \exp(L \mathrm{Lip}_{\psi} \|{\Theta}\|_{\mathcal{E}^{L}}^2) \\
		%%%%%%%%%%%%%%%%%%%%%%%%%%%%%%%%%%%%%
		\leq & \Big( \|\mathrm{x} \!-\! \tilde{\mathrm{x}}\|_2 \!+\! L\big[2\mathrm{Lip}_{\psi}  B_{\Theta} (B_{\bold{s}} (\mathrm{x}) \!+\! B_{\boldsymbol{e}})  \| \Theta - \tilde{\Theta} \|_{\mathcal{E}^{L}} + \mathrm{Lip}_{\psi} \mathrm{Lip}_{\boldsymbol{e}}  B_{\Theta}^2 |t - \tilde{t}|\big] \Big)  \exp(L \mathrm{Lip}_{\psi}  B_{\Theta}^2), 
		%%%%%%%%%%%%%%%%%%%%%%%%%%%%%%%%%%%%%%%%%%%%%%%%%%
		%	\leq & C_{\bold{s}}(\mathrm{x}) \Big( \|\mathrm{x} \!-\! \tilde{\mathrm{x}}\|_2 + \| \Theta - \tilde{\Theta} \|_{\mathcal{E}^{L}} + |t - \tilde{t}| \Big), 
	\end{aligned}
	\nonumber
\end{equation}
%where $C_{\bold{s}}(\mathrm{x}) =  \max \{1, 2L\mathrm{Lip}_{\psi} (B_{\bold{s}} (\mathrm{x}) \!+\! B_{\boldsymbol{e}})  B_{\Theta}, L\mathrm{Lip}_{\psi} \mathrm{Lip}_{\boldsymbol{e}} \} \cdot \exp(L \mathrm{Lip}_{\psi}  B_{\Theta}^2) $. This
which completes the proof.
\end{proof}

\begin{proof}[proof of Lemma~\ref{lemma: proeprty of dnn state}]
Since $\bold{x}_t$ is the solution of (\ref{equation: forward-time SDE}) with initialization $\mathrm{x}\sim p_{\rm data}$, we have, by assumptions (A1-A6) and Lemma~\ref{lemma: property of SDE-1} that 
$$
\begin{aligned}
	\mathbb{E}[B_{\bold{s}}^2(\bold{x}_t)] = & \mathbb{E} [\big( \|\bold{x}_t\|_2 + L \mathrm{Lip}_{\psi} B_{\boldsymbol{e}}  B_{\Theta}^2  \big)^2  \exp(2L \mathrm{Lip}_{\psi}  B_{\Theta}^2)] \\
	\leq & 2\big( \mathbb{E}[\|\bold{x}_t\|_2^2] + L^2 \mathrm{Lip}_{\psi}^2 B_{\boldsymbol{e}}^2  B_{\Theta}^{4} \big) \exp(2L \mathrm{Lip}_{\psi}  B_{\Theta}^2)  \\
	%	\leq & 2 \Big( \big( 1 + \mathbb{E}[\|\mathrm{x}\|_2^2]  \big) \exp(12C_{\bold{f}}^2 t) + L^2 \mathrm{Lip}_{\psi}^2 B_{\boldsymbol{e}}^2  B_{\Theta}^{4}  \Big) \exp(2L \mathrm{Lip}_{\psi}  B_{\Theta}^2) \\
	\leq & 2 \Big( \big( 1 + R^2  \big) \exp(12C_{\bold{f}}^2 T) + L^2 \mathrm{Lip}_{\psi}^2 B_{\boldsymbol{e}}^2  B_{\Theta}^{4}  \Big) \exp(2L \mathrm{Lip}_{\psi}  B_{\Theta}^2) =: B_{\bold{s},2}, \\
	\mathbb{E}[C_{\bold{s}}^2(\bold{x}_t)] 
	=&  \mathbb{E} \big[ 4 L^2\mathrm{Lip}_{\psi}^2 2 (B_{\bold{s}}^2 (\bold{x}_t) \!+\! B^2_{\boldsymbol{e}})  B_{\Theta}^2 \cdot \exp(2 L \mathrm{Lip}_{\psi}  B_{\Theta}^2) \big] \\
	\leq & 8 L^2\mathrm{Lip}_{\psi}^2 ( B_{\bold{s},2} + B^2_{\boldsymbol{e}})  B_{\Theta}^2  \exp(2 L \mathrm{Lip}_{\psi}  B_{\Theta}^2) =: C_{\bold{s},2}, 
\end{aligned}
$$
%	$$
%	\begin{aligned}
	%		&	\mathbb{E}[B_{\bold{s}}^i(\bold{x}_t)] = \mathbb{E} [\big( \|\bold{x}_t\|_2 + L \mathrm{Lip}_{\psi} B_{\boldsymbol{e}}  B_{\Theta}^2  \big)^i  \exp(iL \mathrm{Lip}_{\psi}  B_{\Theta}^2)] \\
	%		\leq & 2^{i-1}\big( \mathbb{E}[\|\bold{x}_t\|_2^i] + L^i \mathrm{Lip}_{\psi}^i B_{\boldsymbol{e}}^i  B_{\Theta}^{2i} \big) \exp(iL \mathrm{Lip}_{\psi}  B_{\Theta}^2)  \\
	%		\leq & 2^{i-1} \Big( \big( 1 + \mathbb{E}[\|\mathrm{x}\|_2^i]  \big) \exp(2i(i+1)C_{\bold{f}}^2 t) + L^i \mathrm{Lip}_{\psi}^i B_{\boldsymbol{e}}^i  B_{\Theta}^{2i}  \Big) \exp(iL \mathrm{Lip}_{\psi}  B_{\Theta}^2) \\
	%		\leq & 2^{i-1} \Big( \big( 1 + R^i  \big) \exp(2i(i+1)C_{\bold{f}}^2 T) + L^i \mathrm{Lip}_{\psi}^i B_{\boldsymbol{e}}^i  B_{\Theta}^{2i}  \Big) \exp(iL \mathrm{Lip}_{\psi}  B_{\Theta}^2), \\
	%		&	\mathbb{E}[C_{\bold{s}}^i(\bold{x}_t)] \\
	%		=&  \mathbb{E} \big[\max \{1, 2^i L^i\mathrm{Lip}_{\psi}^i 2^{i-1} (B_{\bold{s}}^i (\bold{x}_t) \!+\! B^i_{\boldsymbol{e}})  B_{\Theta}^i, L^i\mathrm{Lip}^i_{\psi} \mathrm{Lip}^i_{\boldsymbol{e}} \} \cdot \exp(i L \mathrm{Lip}_{\psi}  B_{\Theta}^2) \big] \\
	%		\leq & \big( 1 + 2^i L^i\mathrm{Lip}_{\psi}^i 2^{i-1} ( B_{\bold{s},i} + B^i_{\boldsymbol{e}})  B_{\Theta}^i +  L^i\mathrm{Lip}^i_{\psi} \mathrm{Lip}^i_{\boldsymbol{e}}  \big) \exp(i L \mathrm{Lip}_{\psi}  B_{\Theta}^2), 
	%	\end{aligned}
%	$$
for $t \in [0,T]$. 
In addition, by assumptions (A1-A6),  (\ref{proof inequality: estimation of expection of states}) together with Lemma~\ref{lemma: property of SDE-1}, that
\begin{equation}
	\begin{aligned}
		\mathbb{E} \left[\big\|\bold{s}_{\Theta}(t, \bold{x}_{t}) -  \nabla \ln \boldsymbol{p}(t, \bold{x}_{t}|\mathrm{x})\big\|_2^2\right] 
		\leq &  2 \mathbb{E}  \Big[
		\big\|\bold{s}_{\Theta}(t, {\bold{x}_{t}}) \|^2_2 + \| \nabla \ln \boldsymbol{p}(t , \bold{x}_{t}|\mathrm{x})\|^2_2 \Big] \\
		\leq & 2 B_{\bold{s},2} + 6 C_{\boldsymbol{p}} \mathbb{E}\big[1+\|\bold{x}_t\|_2^2 + \|\mathrm{x}\|_2^2 \big] \\
		\leq & 2 B_{\bold{s},2} + 6C_{\boldsymbol{p}} \Big(1+(1+R^2) e^{6C_{\bold{f}}^2T} + R^2 \Big),	
	\end{aligned}
	\nonumber
\end{equation}
which completes th proof.
\end{proof}

\bibliography{references}

@article{huang2024on,
year = {2024},
month = {jun},
publisher = {IOP Publishing},
volume = {40},
number = {7},
pages = {075006},
author = {Jinshu Huang and Yiming Gao and Chunlin Wu},
title = {On dynamical system modeling of learned primal-dual with a linear operator $\mathcal{K}$: stability and convergence properties},
journal = {Inverse Problems}
}

@article{thorpe2018deep,
  title={Deep limits of residual neural networks},
  author={Thorpe, Matthew and van Gennip, Yves},
  journal={Research in the Mathematical Sciences},
  volume={10},
  number={1},
  pages={6},
  year={2023},
  publisher={Springer}
}

@book{shalev2014understanding,
  title={Understanding machine learning: From theory to algorithms},
  author={Shalev-Shwartz, Shai and Ben-David, Shai},
  year={2014},
  publisher={Cambridge university press}
}

@inproceedings{song2021scorebased,
  title={Score-based generative modeling through stochastic differential equations},
  author={Yang Song and Jascha Sohl-Dickstein and Diederik P Kingma and Abhishek Kumar and Stefano Ermon and Ben Poole},
  booktitle={International Conference on Learning Representations},
  year={2021}
}

@article{schnoor2023generalization, 
  title={Generalization error bounds for iterative recovery algorithms unfolded as neural networks},
  author={Schnoor, Ekkehard and Behboodi, Arash and Rauhut, Holger},
  journal={Information and Inference: A Journal of the IMA},
  volume={12},
  number={3},
  pages={2267--2299},
  year={2023},
  publisher={Oxford University Press}
}

@book{emmrich1999discrete,
  title={Discrete versions of Gronwall's lemma and their application to the numerical analysis of parabolic problems},
  author={Emmrich, Etienne},
  year={1999},
  publisher={Techn. Univ.}
}

@article{vincent2011connection,
  title={A connection between score matching and denoising autoencoders},
  author={Vincent, Pascal},
  journal={Neural computation},
  volume={23},
  number={7},
  pages={1661--1674},
  year={2011},
  publisher={MIT Press}
}

@book{peter1992numerical,
  title={Numerical solution of stochastic differential equations},
  author={Peter E. Kloeden, Eckhard Platen},
  year={1992},
  publisher={Springer Berlin, Heidelberg}
}

@book{le2016brownian,
  title={Brownian motion, martingales, and stochastic calculus},
  author={Le Gall, Jean-Fran{\c{c}}ois},
  year={2016},
  publisher={Springer}
}

@book{foucart2013,
  author    = {Foucart, Simon and Rauhut, Holger},
  title     = {A mathematical introduction to compressive sensing},
  year      = {2013},
  series    = {Applied and Numerical Harmonic Analysis},
  publisher = {Springer},
  address   = {New York, NY}
}

@book{chung2000course,
  title={A course in probability theory},
  author={Chung, Kai Lai},
  year={2000},
  publisher={Elsevier}
}

@book{van2000asymptotic,
  title={Asymptotic statistics},
  author={Van der Vaart, Aad W},
  volume={3},
  year={2000},
  publisher={Cambridge University Press}
}

@inproceedings{chen2023sampling,
  title     = {Sampling is as easy as learning the score: Theory for diffusion models with minimal data assumptions},
  author    = {Chen, Sitan and Chewi, Sinho and Li, Jerry and Li, Yuanzhi and Salim, Adil and Zhang, Anru R.},
  booktitle = {International Conference on Learning Representations (ICLR)},
  year      = {2023},
  url       = {https://openreview.net/forum?id=zyLVMgsZ0U_},
  eprint    = {2209.11215}
}

@book{ambrosio2005gradient,
  title={Gradient flows: in metric spaces and in the space of probability measures},
  author={Ambrosio, Luigi and Gigli, Nicola and Savar{\'e}, Giuseppe},
  year={2005},
  publisher={Springer}
}

@article{ho2020denoising,
  title={Denoising diffusion probabilistic models},
  author={Ho, Jonathan and Jain, Ajay and Abbeel, Pieter},
  journal={Advances in Neural Information Processing Systems},
  volume={33},
  pages={6840--6851},
  year={2020}
}

@article{song2019generative,
  title={Generative modeling by estimating gradients of the data distribution},
  author={Song, Yang and Ermon, Stefano},
  journal={Advances in Neural Information Processing Systems},
  volume={32},
  year={2019}
}

@book{oksendal2013sde,
  author    = {Oksendal, Bernt},
  title     = {Stochastic differential equations: An introduction with applications},
  year      = {2013},
  publisher = {Springer Science \& Business Media},
  address   = {Berlin, Heidelberg}
}

@article{tang2025score,
author = {Wenpin Tang and Hanyang Zhao},
title = {Score-based diffusion models via stochastic differential equations},
volume = {19},
journal = {Statistics Surveys},
publisher = {Amer. Statist. Assoc., the Bernoulli Soc., the Inst. Math. Statist., and the Statist. Soc. Canada},
pages = {28 -- 64},
year = {2025},
doi = {10.1214/25-SS152},
URL = {https://doi.org/10.1214/25-SS152}
}

@article{li2023generalization,
  title={On the generalization properties of diffusion models},
  author={Li, Puheng and Li, Zhong and Zhang, Huishuai and Bian, Jiang},
  journal={Advances in Neural Information Processing Systems},
  volume={36},
  pages={2097--2127},
  year={2023}
}

@inproceedings{lee2023convergence,
  title={Convergence of score-based generative modeling for general data distributions},
  author={Lee, Holden and Lu, Jianfeng and Tan, Yixin},
  booktitle={International Conference on Algorithmic Learning Theory},
  pages={946--985},
  year={2023},
  organization={PMLR}
}

@article{de2022convergence,
  title={Convergence of denoising diffusion models under the manifold hypothesis},
  author={De Bortoli, Valentin},
  journal={arXiv preprint arXiv:2208.05314},
  year={2022}
}

@article{lee2022convergence,
  title={Convergence for score-based generative modeling with polynomial complexity},
  author={Lee, Holden and Lu, Jianfeng and Tan, Yixin},
  journal={Advances in Neural Information Processing Systems},
  volume={35},
  pages={22870--22882},
  year={2022}
}

@article{song2021maximum,
  title={Maximum likelihood training of score-based diffusion models},
  author={Song, Yang and Durkan, Conor and Murray, Iain and Ermon, Stefano},
  journal={Advances in neural information processing systems},
  volume={34},
  pages={1415--1428},
  year={2021}
}

@article{anderson1982reverse,
  title={Reverse-time diffusion equation models},
  author={Anderson, Brian DO},
  journal={Stochastic Processes and their Applications},
  volume={12},
  number={3},
  pages={313--326},
  year={1982},
  publisher={Elsevier}
}

@ARTICLE{huang2025survey,
  author={Huang, Yi and Huang, Jiancheng and Liu, Yifan and Yan, Mingfu and Lv, Jiaxi and Liu, Jianzhuang and Xiong, Wei and Zhang, He and Cao, Liangliang and Chen, Shifeng},
  journal={IEEE Transactions on Pattern Analysis and Machine Intelligence}, 
  title={Diffusion Model-Based Image Editing: A Survey}, 
  year={2025},
  volume={47},
  number={6},
  pages={4409-4437},
  doi={10.1109/TPAMI.2025.3541625}}

@article{yang2023comprehensivesurvey,
author = {Yang, Ling and Zhang, Zhilong and Song, Yang and Hong, Shenda and Xu, Runsheng and Zhao, Yue and Zhang, Wentao and Cui, Bin and Yang, Ming-Hsuan},
title = {Diffusion Models: A Comprehensive Survey of Methods and Applications},
year = {2023},
issue_date = {April 2024},
publisher = {Association for Computing Machinery},
address = {New York, NY, USA},
volume = {56},
number = {4},
issn = {0360-0300},
url = {https://doi.org/10.1145/3626235},
doi = {10.1145/3626235},
journal = {ACM Comput. Surv.},
month = nov,
articleno = {105},
numpages = {39}
}

@article{ramesh2022hierarchical,
  title={Hierarchical text-conditional image generation with clip latents},
  author={Ramesh, Aditya and Dhariwal, Prafulla and Nichol, Alex and Chu, Casey and Chen, Mark},
  journal={arXiv preprint arXiv:2204.06125},
  volume={1},
  number={2},
  pages={3},
  year={2022}
}

@article{saharia2022photorealistic,
  title={Photorealistic text-to-image diffusion models with deep language understanding},
  author={Saharia, Chitwan and Chan, William and Saxena, Saurabh and Li, Lala and Whang, Jay and Denton, Emily L and Ghasemipour, Kamyar and Gontijo Lopes, Raphael and Karagol Ayan, Burcu and Salimans, Tim and others},
  journal={Advances in neural information processing systems},
  volume={35},
  pages={36479--36494},
  year={2022}
}

@inproceedings{rombach2022high,
  title={High-resolution image synthesis with latent diffusion models},
  author={Rombach, Robin and Blattmann, Andreas and Lorenz, Dominik and Esser, Patrick and Ommer, Bj{\"o}rn},
  booktitle={Proceedings of the IEEE/CVF conference on computer vision and pattern recognition},
  pages={10684--10695},
  year={2022}
}

@misc{openai2024sora,
  title = {Sora: Creating Video from Text},
  author = {OpenAI},
  year = {2024},
  howpublished = {\url{https://openai.com/sora}},
  note = {Accessed: 2024-11-01}
}

@inproceedings{song2021denoising,
  title={Denoising Diffusion Implicit Models},
  author={Song, Jiaming and Meng, Chenlin and Ermon, Stefano},
  booktitle={International Conference on Learning Representations},
  year={2021}
}

@inproceedings{chen2023improved,
  title={Improved analysis of score-based generative modeling: User-friendly bounds under minimal smoothness assumptions},
  author={Chen, Hongrui and Lee, Holden and Lu, Jianfeng},
  booktitle={International Conference on Machine Learning},
  pages={4735--4763},
  year={2023},
  organization={PMLR}
}

@article{tang2024contractive,
  title={Contractive diffusion probabilistic models},
  author={Tang, Wenpin and Zhao, Hanyang},
  journal={arXiv preprint arXiv:2401.13115},
  year={2024}
}

@article{block2020generative,
  title={Generative modeling with denoising auto-encoders and langevin sampling},
  author={Block, Adam and Mroueh, Youssef and Rakhlin, Alexander},
  journal={arXiv preprint arXiv:2002.00107},
  year={2020}
}

@article{chen2024overview,
  title={An overview of diffusion models: Applications, guided generation, statistical rates and optimization},
  author={Chen, Minshuo and Mei, Song and Fan, Jianqing and Wang, Mengdi},
  journal={arXiv preprint arXiv:2404.07771},
  year={2024}
}

@inproceedings{chen2023score,
  title={Score approximation, estimation and distribution recovery of diffusion models on low-dimensional data},
  author={Chen, Minshuo and Huang, Kaixuan and Zhao, Tuo and Wang, Mengdi},
  booktitle={International Conference on Machine Learning},
  pages={4672--4712},
  year={2023},
  organization={PMLR}
}

@inproceedings{nichol2021improved,
	title={Improved denoising diffusion probabilistic models},
	author={Nichol, Alexander Quinn and Dhariwal, Prafulla},
	booktitle={International conference on machine learning},
	pages={8162--8171},
	year={2021},
	organization={PMLR}
}

@book{tsybakov2008nonparametric,
  title={Introduction to nonparametric estimation},
  author={Tsybakov, Alexandre B},
  year={2008},
  series    = {Springer Series in Statistics},
  publisher = {Springer},
  address   = {New York, NY}
}

@article{paton2023investigating,
  title={Investigating Generalisation in Score-Based Generative Models},
  author={Paton, Peter Christofides and Aky{\i}ld{\i}z, {\"O}mer Deniz and Kantas, Nikolas},
  year={2023}
}

@article{pidstrigach2022score,
  title={Score-based generative models detect manifolds},
  author={Pidstrigach, Jakiw},
  journal={Advances in Neural Information Processing Systems},
  volume={35},
  pages={35852--35865},
  year={2022}
}

@article{bo2022large,
  title={Large sample mean-field stochastic optimization},
  author={Bo, Lijun and Capponi, Agostino and Liao, Huafu},
  journal={SIAM Journal on Control and Optimization},
  volume={60},
  number={4},
  pages={2538--2573},
  year={2022},
  publisher={SIAM}
}

\end{document}